\documentclass{amsart}

\RequirePackage{amsthm,amsmath,amsfonts,amssymb,comment}
\RequirePackage[numbers,sort&compress]{natbib}
\RequirePackage[colorlinks,citecolor=blue,urlcolor=blue]{hyperref}
\RequirePackage{graphicx}

\usepackage{amsthm}% http://ctan.org/pkg/amsthm
\usepackage{hyperref}% http://ctan.org/pkg/hyperref
\hypersetup
{colorlinks = true,
linkcolor = red, 
anchorcolor = red, 
citecolor = blue, 
filecolor = blue,
urlcolor = blue}
\usepackage{hypcap}
\usepackage{cleveref}% http://ctan.org/pkg/cleveref
\usepackage{lipsum}% http://ctan.org/pkg/lipsum
\usepackage{enumerate}
\usepackage{enumitem}
\usepackage{amsmath,amsfonts,amssymb,mathrsfs, amscd,amsthm,amsbsy,amsxtra,bbm,bm, epsf,calc,comment, xcolor}
\usepackage[toc,page]{appendix}
\usepackage{color}
\usepackage{datetime}
\usepackage{latexsym}
\usepackage[english]{babel}
\usepackage{graphicx}
\usepackage{epsfig}
\usepackage{dsfont}
\usepackage{tikz}
\usetikzlibrary{arrows.meta,patterns,positioning}

\usepackage{soul}
\usepackage{multirow}

\usepackage{algorithmic} %For computer algorithm code in LATEX
\usepackage{algorithm} %For algorithm box around the algorithmic
\usepackage{dsfont}

\theoremstyle{plain}

\numberwithin{equation}{section}\newtheorem{theorem}{Theorem}[section]
\newtheorem{lemma}[theorem]{Lemma}
\newtheorem{definition}[theorem]{Definition}
\newtheorem{proposition}[theorem]{Proposition}
\newtheorem{corollary}[theorem]{Corollary}
\newtheorem{example}[theorem]{Example}

\newtheorem{assumption}[theorem]{Assumption}

\theoremstyle{remark}
\newtheorem{remark}[theorem]{Remark}

\newcommand{\norm}[1]{\lVert #1 \rVert}

\definecolor{mygreen}{rgb}{0.1,0.75,0.2}

\DeclareMathOperator*{\argmin}{argmin}

\DeclareSymbolFont{bbold}{U}{bbold}{m}{n}
\DeclareSymbolFontAlphabet{\mathbbold}{bbold}

\newcommand{\spt}{\textup{spt}}

\newcommand{\X}{\mathcal{X}}

\newcommand{\R}{\mathbb{R}}

\usepackage{kotex}

\title[Statistical Inference for Adversarial Training]{Statistical Inference for Adversarial Training: Central Limit Theorems via Optimal Transport}

\begin{document}

\author{Jakwang Kim}
\address{School of Data Science, The Chinese University of Hongkong, Shenzhen, Dao Yuan Building, 2001 Longxiang Boulevard, Longgang District, Shenzhen, Guangdong, China.}
\email{jakwangkim@cuhk.edu.cn}
%%%%%%%%%%%%%%%%%%%%%%%
\author{Dohyun Kwon}
\address{School of Mathematics \& Computing (Computational Science \& Engineering), Yonsei University, Seodaemun-gu, Seoul 03722, Republic of Korea.}
\address{Center for AI and Natural Sciences, Korea Institute for Advanced Study, Dongdaemun-gu, Seoul 02455, Republic of Korea.}
\email{dohyunkwon@yonsei.ac.kr}
%%%%%%%%%%%%%%%%%%%%%%%

\date{\today}
\keywords{Central limit theorem, Adversarial training, Limit distribution, Optimal transport, Optimal partial transport, Multimarginal optimal transport, Lifted empirical process, Statistical properties, Convergence of classifier, Generalization error}
\thanks{
JK is supported by CUHK-SZ start-up UDF03004229. DK is partially supported by the National Research Foundation of Korea (NRF) grant funded by the Korea government (MSIT) (No. RS-2023-00252516, No. RS-2024-00408003,  No. RS-2026-25488663, and No. RS-2026-25613008), the POSCO Science Fellowship of POSCO TJ Park Foundation, and the Korea Institute for Advanced Study. This research was supported by the Yonsei University Research Fund of 2026-22-0251.
}

\begin{abstract}
The purpose of this paper is to rigorously quantify the statistical and learning-theoretic properties of adversarial training models for classification. Equivalently, we establish the statistical properties of empirical optimal partial transport. Precisely, first we provide two types of central limit theorems (CLT): CLT centered at the expected empirical value, and CLT centered at the population one with smoothing. These results are based on the uniqueness of optimal potential for various equivalent optimal transport formulations, and the empirical process theory argument. For the binary setting, we indeed prove the uniqueness of optimal potential by leveraging the connection between optimal partial transport and the derived multi-marginal optimal transport formula. As byproducts, we also obtain the stability of a saddle point of the adversarial training model, and the sample complexity and concentration probability of the generalization error.
\end{abstract}

\maketitle
\tableofcontents

\section{Introduction}
Over the last decade, deep learning-based machines, particularly parametrized by neural networks, have demonstrated remarkable performance across a wide range of learning tasks; see, for instance, \cite{lecun2015deep, krizhevsky2012imagenet} and the references therein. Despite their empirical success, trained neural networks can be highly sensitive to small perturbations of their inputs, as first observed in \cite{SzegedyZSBEGF13}. Such perturbations, known as \emph{adversarial attacks}, are commonly used to assess the worst-case behavior of learning models. Subsequent work has demonstrated that adversarial examples are not merely artificial phenomena but can arise in a variety of real-world settings, including sticker-based attacks on stop-sign recognition systems \cite{eykholt2018robust}, adversarial eyeglasses designed to evade face-recognition systems \cite{sharif2016accessorize}, and inaudible voice commands in the DolphinAttack \cite{10.1145/3133956.3134052}. Moreover, adversarial examples arise not only in image classification but also in other areas of artificial intelligence, including adversarial attacks on robotic vision \cite{jones2025adversarialattacksroboticvision} and advanced jailbreak strategies for large language models (LLMs) \cite{zou2023universal, 10.1145/3605764.3623985, jiang2024artprompt, pmlr-v267-sabbaghi25a}. We refer the reader to the surveys \cite{QIAN2022108889, zhao2024adversarialtrainingsurvey} for further references.

To mitigate this instability, \citet{goodfellow2014explaining} introduced the \emph{adversarial training model} to enhance the robustness of classifiers against adversarial perturbations. Since adversarial training is explicitly designed to counteract the instability of standard classification procedures, one naturally expects the resulting classifiers to exhibit improved robustness. This expectation is supported by extensive empirical evidence \cite{MadryMSTV18, li2025adversarial, BAN20243535}, as well as by a growing body of work devoted to its theoretical understanding \cite{bose2020adversarial, Meunier2021MixedNE, pmlr-v206-balcan23a, dobriban2023provable, javanmard2023adversarial, pmlr-v125-javanmard20a, hassani2024curse, li2025adversarial}. Nevertheless, despite substantial progress over the past several years, a quantitative and comprehensive understanding of the statistical behavior of adversarially trained classifiers remains incomplete.

The purpose of this paper is to address this gap by studying a representative model of adversarial training. More precisely, we investigate the stability of robust classifiers and optimal adversarial attacks, the generalization error and its associated sample complexity, and the limiting distributions of adversarial training risks. Our analysis is carried out within a \emph{distributionally robust optimization} framework, which provides a general methodology for worst-case optimization and has been extensively studied in finance, operations research, and statistical learning; see, for example, \cite{delage2010distributionally, mohajerin2018data, kuhn2019wasserstein, MR3959085, MR4015639, blanchet2023unifying, wang2024outlier, Kuhn_Shafiee_Wiesemann_2025}.

A particularly useful formulation for our purposes is the \emph{distributional-perturbing adversarial model}. In the binary classification setting, a series of works has established connections between optimal transport and distributional-perturbing adversarial models \cite{Bhagoji2019LowerBO, Pydi2021AdversarialRV, pydi2021the, dai2023characterizing, Trillos2020AdversarialCN, NEURIPS2023_81858558, JMLR:v25:23-0456, frank2025adversarialsurrogateriskbounds}. In a series of papers \cite{jakwang_MOT, jakwang_2024existence, trillos2024optimal}, these connections were extended to the multiclass setting. In particular, the distributional-perturbing adversarial model admits a geometric interpretation through a \emph{generalized barycenter problem}, extending the classical Wasserstein barycenter problem \cite{Carlier2010MatchingFT, Agueh_Carlier2011}, and can further be connected to a multimarginal optimal transport (MOT) problem. This connection yields an exact optimal transport formula for the adversarial risk, together with an approximation algorithm. More recently, this framework has been extended to adversarial models with nonlinear loss functions \cite{trillos2025lowerboundsadversarialrobustness}.

These optimal transport formulations provide the starting point for our statistical analysis. A substantial body of work has investigated the asymptotic behavior of empirical optimal transport costs. While the limiting distributions of empirical optimal transport costs on finite spaces have been extensively studied \cite{Sommerfeld_Munk2018, Klatt_etc2022, asymptotic_LP2023, CLT_semidiscrete_OT2024}, the corresponding theory in continuous settings remains considerably less developed. One of the main difficulties arises from the size and complexity of the dual function class. At a high level, classical empirical process theory is generally not directly applicable to the dual class of optimal transport, since its metric entropy typically grows too rapidly except in low-dimensional settings. This difficulty stems from the limited regularity of optimal transport potentials under general assumptions.

Several approaches have been proposed to overcome this obstacle. One approach is to center the empirical transport cost by its expectation, for which central limit theorems can be established under suitable conditions \cite{CLT_OT2019, CLT_general_transport2024, Hundrieser_etc2024}. Other approaches modify the transport problem in a manner that reduces the complexity of the associated function class. These include entropic regularization \cite{CLT_EOT_finite, CLT_general_cost_OT, gonzalezsanz2024weaklimitsentropyregularized, gonzalezsanz2023weaklimitsempiricalentropic, lower_complexity_EOT, limit_distribution_EOT, samplecomplexity_EOT} and sliced Wasserstein distances \cite{Okano_Imaizumi2024, Goldfeld_Kato_Rioux_Sadhu2024, rodriguezvitores2025improvedcentrallimittheorem}. In particular, recent work on smoothed Wasserstein distances \cite{limit_pWasserstein2024, Goldfeld_Kato_Rioux_Sadhu2024} provides another route to limiting distributions by combining regularization of the underlying function class with functional delta methods.

A key ingredient in these asymptotic theories is the uniqueness of the optimal potential up to additive constants. Such uniqueness is essential for identifying the derivative of the transport functional and, consequently, the variance of its limiting distribution; see, for example, \cite{CLT_OT2019, Goldfeld_Kato_Rioux_Sadhu2024, CLT_general_transport2024}. The uniqueness of optimal potentials depends crucially on the properties of both the cost function and the underlying measures, including the regularity of the cost and density functions, as well as topological properties of the supports. Sufficient conditions ensuring uniqueness up to additive constants have been studied in \cite{CLT_OT2019, CLT_general_transport2024}; these include \Cref{assumption: strict convexity of cost function}, absolute continuity of the underlying measures, and connectedness of their supports with negligible boundaries.

The adversarial training problem considered here, however, does not fall directly within the scope of these existing results. There are two fundamental differences from standard optimal transport. First, the class-dependent measures $\mu_i$'s generally have different total masses, whereas standard optimal transport is formulated between measures of equal mass. Second, the multiclass adversarial training problem involves multiple marginals and therefore requires a genuinely multimarginal analysis. Consequently, existing CLTs for standard optimal transport cannot be applied directly to obtain limiting distributions for adversarial training risks.

There is an additional difficulty concerning uniqueness.  In \Cref{sec: optimal partial transport} we show that the distributional adversarial model \eqref{def: distributional model} is equivalent to the \emph{optimal partial transport} problem \eqref{def: partial transport}. Although uniqueness of the optimal potential up to additive constants is essential in existing limit-distribution results such as \cite{CLT_OT2019, Goldfeld_Kato_Rioux_Sadhu2024, CLT_general_transport2024}, the required uniqueness in optimal partial transport does not follow directly from the existing optimal transport literature. In particular, apart from the quadratic-cost setting \cite{CM2010Annalsmath}, general uniqueness results for optimal potentials in optimal partial transport are not available to the best of the authors' knowledge. Establishing the appropriate uniqueness result for the optimal potential is therefore an essential step in our analysis.

The main contribution of this work is to establish central limit theorems (CLTs) for the adversarial training risk, or equivalently, for the associated optimal partial transport problem. More precisely, we prove two types of CLTs: (1) a CLT centered at the expectation of the empirical adversarial risk, and (2) a CLT for the smoothed empirical adversarial risk centered at its population counterpart. Our analysis combines the optimal transport structure of adversarial training with tools from asymptotic statistics and empirical process theory.

The first CLT is obtained through an Efron--Stein argument. In the general multiclass setting, or equivalently the associated multimarginal optimal transport setting, we first establish stability and pointwise convergence of optimal potentials. A key observation is an unexpected Lipschitz regularity of these potentials, which follows from their natural boundedness together with the regularity of the cost function. Under the uniqueness of the optimal potential, these stability properties allow us to derive a CLT centered at the expected empirical adversarial risk.

A distinctive feature of the limiting distribution is the presence of nonvanishing cross-covariance terms,
\[
    -\mu_i(g_i)\mu_j(g_j),
\]
which arise from the dependence structure induced by the measure $\mu$ defined in \eqref{eq:mui0}. These terms reflect the intrinsic dependence among samples associated with different classes, a feature absent from the standard empirical optimal transport framework, where empirical marginals are typically sampled independently. Consequently, existing CLTs for empirical optimal transport do not directly apply to our setting.

We next establish a population-centered CLT through smoothing. Smoothing enables the use of empirical process theory, but the standard theory is not directly applicable because the associated potentials are vector-valued. We overcome this difficulty by lifting a finite-dimensional vector-valued potential to a real-valued function on a natural extended space. Together with the almost sure $L^2$-convergence of optimal potentials obtained from their stability, this lifting allows us to establish weak convergence of the corresponding empirical process to a Gaussian process indexed by the class of smoothed potentials, again exhibiting nonvanishing cross-covariance terms. This Gaussian limit then yields the CLT for the smoothed empirical adversarial risk centered at its population counterpart.

To establish the uniqueness required for both CLTs, we prove a new uniqueness result for optimal potentials in the binary setting. The proof exploits the equivalence among the adversarial training problem, its multimarginal optimal transport formulation and corresponding dual problem, and optimal partial transport. By moving between the multimarginal and partial transport formulations, we show that, when the input measures are absolutely continuous and have connected supports, the optimal potential is genuinely unique in the non-degenerate case (genuine optimal partial transport case) and unique up to additive constants in the degenerate case (standard optimal transport case). To the best of our knowledge, this is the first uniqueness result for optimal potentials in optimal partial transport beyond the quadratic-cost setting.

Our analysis also yields several statistical and learning-theoretic consequences. In particular, we establish the stability of saddle points consisting of robust classifiers and optimal adversarial attacks, including convergence of their empirical counterparts to population-level solutions. We further prove uniqueness of the robust classifier in the binary setting and uniqueness of the optimal adversarial attack in the general multiclass setting. Finally, we derive quantitative bounds on the generalization error of the adversarial training model, which in turn yield sample-complexity guarantees.

Taken together, these results provide a statistical theory for adversarial training that connects the geometry of optimal and partial transport with asymptotic statistics. The optimal transport formulations provide the stability and uniqueness properties needed to identify the limiting distributions, while the Efron--Stein argument and empirical process theory yield the two complementary CLTs.

\subsection{Our contributions}

We now summarize the main contributions of this paper.

\begin{itemize}
\item \textbf{Central limit theorem with empirical centering (\Cref{thm: empirical centering}, \Cref{Cor: empirical centering}).}
Using the equivalence between the adversarial training problem and its multimarginal optimal transport formulation, we establish a central limit theorem for the empirical adversarial training risk centered by its expectation. A key ingredient is the Efron-Stein inequality with a suitable lifting argument.

\item \textbf{Central limit theorem for the smoothed adversarial risk (\Cref{thm: smooth limit}, \Cref{Cor: smoothed centering}).}
We establish a central limit theorem for the smoothed empirical adversarial training risk centered by the corresponding population risk. The proof combines empirical process theory with a limiting Gaussian process having the appropriate covariance structure and the stability of optimal potential. This provides a population-centered asymptotic distribution complementary to the first CLT.

\item \textbf{Uniqueness (\Cref{thm: uniqueness}).}
Not only do we prove the central limit theorem assuming the uniqueness of the optimal potential, but we also show it (up to additive constants for the degenerate case) for the binary setting under general conditions. Its proof exploits the equivalence between the adversarial training problem and optimal partial transport and, in particular, addresses a uniqueness issue that is not covered by existing results in the optimal transport community.

Furthermore, we also prove the uniqueness of an optimal adversarial attack. Note that this result holds for the general multiclass case.

\item \textbf{Convergence of robust classifiers and adversarial attacks (\Cref{thm: convergence}).}
As the first byproduct of the analysis regarding the central limit theorems, we establish the stability of robust classifiers and optimal adversarial attacks with respect to perturbations of the input data. As a consequence, we prove the convergence of empirical robust classifiers and empirical optimal adversarial attacks to their population-level counterparts.

\item \textbf{Generalization error and sample complexity (\Cref{thm: generalization error}).}
As the second byproduct, we derive bounds on the generalization error of the adversarial training model. These bounds, in turn, combined with the recent statistical optimal transport results, yield quantitative sample-complexity guarantees for adversarial training.
\end{itemize}

\subsection{Problem setting}
\label{sec:proset}
Let $\mathcal{X}:=\mathbb{R}^p$ and $\mathcal{Y}:=\{1, \dots, K\}$ be the domain of feature vectors and the set of classes, with $K \geq 2$, respectively. Also, let $\mu \in \mathcal{P}(\mathcal{X} \times \mathcal{Y})$ be the ground-truth distribution where $\mathcal{P}(\mathcal{S})$ denotes the set of probability measures over $\mathcal{S}$ equipped with the weak topology.  More precisely, for any Borel measurable set $A \subseteq \mathcal{X}$ and $i \in \mathcal{Y}$,
\begin{align}\label{eq:mui0}
    \mu(X \in A, Y=k) = \int_{A}  d\mu_i(x).
\end{align}
Each $\mu_i$ can be regarded as an (unnormalized) conditional probability of $X$ given $Y=k$. We denote the marginals of $\mu$ by
\begin{align}
\label{eq:marg}
    d\mu_X := \sum_{i\in\mathcal Y} d\mu_i,
    \quad
    \mu_Y(i) := |\mu_i|
    = \int_{\mathbb R^p} d\mu_i(x),
    \quad i\in\mathcal Y.
\end{align}
We assume that $K$ is fixed and $\mu_Y(k)> 0$ for all $i \in \mathcal{Y}$.

Let $\mu^n$ and $\mu_i^n$ denote empirical distributions of $n$ i.i.d. samples:
\begin{align}
\label{eq:empdis}
    \mu^n = \frac{1}{n} \sum_{k=1}^n \delta_{(X_k, Y_k)}, \quad \mu_i^n := \frac{1}{n} \sum_{k=1}^n \delta_{(X_k, Y_k)} \delta_{Y_k=i}.
\end{align}
$\mu^n$ weakly converges to $\mu$ (hence so as each $\mu_i^n$ to $\mu_i$) as $n \to \infty$.

Under the assumption that data points are distributed according to the ground-truth distribution $\mu$, a classification problem can be mathematically modeled as
\begin{equation}\label{eq:risk0}
    \inf_{\theta \in \Theta} R(f_\theta, \mu), \quad R(f_\theta, \mu):=\mathbb{E}_{(X,Y) \sim \mu} [\ell (f_\theta(X), Y)) ]
\end{equation} 
for a given loss function $\ell: \mathcal{Y} \times \mathcal{Y} \rightarrow \mathbb{R}$ where $\theta$ is a parameter. In practice, people extend a notion of classifier to a probabilistic classifying rule over $\mathcal{Y}$; i.e., $f_\theta$ is a Borel measurable map from $\X$ into $\Delta_{\mathcal{Y}}$, where
\[
    \Delta_{\mathcal{Y}} := \left\{ (u_i)_{i \in \mathcal{Y}} : 0 \leq u_i \leq 1, \, \sum_{i \in \mathcal{Y}} u_i = 1 \right\}.
\]

We are interested in the adversarial training model, whose formal description is as follows. Fix a positive constant $\varepsilon>0$, called the \emph{adversarial budget}. For each feature vector $x$, the adversary chooses $x'$ around $x$ within a distance $\varepsilon$ such that $x'$ maximizes a loss given a classifier $f_\theta$. Mathematically, it is written as the following min-max game:
\begin{equation}\label{def: adversarial attack}
    \inf_{\theta \in \Theta} \mathbb{E}_{(X,Y) \sim \mu}  \left[ \sup_{X': d(X,X') \leq \varepsilon}\ell (f_\theta(X'), Y)) \right].
\end{equation}
Although its formulation is clear and its practical implications are rich, \eqref{def: adversarial attack} is not crystal clear for the complexity of neural networks, non-convexity, and non-linearity, which hampers understanding of the effectiveness of adversarial training.

For these reasons, people proposed a toy model for \eqref{def: adversarial attack} in terms of two perspectives, which make the problem more tractable. First, instead of a parametric family, a solution space is the possibly largest one, the set of all Borel measurable probabilistic classifiers, denoted by $\mathcal{F}$,
\begin{equation}\label{def: solution space}
    \mathcal{F}:= \left\{ f=(f_i)_{i \in \mathcal{Y}}: \mathcal{X} \to \Delta_{\mathcal{Y}} : f \text{ is Borel measurable} \right\}.
\end{equation}
In the statistical learning community, the setting \eqref{def: solution space} is also called \emph{agnostic learning}, which refers to model-freeness \cite{agnostic_learning}. One can think of this as the \emph{convexification} of an original solution space. The second is to use an (extended) linear loss function
\begin{align*}
\label{eq:01loss}
    \ell(u, i) := 1 - u_i, \hbox{ for } (u,i) \in \Delta_{\mathcal{Y}} \times \mathcal{Y}.
\end{align*}
It is also referred as $0$-$1$ loss function since if $f(x) \in \mathcal{Y}$, then $\ell(f(x), i) = \delta_{\{f(x)=i\}}$. With these choices, \eqref{def: adversarial attack} is written as
\begin{equation}\label{eq:linear adversarial risk}
    \inf_{f \in \mathcal{F}}  R(T_\varepsilon f, \mu), \quad T_\varepsilon f_i(x) := \inf_{x' : d(x,x') \leq \varepsilon} f_i(x'),
\end{equation}
which has been intensively studied in machine learning community \cite{carlini2017magnet, athalye2018synthesizing, bose2018adversarial, Nakkiran2019AdversarialRM, Bhagoji2019LowerBO, pinot2020randomization, Meunier2021MixedNE, Pydi2021AdversarialRV, pydi2021the, awasthi2021existence, awasthi2021extended, frank2023the, frank2023existence, NEURIPS2023_81858558, jakwang_2024existence, frank2026notion}. However, this framework lacks mathematical rigor; see \cite{pydi2021the, jakwang_2024existence}

In this paper, we study the \emph{distributional-perturbing adversarial model}, which is the adversarial training model including \eqref{eq:linear adversarial risk} motivated by the distributional robust optimization framework in finance, and is defined as follows:
\begin{equation}\label{def: distributional model}
    \inf_{f\in \mathcal{F}} \sup_{\nu \in \mathcal{P}(\mathcal{X} \times \mathcal{Y})}\mathscr{R}(f,\nu; \mu), \quad  \mathscr{R}(f,\nu; \mu):= R(f,\nu) - C(\mu, \nu).
\end{equation}
where $R(\cdot, \cdot)$ is given in \eqref{eq:risk0}. Here, $C(\mu, \nu)$ is the \emph{transport cost} of an adversarial attack $\nu \in \mathcal{P}(\mathcal{X} \times \mathcal{Y})$, which is defined as
\begin{equation}\label{eq: general transport cost}
    C(\mu, \nu) = \inf_{\pi \in \Pi(\mu, \nu)} \int_{(\mathcal{X} \times \mathcal{Y})^2} \bar{c}( x,y, x',y' ) d\pi (x,y, x',y')
\end{equation}
where $\Pi(\mu, \nu)$ denotes the set of couplings whose marginals are $\nu$ and $\mu$.

\begin{remark}
For any finite positive measures $\alpha$ and $\beta$, $\Pi(\alpha, \beta)$ is defined in the same way. If $\alpha$ and $\beta$ have different total masses, however, $\Pi(\alpha, \beta)$ is empty, and the transport cost is defined to be infinite. 
\end{remark}

For $\mu \in \mathcal{P}(\mathcal{X} \times \mathcal{Y})$, we say that $f$ is a robust classifier for \eqref{def: distributional model} with input measure $\mu$ if $\mathscr{R}(f; \mu) = \mathscr{R}( \mu)$. Similarly, we say that $\mu$ is an optimal adversarial attack with input measure $\mu$ if $\mathscr{R}(\nu; \mu) = \mathscr{R}( \mu)$. Note that there are always a robust classifier and an optimal adversarial attack under a milder condition on $c$ (\Cref{assumption: cost function}): see \cite[Theorem 2.5]{jakwang_2024existence}.

With a slight abuse of notation, which should cause no confusion from the context, we write
\begin{align*}
\begin{aligned}
    \mathscr{R}(f;\mu)&:= \sup_{\nu \in \mathcal{P}(\mathcal{X} \times \mathcal{Y})}\mathscr{R}(f,\nu; \mu), \quad \mathscr{R}(\nu ; \mu) := \inf_{f\in \mathcal{F}} \mathscr{R}(f,\nu; \mu) \quad \hbox{and}\\
    \mathscr{R}(\mu)&:= \inf_{f\in \mathcal{F}} \sup_{\nu \in \mathcal{P}(\mathcal{X} \times \mathcal{Y})}\mathscr{R}(f,\nu; \mu).
\end{aligned}
\label{def: rdist}
\end{align*}

We assume the following structure on the transport cost:
\begin{equation}\label{def: transport cost}
    C(\mu, \nu) := \sum_{i \in \mathcal{Y}} C(\mu_i, \nu_i), \quad C(\mu_i, \nu_i) = \inf_{\pi_i \in \Pi(\mu_i, \nu_i)} \int_{\mathbb{R}^p \times \mathbb{R}^p} c(x, x') d\pi_i(x, x').
\end{equation}
Notice that \eqref{def: transport cost} is deduced from \eqref{eq: general transport cost} by choosing
\begin{equation}\label{eq: general cost}
    \bar{c}( x,y, x',y' ) = \begin{cases}
        c(x,x') &\text{ if $y=y'$,}\\
        \infty & \text{ otherwise.}
    \end{cases}
\end{equation}
The interpretation of \eqref{def: transport cost} is that any attack $\nu$ must satisfy $|\nu_i|=|\mu_i|$ for all $i \in \mathcal{Y}$, that is, \emph{the adversary is not allowed to attack the class distribution}.

%\newpage

\subsection{Remarks on cost functions}

The most popular cost function chosen in \eqref{eq: general cost} is the $0$-$\infty$ cost function parameterized by the adversarial budget $\varepsilon > 0$:
\begin{equation}
\label{def:CostEpsilon}
    c(x, x')= c_\varepsilon(x, x') := \begin{cases}
    \infty & \text{if } d(x, x') >\varepsilon, \\
    0 & \text{if } d(x',  x) \leq \varepsilon.
    \end{cases}
\end{equation}
For this case, \eqref{def: distributional model} and \eqref{eq:linear adversarial risk} are equivalent in the sense that both have the same minimax value, and share a robust classifier \cite{pydi2021the, jakwang_2024existence}.

While this formulation represents the popular adversarial perturbation model, it is shown that under general conditions (\Cref{assumption: cost function}) there are a robust classifier and an optimal adversarial attack, $0$-$\infty$ type of cost functions are too singular and lead to technical difficulties in order to establish desirable statistical and learning-theoretic properties: see \Cref{example: counter example}. Therefore, we consider cost functions that provide a regularized counterpart (\Cref{assumption: strict convexity of cost function}), which includes a distance-based cost function, $\left( \frac{d(x,x')}{\varepsilon} \right)^{p}$ for $p > 1$, that are frequently used in both theory and practice.

%\subsection{Example}
\Cref{example: counter example} demonstrates that it is not possible to develop the desired statistical properties of the adversarial models \eqref{def: distributional model} without additional assumptions. The obstructions arise for the following reasons: (1) the cost function \eqref{def:CostEpsilon} is too singular; and (2) the weak convergence of the empirical distribution to the population one is too weak. Note that the adversarial risk with the cost function \eqref{def:CostEpsilon} is not continuous in general.

\begin{figure}[t]
\centering
\resizebox{0.8\textwidth}{!}{%
\begin{tikzpicture}[
    scale=2.05,
    point/.style={circle,fill=black,inner sep=1.4pt},
    attack/.style={-{Latex[length=2mm]},thick,densely dashed},
    every node/.style={font=\small}
]

% ============================================================
% Left panel: limiting configuration
% ============================================================
\begin{scope}[xshift=-2.15cm]

    % Filled l1 balls
    \fill[blue!12]
        (0,0) -- (1,1) -- (0,2) -- (-1,1) -- cycle;
    \fill[orange!14]
        (0,0) -- (1,-1) -- (2,0) -- (1,1) -- cycle;
    \fill[green!13]
        (-2,0) -- (-1,-1) -- (0,0) -- (-1,1) -- cycle;

    % Boundaries of l1 balls
    \draw[blue!70!black,thick]
        (0,0) -- (1,1) -- (0,2) -- (-1,1) -- cycle;
    \draw[orange!85!black,thick]
        (0,0) -- (1,-1) -- (2,0) -- (1,1) -- cycle;
    \draw[green!55!black,thick]
        (-2,0) -- (-1,-1) -- (0,0) -- (-1,1) -- cycle;

    % Coordinate axes
    \draw[->,gray!70] (-2.25,0) -- (2.25,0)
        node[right] {$x_1$};
    \draw[->,gray!70] (0,-1.25) -- (0,2.25)
        node[above] {$x_2$};

    % Pairwise intersections
    \draw[very thick,purple]
        (0,0) -- (1,1)
        node[midway,below right=-1pt] {$B_{12}$};
    \draw[very thick,red!75!black]
        (0,0) -- (-1,1)
        node[midway,below left=-1pt] {$B_{13}$};

    % Support points
    \node[point,label=above right:{$a_1=(0,1)$}] at (0,1) {};
    \node[point,label=below right:{$a_2=(1,0)$}] at (1,0) {};
    \node[point,label=below left:{$a_3=(-1,0)$}] at (-1,0) {};

    % Common intersection
    \node[circle,fill=red,inner sep=1.8pt] at (0,0) {};
    \node[below=2pt] at (0,0)
        {$B_{123}=B_{23}=\{(0,0)\}$};

    % Adversarial transportation
    \draw[attack,blue!70!black] (0,1) -- (0.06,0.10);
    \draw[attack,orange!85!black] (1,0) -- (0.10,0.02);
    \draw[attack,green!55!black] (-1,0) -- (-0.10,0.02);

    % Ball labels
    \node[blue!70!black] at (0,1.72)
        {$\overline B_1(a_1)$};
    \node[orange!85!black] at (1.48,-0.35)
        {$\overline B_1(a_2)$};
    \node[green!55!black] at (-1.48,-0.35)
        {$\overline B_1(a_3)$};

    \node[font=\bfseries] at (0,-1.55)
        {(a) Limiting measure $\mu$};

\end{scope}

% ============================================================
% Right panel: perturbed configuration
% ============================================================
\begin{scope}[xshift=2.65cm]

    % Choose a visible representative perturbation eta=1/n
    \def\eta{0.35}

    % Filled l1 balls
    \fill[blue!12]
        (0,0) -- (1,1) -- (0,2) -- (-1,1) -- cycle;
    \fill[orange!14]
        (0,0) -- (1,-1) -- (2,0) -- (1,1) -- cycle;
    \fill[green!13]
        ({-2-\eta},0) --
        ({-1-\eta},-1) --
        ({-\eta},0) --
        ({-1-\eta},1) -- cycle;

    % Boundaries
    \draw[blue!70!black,thick]
        (0,0) -- (1,1) -- (0,2) -- (-1,1) -- cycle;
    \draw[orange!85!black,thick]
        (0,0) -- (1,-1) -- (2,0) -- (1,1) -- cycle;
    \draw[green!55!black,thick]
        ({-2-\eta},0) --
        ({-1-\eta},-1) --
        ({-\eta},0) --
        ({-1-\eta},1) -- cycle;

    % Coordinate axes
    \draw[->,gray!70] (-2.6,0) -- (2.25,0)
        node[right] {$x_1$};
    \draw[->,gray!70] (0,-1.25) -- (0,2.25)
        node[above] {$x_2$};

    % Only remaining overlap
    \draw[very thick,purple]
        (0,0) -- (1,1)
        node[midway,below right=-1pt] {$B_{12}$};

    % Support points
    \node[point,label=above right:{$a_1$}] at (0,1) {};
    \node[point,label=below right:{$a_2$}] at (1,0) {};
    \node[
        point,
        label=below left:
        {$\tilde{a}^n=(-1-\frac1n,0)$}
    ] at ({-1-\eta},0) {};

    % Origin
    \node[circle,fill=red,inner sep=1.8pt] at (0,0) {};
    \node[below right=1pt] at (0,0) {$(0,0)$};

    % Gap indicating isolation of class 3
    \draw[<->,thick,densely dotted]
        ({-\eta+0.02},-0.18) -- (-0.02,-0.18);
    \node[below] at ({-\eta/2},-0.18)
        {$\frac1n$};

    % Possible common attack for classes 1 and 2
    \coordinate (q) at (0.55,0.55);
    \node[circle,fill=purple,inner sep=1.5pt] at (q) {};
    \draw[attack,blue!70!black] (0,1) -- (q);
    \draw[attack,orange!85!black] (1,0) -- (q);

    % Class 3 remains in its own ball
    \draw[attack,green!55!black]
        ({-1-\eta},0) -- ({-1.55-\eta},0.25);

    % Ball labels
    \node[blue!70!black] at (0,1.72)
        {$\overline B_1(a_1)$};
    \node[orange!85!black] at (1.48,-0.35)
        {$\overline B_1(a_2)$};
    \node[green!55!black] at ({-1.55-\eta},0.65)
        {$\overline B_1(\tilde{a}^n_3)$};

    % Empty intersections
    \node[align=center] at (-1.25,-1.28) {
        %$B_{13}^n=B_{23}^n=B_{123}^n=\text{Var}nothing$
    };

    \node[font=\bfseries] at (0,-1.55)
        {(b) Perturbed measure $\mu^n$};

\end{scope}

\end{tikzpicture}%
}
\caption{
Geometry of the closed $\ell^1$-balls of radius $\varepsilon=1$.
In the limiting configuration, all three classes can be transported to
$(0,0)$. After replacing $a_3$ by
$\tilde{a}^n_3=(-1-\frac1n,0)$, the third ball is separated from the
other two, while classes $1$ and $2$ still interact along $B_{12}$.
}
\label{fig:counterexample-geometry}
\end{figure}

\begin{example}\label{example: counter example}
\normalfont
Let $(\mathbb{R}^p,d)= (\mathbb{R}^2, \| \cdot \|_1)$, $K=3$ and $\varepsilon=1$. Assume that $a_1 = (0,1), a_2=(1,0), a_3=(-1,0)$ and $\mu_i=\frac{1}{3} \delta_{a_i}$ for $i=1,2,3$. 

For the problem \eqref{def: distributional model} with the cost function \eqref{def:CostEpsilon}, observe that the geometry of feasible adversarial attacks is described as
\begin{align*}
    B_{12}:=\overline{B}_1(a_1) \cap \overline{B}_1(a_2) &= \{(x_1, x_2) : 0 \leq x_1=x_2 \leq 1 \},\\
    B_{13}:=\overline{B}_1(a_1) \cap \overline{B}_1(a_3) &= \{(x_1, x_2) : 0 \leq -x_1=x_2 \leq 1 \},\\
    B_{23}:=\overline{B}_1(a_2) \cap \overline{B}_1(a_3) &= \{(0,0) \}, \quad B_{123}:=\bigcap_{i=1}^3 \overline{B}_1(a_i) = \{(0,0)\}.
\end{align*}
From these observations, a unique optimal adversarial attack $\nu$ in \eqref{def: distributional model} is given by choosing $\nu_i = \frac{1}{3} \delta_{(0,0)}$ for all $i=1,2,3$, thereby collapsing all three classes to the same point and making them indistinguishable to the classifier.

On the other hand, the robust classifier against $\nu$ is given as follows: for each $i=1,2,3$, letting $j$ and $k$ are remaining indices,
\[
    f_i(x_1, x_2) :=
\begin{cases}
    1 &\text{ if $(x_1, x_2) \in \overline{B}_1(a_i) \setminus (B_{ij} \cup B_{ik})$},\\
    \frac{1}{2} &\text{ if $(x_1, x_2) \in (B_{ij} \cup B_{ik}) \setminus B_{123}$},\\
    \frac{1}{3} &\text{ if $(x_1, x_2) \in B_{123}$}.
\end{cases}
\]
$f_i(x_1, x_2)$ is the reciprocal of the number of classes that may appear by the adversary at $(x_1, x_2)$. %Similarly, $f_2$ and $f_3$ are given as
%\[
%    f_2(x_1, x_2) =
%\begin{cases}
%    1 &\text{ if $(x_1, x_2) \in B_1(a_2)$},\\
%    \frac{1}{2} &\text{ if $(x_1, x_2) \in B_{12}$},\\
%    \frac{1}{3} &\text{ if $(x_1, x_2) \in B_{123}$},
%\end{cases}
%\quad 
%f_3(x_1, x_2) =
%\begin{cases}
%    1 &\text{ if $(x_1, x_2) \in B_1(a_3)$},\\
%    \frac{1}{2} &\text{ if $(x_1, x_2) \in B_{13}$},\\
%    \frac{1}{3} &\text{ if $(x_1, x_2) \in B_{123}$}.
%\end{cases}
%\]
Note that outside of $\bigcup \overline{B}_1(a_i)$, $f$ can be defined arbitrarily since it is irrelevant to the adversarial risk. The optimal adversarial risk achieved by a pair $(f,\nu)$ is $\frac{2}{3}$; at $(0,0)$, the probability that $f$ predicts a correct class is probability $\frac{1}{3}$.

Now, consider $\tilde{a}^n_3 =(-1 - \frac{1}{n}, 0)$, $\mu_3^n = \frac{1}{3} \delta_{\tilde{a}^n_3}$ and $\mu^n = \mu_1 + \mu_2 + \mu_3^n$. For each $n \in \mathbb{N}$, let
\begin{align*}
    B^n_{12} &= B_{12}, \quad B^n_{13}:=\overline{B}_1(a_1) \cap \overline{B}_1(\tilde{a}^n_3) = \emptyset,\\
    B^n_{23}&:=\overline{B}_1(a_2) \cap \overline{B}_1(\tilde{a}^n_3) = \emptyset, \quad B^n_{123}:=%\bigcap_{i=1}^3 
    \overline{B}_1(a_1) \cap \overline{B}_1(a_2) \cap \overline{B}_1(\tilde{a}^n_3) = \emptyset.
\end{align*}
For each $n$, one can characterize the set of optimal adversarial attacks as follows: $\nu^n = (\nu_1^n, \nu_2^n, \nu_3^n)$ is an optimal adversarial attack if
\[
    \nu_1^n=\nu_2^n \text{ such that $\spt(\nu_1^n)=\spt(\nu_2^n) \subseteq B_{12}$ and any } \nu_3^n \text{ with $\spt(\nu_3^n) \subseteq \overline{B}_1(\tilde{a}^n_3)$}.
\]
In words, the adversary can only perturb classes $1$ and $2$ by moving mass identically to a subset of $B_{12}$ equally likely, while the adversary cannot attack class $3$ since class $3$ is isolated in the sense that the support of $\mu_3^n$ is far from the supports of other classes.

There are two types of robust classifiers given $\nu^n$. The first type is $f_1=f_2=\frac{1}{2}$ on $B_{12}$. On any subset of $B_{12}$, the adversary is able to give some mass to class $1$ and class $2$ equally, which forces the learner to guess the two classes equally. On $\overline{B}_1(a_1) \setminus B_{12}$ and $\overline{B}_1(a_2) \setminus B_{12}$, $f_1=1$ and $f_2=1$, respectively. On $\overline{B}_1(\tilde{a}^n_3)$, whatever the adversary does, the learner always knows that there is only class $3$, hence $f_3=1$ is optimal.

The second type is either $f_1=1, f_2=0$ or $f_1=0, f_2=1$ on $B_{12}$. It is shown in \cite{bungert2023geometry} that there is always an optimal set classifier for the binary setting. Since class $3$ does not interact with other classes, the situation reduces to binary classification restricted to $\overline{B}_1(a_1) \cup \overline{B}_1(a_2)$. Then, $f_1=1$ on $\overline{B}_1(a_1)$ and $f_2=1$ on $\overline{B}_1(a_2) \setminus B_{12}$. $f_3$ is defined the same as in the previous case.

It turns out that for any $n \in \mathbb{N}$, the optimal adversarial risk is $\frac{1}{3}$, which shows that the adversarial risk is generally not continuous. In fact, it is upper semicontinuous with respect to the input measure with this singular cost function; see \Cref{cor: continuity of attack} and \Cref{rmk: upper semicontinuity}.

Regarding the convergence of classifiers, the issue arises from their behaviors at the point $(0,0)$. Since the support of class $3$ is further than distance $2$ from the supports of classes $1$ and $2$, especially $d(\spt(\mu^n_3), (0,0)) > 1$ for all $n$, any robust classifier should assign $f_3=0$ at $(0,0)$; otherwise, it achieves the risk strictly larger than $\frac{1}{2}$ at $(0,0)$. Any robust classifier for $\mu$, however, should assign $(f_1, f_2, f_3)=(\frac{1}{3}, \frac{1}{3}, \frac{1}{3})$ at $(0,0)$; otherwise, the adversary can always deviate his/her action to increase the risk. Therefore, any sequence of robust classifiers $\{f^n\}$ fails to converge to $f$ at $(0,0)$, which is an obstruction to convergence.
\end{example}

\subsection{Notation}
\begin{itemize}
    \item Given a measure $\xi$ and a function $f$, we sometimes use
    \[
        \xi(f):= \int f d \xi.
    \]
    \item For $g=(g_i)_{i \in \mathcal{Y}} : \mathbb{R}^p \to \mathbb{R}^K$, let
\begin{equation}\label{eq: extension of g_i}
    \bm{g}(x,y) = \sum_{i \in \mathcal{Y}} g_i(x) \mathds{1}_{y=i},
\end{equation} 
which is a function on $\mathcal{X} \times \mathcal{Y}$.
    \item $\mathcal{N}(a, b^2)$ denotes the Gaussian distribution with the mean $a$ and the variance $b^2$.
\end{itemize}

\section{Main results}
\label{sec:main}

The lessons from \Cref{example: counter example} illustrate that additional regularity assumptions are indispensable for establishing the fundamental statistical properties of the model. At an intuitive level, the instability of \eqref{def: distributional model} can be traced to the singularity of the cost function $c$ in \eqref{def:CostEpsilon}.

There is an elementary choice of a good approximation of a singular cost $c$ in \eqref{def:CostEpsilon} that possesses sufficient regularity as follows. Given a fixed adversarial budget $\varepsilon > 0$, define
\[
    c_{\varepsilon, p}(x,x'):=\left( \frac{d(x,x')}{ \varepsilon} \right)^p
\]
for some $p > 1$. With such a cost function, the corresponding transport cost is the $p$-Wasserstein distance with scaling $\varepsilon^{-p}$. In fact, this choice is a reasonable approximation, which is justified by 
\[
    \lim_{p \to \infty } c_{\varepsilon, p}(x,x') = \begin{cases}
        0 \text{ if $d(x,x') < \varepsilon$,}\\
        1 \text{ if $d(x,x') = \varepsilon$,}\\
        \infty \text{ if $d(x,x') > \varepsilon$.}
    \end{cases}
\]
Hence, as $p \to \infty$, it recovers $c_\varepsilon$ of \eqref{def:CostEpsilon} generically.

Here, a larger class of cost functions including $p$-th power of distance for $p >1$ is assumed. Optimal transport with such regular cost functions has been studied and pioneered by \citet{gangbo1996geometry}. Key features of such cost functions are \emph{strict convexity}, \emph{cone geometry}, and \emph{superlinearity}.

\begin{assumption}\label{assumption: strict convexity of cost function}
$c(x,x')=h(x-x')$, where $h:\mathbb{R}^p \rightarrow [0, \infty)$ with $h(0)=0$ is a symmetric and continuous function satisfying
\begin{enumerate}
	\item[(A1)] $h$ is strictly convex on $\mathbb{R}^p $,
	\item[(A2)] given a height $r\in \R^+$ and an angle $\theta \in (0,\pi) $, there exists some $M:=M(r, \theta)>0$ such that for all $|p |>M$, one can find a cone 
	\begin{align*}
		K(r, \theta, z,p):=\left\lbrace x\in \mathbb{R}^p  : | x-p|| z|\cos(\theta/2)\leq \left< z,x-p \right>\leq  r| z| \right\rbrace,
	\end{align*}
	with vertex at $p$ on which $h$ attains its maximum at $p$,
	\item[(A3)] $\lim_{|x | \rightarrow \infty}\frac{h(x)}{|x | }= \infty $.
\end{enumerate}    
\end{assumption}

Now, we are ready to state our first CLT.

\begin{theorem}[CLT centered at empirical adversarial risk]\label{thm: empirical centering}
Assume that $c$ satisfies \Cref{assumption: strict convexity of cost function}, and there is a unique optimal $g=(g_i)_{i \in \mathcal{Y}}$ for \eqref{eq:mot_decomposed_dual} with input $\mu$. Recall \eqref{eq: extension of g_i}. Then,
\[
    \sqrt{n}(\mathscr{R}(\mu^n)- \mathbb{E}\mathscr{R}(\mu^n) ) \overset{d}{\longrightarrow} \mathcal{N} \left( 0, \mu(\bm{g}^2) - (\mu(\bm{g}) )^2 \right)
\]
where
\[
    \mu(\bm{g}^2) - (\mu(\bm{g}) )^2 = \sum_{i \in \mathcal{Y}} \int g_i^2 d\mu_i - \sum_{i,  j} \int g_i d\mu_i \int g_j d\mu_j.
\]
\end{theorem}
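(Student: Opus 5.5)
The plan is the Efron--Stein / martingale-difference scheme used for empirical-centering CLTs in optimal transport (in the spirit of \cite{CLT_OT2019, CLT_general_transport2024}), adapted to the lifted potential $\bm{g}$ on $\mathcal{X}\times\mathcal{Y}$. By the equivalence with the multimarginal dual \eqref{eq:mot_decomposed_dual}, we have
\[
\mathscr{R}(\mu^n)=\sup_{g}\sum_i \int g_i\, d\mu_i^n+\text{const}=\sup_g \mu^n(\bm{g})+\text{const}.
\]
Here the supremum runs over admissible vector potentials (a $c$-conjugacy-type constraint coupling the $g_i$). Set $L_n:=\frac{1}{\sqrt n}\sum_{k=1}^n(\bm{g}(X_k,Y_k)-\mu(\bm{g}))$, where $g$ is the unique population potential. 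By the classical CLT applied to the i.i.d.\ variables $\bm{g}(X_k,Y_k)$, $L_n\to\mathcal{N}(0,\mathrm{Var}_\mu(\bm{g}))$. Expanding $\mathrm{Var}_\mu(\bm g)=\sum_i\int g_i^2d\mu_i-(\sum_i\int g_id\mu_i)^2$ gives exactly the stated variance, including the cross terms. It therefore suffices to show
\[
n\,\mathrm{Var}\bigl(\mathscr{R}(\mu^n)-\mu^n(\bm g)\bigr)\to 0,
\]
since then Chebyshev and Slutsky finish the proof.

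\textbf{Efron--Stein step.} Let $\mu^{n,(k)}$ be $\mu^n$ with the $k$-th sample $(X_k,Y_k)$ replaced by an independent copy $(X_k',Y_k')$. Let $g^n$ and $g^{n,(k)}$ be optimal potentials for $\mu^n$ and $\mu^{n,(k)}$, normalized suitably; since $K$ is fixed, these are bounded and, by the Lipschitz regularity noted in the introduction, uniformly Lipschitz on compacts. Put $F:=\mathscr{R}(\mu^n)-\mu^n(\bm g)$ and $F^{(k)}$ analogously. Duality (testing the optimality of one potential against the other measure) gives the sandwich
\[
\tfrac1n\bigl(\bm{g}^{n,(k)}-\bm g\bigr)(X_k,Y_k)-\tfrac1n\bigl(\bm{g}^{n,(k)}-\bm g\bigr)(X_k',Y_k')\ \le\ F-F^{(k)}\ \le\ \tfrac1n\bigl(\bm{g}^{n}-\bm g\bigr)(X_k,Y_k)-\tfrac1n\bigl(\bm{g}^{n}-\bm g\bigr)(X_k',Y_k').
\]
Efron--Stein then yields
\[
n\,\mathrm{Var}(F)\le \tfrac{n}{2}\sum_k\mathbb{E}(F-F^{(k)})^2\lesssim \mathbb{E}\bigl[|\bm g^n-\bm g|^2(X',Y')\bigr]+\mathbb{E}\bigl[|\bm g^{n,(k)}-\bm g|^2(X_k,Y_k)\bigr].
\]
By exchangeability, both terms reduce to $\mathbb{E}\,\mu(|\bm g^n-\bm g|^2)$, with the sample point independent of the potential, up to a leave-one-out swap.

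\textbf{Stability step (main obstacle).} We need $\bm g^n\to\bm g$ in $L^2(\mu)$ in expectation. The plan is as follows. (i) Show that optimal potentials are uniformly bounded and locally equi-Lipschitz; for this use superlinearity (A3) and continuity of $h$ together with the boundedness of potentials coming from the $[0,1]$-valued risk. (ii) Apply Arzel\`a--Ascoli along subsequences, using $\mu^n\rightharpoonup\mu$ a.s., to see that every limit point is optimal for $\mu$. (iii) Invoke the assumed uniqueness of $g$ to get pointwise convergence on $\spt\mu_i$. (iv) Use dominated convergence via the uniform bound to obtain $\mathbb{E}\,\mu(|\bm g^n-\bm g|^2)\to0$.

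The delicate issues are the normalization, since potentials are only defined up to the dual's symmetry and uniqueness is assumed in that normalized sense, and the control of potentials away from compact sets when $\mu$ has unbounded support. For the latter I would first truncate using the uniform boundedness, which holds globally because the adversary gains at most $1$. I would also verify that replacing one sample, which may change its label $Y_k$ and hence the class masses $|\mu^n_i|$, preserves the sandwich inequality. This requires that the dual feasible set not depend on the masses, which holds in the decomposed MOT dual.
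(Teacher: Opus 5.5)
Your proposal is correct and is essentially the paper's proof. The paper sets $R_n=(1-\mu^n(\bm g))-\mathscr R(\mu^n)$ and applies the one-sided Efron--Stein bound $n\,\mathrm{Var}(R_n)\le n^2\,\mathbb{E}(R_n-R_n')_+^2$, so it only needs the upper half of your sandwich (the one involving $g^n$); it then concludes from $g^n\to g$ pointwise a.s. (\Cref{prop: stability of optimal potentials}, via the global Lipschitz bound and Arzel\`a--Ascoli) together with the uniform bound on $n(R_n-R_n')_+$. One bookkeeping slip: your sandwich also produces in-sample terms such as $(\bm g^n-\bm g)(X_k,Y_k)$, which are not of the form $\mu(|\bm g^n-\bm g|^2)$, but these vanish by the same a.s. pointwise convergence at the fixed point $(X_1,Y_1)$ plus bounded convergence, exactly as in the paper.
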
    

\begin{proof}
See \Cref{thm: CLT empirical}.    
\end{proof}

\Cref{thm: empirical centering} is only partially satisfactory, as it is centered at the expected empirical adversarial risk rather than the population risk, leaving a potential bias. Since obtaining a population-centered CLT directly is challenging, we instead consider the smoothed adversarial training risk. Smoothing also has a natural interpretation: a labeled image $(x,y)$ is observed after unknown noise is added to $x$.

Smoothing is also widely used in optimal transport to obtain population-centered CLTs \cite{Goldfeldetal_2020, limit_pWasserstein2024, Goldfeld_Kato_Rioux_Sadhu2024}. With a sufficiently smooth kernel, it provides the regularity needed to transform the dual class, via the Fubini theorem, into a sufficiently regular function class to which classical empirical process theory applies, yielding a limiting Gaussian process.

These existing results, however, do not directly apply here since our function class is genuinely vector-valued. This difficulty is resolved by lifting a finite-dimensional vector-valued potential to a real-valued function on a natural extended space. With the boundedness of potentials, one can show that a lifted function class is $\mu$-Donsker. Combined with $L^2$-convergence of optimal potentials, the population-centered CLT is achieved.

Let us formally introduce the smoothed measure. Given a sequence of measures $\{\mu^n\}$ and a measure $\mu$, we denote their smoothed version by 
\[
    \mu^{n, \chi}:=\mu^n * \chi, \quad \mu^{\chi}:=\mu * \chi.
\]
They should be understood as
\[
    \mu^{\chi}(X \in A, Y =i) = \int_A d (\mu_i * \chi)(x).
\]
Popular choices for $\chi$ include a centered Gaussian measure or a compactly supported probability measure with a smooth density.

\begin{assumption}\label{assumption: smoothing kernel}
The smoothing measure $\chi$ is a symmetric probability measure that is
absolutely continuous with respect to the Lebesgue measure, with density
$\kappa$, and
\[
    \kappa\in W^{m,1}(\mathbb R^p)
\]
for some integer $m>p/2$. In addition, $\chi$ has negligible boundary.
\end{assumption}

%\dk{Example 넣어줘야함; Gaussian + bounded support}

The next one is the assumption about the tail probability of $\mu_X$, which is necessary for the existence of a Gaussian process.

\begin{assumption}\label{assumption: measure for empirical process}
There is a partition $\{\mathcal{X}_j\}$, which are uniformly bounded and convex sets with nonempty interior, of $\mathbb{R}^p$ such that 
\[
   \sum_j \mu_X (\mathcal{X}_j)^{\frac{1}{2}} < \infty. 
\]
for the marginal $\mu_X$ given in \eqref{eq:marg}.
\end{assumption}

\begin{remark}
A similar assumption is imposed in \cite[Proposition 3.1]{limit_pWasserstein2024}, which suffices to guarantee a class of sufficiently regular functions to be $\mu$-Donsker. For example, if $\mu$ is subexponential or compactly supported, the assumption is satisfied.
\end{remark}

Now, we have the CLT centered at the population smoothed adversarial risk.

\begin{theorem}[CLT of the smoothed adversarial risk]\label{thm: smooth limit}
Assume that $c$ satisfies \Cref{assumption: strict convexity of cost function}, $\mu_X$ satisfies \Cref{assumption: measure for empirical process}, the supports of $\mu_i$ have negligible boundary, and $\chi$ satisfies \Cref{assumption: smoothing kernel}. Assume further that there is a unique optimal $g=(g_i)_{i \in \mathcal{Y}}$ for \eqref{eq:mot_decomposed_dual} with input $\mu * \chi$. Let
\begin{align}
\label{eq:gchi}
\bm{g}_\chi(x,y) := \sum_{i \in \mathcal{Y}}  (g_i * \chi) (x) \mathds{1}_{y=i}.
\end{align}
Then, 
\[
    \sqrt{n} (\mathscr{R}(\mu^{n,\chi})- \mathscr{R}(\mu^\chi) ) \overset{d}{\longrightarrow} \mathcal{N} \left( 0,  \mu( \bm{g}_\chi^2) - \left( \mu(\bm{g}_\chi) \right)^2 \right)
\]
where 
\[
    \mu( (\bm{g}_\chi)^2) - \left( \mu(\bm{g}_\chi) \right)^2 = \sum_{i \in \mathcal{Y}} \mu_i \left( (g_i * \chi)^2 \right) - \sum_{i,  j} \mu_i(g_i * \chi)\mu_j(g_j * \chi).
\]
\end{theorem}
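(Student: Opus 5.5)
The proof combines a sandwich of dual inequalities with the Donsker property of a lifted class of smoothed potentials. We use the dual formulation \eqref{eq:mot_decomposed_dual}, under which $\mathscr{R}(\nu)=\sup_{g}\sum_i \nu_i(g_i)$ (plus a constant, if present) over admissible $g=(g_i)_{i\in\mathcal Y}$. The admissible potentials can be taken uniformly bounded, and they are Lipschitz thanks to the regularity of $c$ under \Cref{assumption: strict convexity of cost function}. Let $g^n$ be an optimal potential for $\mu^{n,\chi}$ and $g$ the unique one for $\mu^\chi$. Optimality on each side gives the two-sided bound
\[
\sum_i (\mu_i^{n}-\mu_i)(g_i*\chi)\;\le\;\mathscr{R}(\mu^{n,\chi})-\mathscr{R}(\mu^\chi)\;\le\;\sum_i (\mu_i^{n}-\mu_i)(g^n_i*\chi).
\]
Here we used the symmetry of $\chi$ to move the convolution onto the potential, $(\mu_i^n*\chi)(g_i)=\mu_i^n(g_i*\chi)$. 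In the lifted notation \eqref{eq:gchi}, the lower bound equals $(\mu^n-\mu)(\bm g_\chi)$. This is an i.i.d. average of the bounded function $\bm g_\chi(X_k,Y_k)$, so the classical CLT gives $\sqrt n(\mu^n-\mu)(\bm g_\chi)\to\mathcal N(0,\mu(\bm g_\chi^2)-\mu(\bm g_\chi)^2)$. Expanding $\bm g_\chi^2=\sum_i (g_i*\chi)^2\mathds 1_{y=i}$ and $\mu(\bm g_\chi)=\sum_i\mu_i(g_i*\chi)$ produces exactly the stated variance, including the cross terms. It therefore suffices to show $\sqrt n(\mu^n-\mu)(\bm g^n_\chi-\bm g_\chi)\to0$ in probability.

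The tool for this is the lifting. Let $\mathcal G$ be the class of admissible potentials: bounded by a constant $B$, Lipschitz with a uniform constant on the relevant region, and normalized, as the uniqueness assumption allows. Define $\mathcal F_\chi=\{\bm h_\chi(x,y)=\sum_i (h_i*\chi)(x)\mathds 1_{y=i}: h\in\mathcal G\}$ on $\mathcal X\times\mathcal Y$. We show $\mathcal F_\chi$ is $\mu$-Donsker. Because $\mathcal Y$ is finite, $\mathcal F_\chi$ sits inside a finite sum of classes $\{f(x)\mathds 1_{y=i}\}$, and a finite sum of Donsker classes is Donsker, so we reduce to showing $\{h_i*\chi: h\in\mathcal G\}$ is $\mu_X$-Donsker. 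Since $\kappa\in W^{m,1}$, we have $\|\partial^\alpha (h_i*\chi)\|_\infty\le \|h_i\|_\infty\|\partial^\alpha\kappa\|_{L^1}\le B\|\kappa\|_{W^{m,1}}$ for $|\alpha|\le m$, so the class lies in a uniform ball of $C^m$ functions. On each cell $\mathcal X_j$ of \Cref{assumption: measure for empirical process}, the bracketing entropy of this ball is of order $\epsilon^{-p/m}$ with $p/m<2$. The summability $\sum_j\mu_X(\mathcal X_j)^{1/2}<\infty$ then lets us invoke the van der Vaart-type criterion for smooth classes on partitions, as in \cite[Proposition 3.1]{limit_pWasserstein2024}. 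Donskerness implies asymptotic equicontinuity of $\sqrt n(\mu^n-\mu)$ in the $L^2(\mu)$ seminorm.

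The remaining step, and the main obstacle, is to show $\|\bm g^n_\chi-\bm g_\chi\|_{L^2(\mu)}\to0$ in probability (almost surely suffices). We plan to use the stability of optimal potentials established earlier for the multimarginal dual. Almost surely $\mu^{n,\chi}\to\mu^\chi$ weakly, with tight supports near $\mathrm{spt}\,\mu$. The $g^n$ are uniformly bounded and locally Lipschitz, so Arzelà–Ascoli gives locally uniform subsequential limits. Any such limit is optimal for $\mu^\chi$, which follows from upper semicontinuity of the dual value together with continuity of $\mathscr R$ under the regular cost. By uniqueness, that limit is $g$; the negligible-boundary hypotheses are what let us identify $g$ on the support of $\mu^\chi$, after fixing the additive normalization. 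Hence $g^n\to g$ locally uniformly, and therefore $g^n_i*\chi\to g_i*\chi$ pointwise with a uniform bound. Dominated convergence then gives the $L^2(\mu)$ convergence. With this in hand, asymptotic equicontinuity shows the upper bound minus the lower bound is $o_P(n^{-1/2})$, and the sandwich yields the theorem by Slutsky's lemma. The delicate point is the identification of the limit off the support and the possible need to restrict $\mathcal G$ to a compact region. We expect to handle this by the $c$-transform representation, which propagates the limit from the support to all of $\mathbb R^p$.
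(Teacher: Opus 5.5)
Your proposal is correct and follows essentially the same route as the paper. Both arguments use the two-sided optimality sandwich, the $\mu$-Donsker property of the lifted class of smoothed potentials (from the uniform $C^m$ bounds given by $\kappa\in W^{m,1}$ and the partition criterion), almost-sure $L^2(\mu)$ convergence of $\bm g^n_\chi$ from stability plus uniqueness and dominated convergence, and then asymptotic equicontinuity, the classical CLT and Slutsky. Two small points remain: since $\mathscr R(\nu)=1-\sup_{g\in\mathcal G}\sum_i\nu_i(g_i)$, your sandwich actually holds for $-(\mathscr R(\mu^{n,\chi})-\mathscr R(\mu^\chi))$, which is harmless because the centered Gaussian limit is symmetric (the paper uses exactly this); and your worry about identifying the limit off the support is moot, since \Cref{prop: stability of optimal potentials} with the assumed uniqueness already gives locally uniform convergence on all of $\mathbb R^p$ (and $\mu_i(g_i^n*\chi)$ only sees $g_i^n$ on $\spt(\mu_i*\chi)$ anyway), so no $c$-transform extension or compact restriction is needed.
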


\begin{proof}
See \Cref{thm: limit distribution for smoothed}.    
\end{proof}

A significant difference from the asymptotic theory of standard empirical optimal transport is the presence of nonvanishing cross-covariance terms. Because of these terms, uniqueness of the optimal potential only up to additive constants is not sufficient to identify the limiting variance.

To see this, consider the binary setting. Suppose that two optimal potentials \(g\) and \(g'\) differ by an additive transformation,
\[
    g_1'=g_1+a, \quad g_2'=g_2-a,
\]
for some $a\in\mathbb{R}$. Recalling the variance of the Gaussian limit in \Cref{thm: empirical centering}, we have
\begin{align*}
    &\mu_1( (g_1')^2) + \mu_2( (g_2')^2) - (\mu_1 (g_1') + \mu_2(g_2') )^2\\
    &=\mu_1( (g_1+ a)^2) + \mu_2( (g_2 - a)^2) - (\mu_1 (g_1 + a) + \mu_2(g_2 -a) )^2\\
    &\neq \mu_1( g_1^2) + \mu_2( g_2^2) - (\mu_1 (g_1) + \mu_2(g_2) )^2
\end{align*}
in general. Thus, two potentials that are equivalent up to additive constants need not yield the same limiting variance, so the variance is not well defined on the corresponding equivalence class.

This phenomenon does not arise in the standard empirical optimal transport setting considered in \cite{Tameling_etc2019, CLT_OT2019, Goldfeld_Kato_Rioux_Sadhu2024, limit_pWasserstein2024, CLT_semidiscrete_OT2024, CLT_general_transport2024, Hundrieser_etc2024, rodriguezvitores2025improvedcentrallimittheorem, delbarrio2025distributionallimittheoryoptimal}, where the empirical marginals are sampled independently and the corresponding cross-covariance terms are absent. This distinction shows that genuine uniqueness of the optimal potential is required to identify the limiting variance, and hence to establish the CLT for the adversarial training model.

We establish uniqueness of the optimal potential in the binary setting, under which the adversarial problem is equal to optimal partial transport. Unlike standard optimal transport, where potentials are unique only up to additive constants, the optimal potential is genuinely unique for non-degenerate optimal partial transport.

The proof exploits the equivalence among adversarial training, its multimarginal optimal transport formulation and the dual of it, and optimal partial transport. By reducing partial transport to standard transport through an extension of the input measures, we apply known uniqueness results for optimal transport potentials. Complementary slackness then determines the potential on the inactive region, yielding uniqueness almost everywhere.

We further show that, in the general multiclass setting, the optimal adversarial attack is unique when the input measures are absolutely continuous, under \Cref{assumption: strict convexity of cost function}. Note that the following uniqueness statements are understood to hold $\mu$-almost everywhere.

\begin{theorem}[Uniqueness]\label{thm: uniqueness}
Assume that $c$ satisfies \Cref{assumption: strict convexity of cost function}, and $\mu_i$'s are absolutely continuous measures with connected supports. Then, there is a unique optimal adversarial attack for \eqref{def: distributional model}.

Consider the binary setting. There is a unique optimal partition for \eqref{def: stratified system of barycenter problem}. Furthermore, the following holds.
\begin{enumerate}
    \item[(i)] If $\mu_{1,1}=\mu_{2,2}=0$ (degenerate case), optimal potential for \eqref{eq:mot_decomposed_dual} is unique up to additive constants, i.e., if $g$ and $g'$ are optimal, there is a constant $a \in \mathbb{R}$ such that
\[
    g_1' = g_1 + a, \quad g_2'=g_2 - a.
\]
    \item[(ii)] If either $\mu_{1,1}$ or $\mu_{2,2}$ does not vanish (non-degenerate case), then optimal $g$ is unique.
\end{enumerate}    
\end{theorem}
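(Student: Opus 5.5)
We split the argument into the multiclass uniqueness of the optimal attack and the binary uniqueness of the partition and of the potential.

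\textbf{Optimal attack.} We use the generalized barycenter and multimarginal formulation of \eqref{def: distributional model} from \cite{jakwang_MOT, jakwang_2024existence}. An optimal attack is encoded by couplings between each $\mu_i$ and the barycenter-type measures $\lambda_A$, indexed by subsets $A\subseteq\mathcal Y$. Fix an optimal dual potential $g=(g_i)$ for \eqref{eq:mot_decomposed_dual}, which exists by the earlier existence results. Complementary slackness shows that $\mu_i$-a.e. point $x$ is sent to a point $x'$ at which $g_i(x)-c(x,x')$ attains a $c$-concave-type supremum. We will first show that each $g_i$ is locally Lipschitz (the Lipschitz regularity mentioned in the introduction), hence differentiable Lebesgue-a.e. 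Since $\mu_i\ll\mathcal L^p$, the twist property coming from strict convexity of $h$ (A1) gives $\nabla g_i(x)=\nabla h(x-x')$. Inverting $\nabla h$ determines $x'=x-(\nabla h)^{-1}(\nabla g_i(x))$ uniquely. So every optimal attack is induced by maps $T_i$ defined through one fixed $g$, and $\nu_i=(T_i)_\#\mu_i$ is unique. Because $g$ was arbitrary among optimizers, this also shows that any optimizer determines the same attack. Properties (A2)–(A3) are used as in \cite{gangbo1996geometry}, to guarantee that maxima are attained and that the superdifferential is nonempty.

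\textbf{Binary case: unique partition.} For $K=2$, \eqref{def: stratified system of barycenter problem} splits $\mu_i=\mu_{i,i}+\mu_{i,\{1,2\}}$. Here $\mu_{1,\{1,2\}}$ and $\mu_{2,\{1,2\}}$ are matched to a common barycenter, and this problem is equivalent to optimal partial transport \eqref{def: partial transport} between $\mu_1$ and $\mu_2$ with cost $\tilde c(x,y)=\min_z[c(x,z)+c(y,z)]$, plus a mass penalty. The active parts are then the marginals of the optimal partial plan. Uniqueness of the optimal partial plan should follow as above: the extended dual potentials are a.e. differentiable and $\tilde c$ inherits a twist condition from strict convexity of $h$. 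Hence the partition is unique.

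\textbf{Binary case: unique potential.} We reduce partial transport to standard transport by appending to each measure an isolated point (or a disjoint reservoir) that absorbs the inactive mass $\mu_{1,1}$, resp.\ $\mu_{2,2}$, at a cost equal to the threshold value. Known uniqueness up to constants for absolutely continuous measures with connected supports \cite{CLT_OT2019, CLT_general_transport2024} then gives $g_1,g_2$ up to $g_1+a$, $g_2-a$ on the active regions. In the degenerate case (i) the whole measures are active, and we get exactly uniqueness up to additive constants. In case (ii), suppose $\mu_{1,1}\neq0$. Complementary slackness on the inactive region forces $g_1$ to equal a specific value there (the pointwise constraint $g_1\le 1$-type bound coming from $f_1\le1$ is saturated). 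By connectedness of $\spt\mu_1$ and continuity of $g_1$, this pins down the constant $a$, so $g$ is unique.

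The main obstacle is this last gluing step. It requires checking that the reduction to standard transport preserves connected supports or the needed irreducibility, and that the constraint on the inactive region actually fixes $a$ rather than only bounding it. If the reservoir construction breaks connectedness, we would instead argue directly: $g_1+g_2$ is determined on the active coupling, and continuity of $g_1$ across the boundary between the active and inactive parts of the connected set $\spt\mu_1$ transfers the fixed value from the inactive set to the active one.
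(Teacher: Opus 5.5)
Your argument for the uniqueness of the attack is sound. It takes a different route from the paper. You fix one optimal potential and use the Gangbo--McCann twist (a.e.\ differentiability of the Lipschitz $g_i$ and injectivity of $\partial h$) to show that every optimal attack equals $(T_i)_\#\mu_i$ for the same $T_i$. The paper instead proves strict convexity of $\nu_i\mapsto C(\mu_i,\nu_i)$ from uniqueness of map-induced optimal couplings.

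The binary part has a genuine gap in the partition step. The twist argument does not give uniqueness of the optimal partial plan ``as above''. For a fixed dual pair it only shows that every optimal plan has the form $(\mathrm{Id},T)_\#\pi_1$ with the same $T$. It says nothing about which sub-measure $\pi_1\le\mu_1$ is active, and that sub-measure is exactly the partition. In the attack step all of $\mu_i$ is pushed forward, which is why the argument closes there; in partial transport it does not. The paper deduces the partition from the unique attack instead. It sets $\lambda^*=\nu^*_1\vee\nu^*_2$ and $v_i=d\nu^*_i/d\lambda^*$. For $K=2$ the stratification weights are then forced: $r_{1,2}=v_1+v_2-\max\{v_1,v_2\}$, $r_1=\max\{v_1,v_2\}-v_2$, $r_2=\max\{v_1,v_2\}-v_1$. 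Each $\mu_{i,A}$ is read off by disintegrating the unique optimal coupling of $\mu_i$ and $\nu^*_i$. You already have the unique attack, so route the partition through it.

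The reduction to standard transport, which you rightly flag as the obstacle, does not work as you set it up. Appending an isolated point (or a far-away reservoir) at threshold cost makes the source measure non-absolutely-continuous, or its support disconnected. It also produces a cost that is no longer of the form $h(x-y)$. So the uniqueness-up-to-constants result of \cite{CLT_general_transport2024} cannot be invoked. Your fallback does not repair this. Knowing $g_1(x)+g_2(T(x))$ on the graph, and $g_1=1$ on the inactive set, plus continuity, does not determine $g_1$ inside the active region. What is needed is identification of $\nabla g_1$ a.e.\ on all of $\spt(\mu_1)$, combined with connectedness.

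The missing idea is the paper's auxiliary balanced problem with $\overline\mu_1=\mu_1+\mu_{2,2}$ and $\overline\mu_2=\mu_2+\mu_{1,1}$. It keeps the unchanged cost $c_{1,2}(x_1,x_2)=2h\bigl(\tfrac{x_1-x_2}{2}\bigr)$, which satisfies the same assumption. Because $c_{1,2}$ vanishes on the diagonal, inactive mass is transported to itself for free. This has three consequences:
\begin{itemize}
\item $\overline\pi=\pi^*_{1,2}+(\mathrm{Id},\mathrm{Id})_\#(\mu_{1,1}+\mu_{2,2})$ is the unique optimal plan, shown by comparison with $\mathcal U(m^*)$;
\item both measures remain absolutely continuous;
\item every optimal pair for the partial dual becomes optimal for the auxiliary dual after adding $\tfrac12$ to each potential.
\end{itemize}
Uniqueness up to one additive constant on the connected set $\spt(\mu_1)$, active and inactive parts together, then follows. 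Your last step, $g_1=1$ on $\spt(\mu_{1,1})$, fixes that constant exactly as in the paper.
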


\begin{proof}
See \Cref{lem: uniqueness of adversarial attack} and \Cref{thm: uniquness of potential}.    
\end{proof}

\begin{remark}
Using the $\overline{c}$-transform defined in \eqref{eq:c_transform}, it is immediate that under the hypothesis of \Cref{thm: uniqueness}, there is a unique robust classifier for \eqref{def: distributional model} for non-degenerate case.
\end{remark}

Leveraging \Cref{thm: uniqueness}, one can derive the following CLTs for the binary setting.

\begin{corollary}\label{Cor: empirical centering}
Consider the binary setting. Assume that $c$ satisfies \Cref{assumption: strict convexity of cost function}, and $\mu_1$ and $\mu_2$ are absolutely continuous. Assume further that the non-degenerate case holds as in \Cref{thm: uniqueness} with input $\mu$. Then, there is a unique optimal potential $(g_1, g_2)$ for \eqref{eq:mot_decomposed_dual} with input $\mu$, and
\[
\sqrt n\left(
\mathscr R(\mu^n)-\mathbb E\mathscr R(\mu^n)
\right)
\overset{d}{\longrightarrow}
\mathcal N(0,\sigma^2),
\]
where
\[
\sigma^2
=
\mu_1(g_1^2)+\mu_2(g_2^2)
-\left(\mu_1(g_1)+\mu_2(g_2)\right)^2.
\]
\end{corollary}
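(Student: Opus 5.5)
This corollary should follow by combining \Cref{thm: uniqueness} with \Cref{thm: empirical centering}. The only gap is that \Cref{thm: uniqueness} assumes connected supports, and the corollary does not list that hypothesis explicitly.

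\emph{Step 1: check the hypotheses of \Cref{thm: empirical centering}.} The cost $c$ satisfies \Cref{assumption: strict convexity of cost function} by assumption. The remaining hypothesis is that \eqref{eq:mot_decomposed_dual} with input $\mu$ has a unique optimal potential $g=(g_1,g_2)$. To get it, apply \Cref{thm: uniqueness}(ii) in the binary setting. Its hypotheses are:
\begin{itemize}
    \item absolute continuity of $\mu_1,\mu_2$, which is assumed;
    \item the non-degeneracy condition that $\mu_{1,1}$ or $\mu_{2,2}$ does not vanish, which is assumed;
    \item connected supports, which must either be read as implicit in the standing assumptions of \Cref{thm: uniqueness} or be added.
\end{itemize}
Given these, \Cref{thm: uniqueness}(ii) yields genuine uniqueness $\mu$-a.e., not merely uniqueness up to the additive shift $g_1+a,\ g_2-a$. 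This matters: by the computation following \Cref{thm: smooth limit}, the variance is not invariant under that shift, so only genuine uniqueness makes $\sigma^2$ well defined.

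\emph{Step 2: apply \Cref{thm: empirical centering}.} With $K=2$ this gives
\[
\sqrt n\bigl(\mathscr R(\mu^n)-\mathbb E\mathscr R(\mu^n)\bigr)\overset{d}{\longrightarrow}\mathcal N\bigl(0,\mu(\bm g^2)-(\mu(\bm g))^2\bigr).
\]

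\emph{Step 3: identify the variance.} Using \eqref{eq: extension of g_i}, we have $\mu(\bm g^2)=\mu_1(g_1^2)+\mu_2(g_2^2)$ and $\mu(\bm g)=\mu_1(g_1)+\mu_2(g_2)$. Substituting gives exactly $\sigma^2$.

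The main obstacle is Step 1, specifically the mismatch in hypotheses. There are two things to confirm.
\begin{itemize}
    \item Uniqueness in \Cref{thm: uniqueness} is only $\mu$-a.e. That suffices here, because $\sigma^2$ involves $g_i$ only through integrals against $\mu_i$, and \Cref{thm: empirical centering} also uses $g$ only $\mu$-a.e.
    \item The non-degeneracy condition is stated in terms of the optimal partition $\mu_{i,i}$. Since \Cref{thm: uniqueness} shows this partition is unique, the condition is unambiguous.
\end{itemize}
If connectedness of supports cannot be dropped, I would state it explicitly as a hypothesis of the corollary rather than try to prove uniqueness without it.
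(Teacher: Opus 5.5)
Your proposal is correct and follows the route the paper intends. The paper gives no separate proof, only the remark that the corollaries follow by leveraging \Cref{thm: uniqueness}: genuine uniqueness from part (ii) of \Cref{thm: uniqueness} supplies the hypothesis of \Cref{thm: empirical centering}, and the variance then specializes to $K=2$ exactly as you compute.

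Your flag about connected supports is also right, and it is not a mere technicality. That hypothesis enters through \Cref{lem: uniqueness of potenial}, and without it uniqueness can fail. For example, place a fully matched pair of identical blobs far from a component carrying the inactive mass; then you can shift $g_1\mapsto g_1+a$, $g_2\mapsto g_2-a$ on the blobs for small $|a|$, which changes $\sigma^2$. So connectedness must be read as inherited from \Cref{thm: uniqueness} or added to the corollary.
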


\begin{corollary}\label{Cor: smoothed centering}
Consider the binary setting. Assume that $c$ satisfies \Cref{assumption: strict convexity of cost function}, $\mu_1$ and $\mu_2$ satisfy \Cref{assumption: measure for empirical process}, and have negligible boundary. Also, assume that $\chi$ satisfies \Cref{assumption: smoothing kernel}, and the non-degenerate case holds as in \Cref{thm: uniqueness} with input $\mu * \chi$. Then, there is a unique optimal potential $(g_1, g_2)$ for \eqref{eq:mot_decomposed_dual} with input $\mu * \chi$, and
\begin{align*}
    \sqrt{n}(\mathscr{R}(\mu^{n, \chi})- \mathscr{R}(\mu^{\chi}) )   \overset{d}{\longrightarrow} \mathcal{N}\left(0, \sigma_\chi^2 \right),
\end{align*}
where
\[
\sigma_\chi^2
=
\mu_1((g_1 * \chi)^2)+\mu_2((g_2 * \chi)^2)
-\left(\mu_1(g_1 * \chi)+\mu_2(g_2 * \chi)\right)^2.
\]
\end{corollary}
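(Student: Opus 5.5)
The plan is to derive the corollary directly from \Cref{thm: smooth limit} and \Cref{thm: uniqueness}, applied with input $\mu^\chi=\mu*\chi$ instead of $\mu$. There are two verifications. First, the hypotheses of \Cref{thm: uniqueness} hold for $\mu*\chi$, which gives a genuinely unique optimal potential $(g_1,g_2)$ for \eqref{eq:mot_decomposed_dual}. Second, the remaining hypotheses of \Cref{thm: smooth limit} hold verbatim. Once both are checked, the conclusion of \Cref{thm: smooth limit} specializes to $K=2$, and expanding $\mu(\bm{g}_\chi^2)-(\mu(\bm{g}_\chi))^2$ gives exactly $\sigma_\chi^2$.

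\textbf{Step 1: regularity of the smoothed measures.} By \Cref{assumption: smoothing kernel}, $\chi$ has a density $\kappa\in L^1$. Hence each $\mu_i*\chi$ is absolutely continuous, with density $\int\kappa(\cdot-y)\,d\mu_i(y)$, whatever $\mu_i$ is. Next, $\spt(\mu_i*\chi)$ is the closure of $\spt(\mu_i)+\spt(\chi)$. For connectedness I would argue as follows: $\spt\chi$ is a symmetric set containing a neighborhood structure, and if $\spt\chi$ is connected (for instance a Gaussian, where the sum is all of $\mathbb{R}^p$), then $\spt(\mu_i)+\spt(\chi)$ is a union of connected translates. I expect this point to be the main obstacle. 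If $\chi$ has compact, possibly disconnected, support, connectedness of $\spt(\mu_i*\chi)$ is not automatic. In that case I would either read connectedness into the standing hypotheses of \Cref{thm: uniqueness} for the input $\mu*\chi$, or reduce to Gaussian-type $\chi$ with full support, so that connectedness of the support is trivial.

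\textbf{Step 2: apply uniqueness and the smoothed CLT.} The cost satisfies \Cref{assumption: strict convexity of cost function}, and the non-degenerate case is assumed for input $\mu*\chi$. Together with Step 1, part (ii) of \Cref{thm: uniqueness} yields a unique optimal $g=(g_1,g_2)$. The remaining hypotheses of \Cref{thm: smooth limit} are exactly those assumed here:
\begin{itemize}
\item \Cref{assumption: measure for empirical process} for $\mu_X=\mu_1+\mu_2$, which follows from the assumption on $\mu_1,\mu_2$ by subadditivity of $t\mapsto t^{1/2}$, i.e.\ $\mu_X(\mathcal X_j)^{1/2}\le\mu_1(\mathcal X_j)^{1/2}+\mu_2(\mathcal X_j)^{1/2}$ (using a common refinement of the partitions if they differ);
\item the negligible-boundary condition on the supports;
\item \Cref{assumption: smoothing kernel} for $\chi$.
\end{itemize}
\Cref{thm: smooth limit} then gives $\sqrt n(\mathscr R(\mu^{n,\chi})-\mathscr R(\mu^\chi))\to\mathcal N(0,\mu(\bm g_\chi^2)-\mu(\bm g_\chi)^2)$. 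With $K=2$ and \eqref{eq:gchi}, this variance equals $\mu_1((g_1*\chi)^2)+\mu_2((g_2*\chi)^2)-(\mu_1(g_1*\chi)+\mu_2(g_2*\chi))^2=\sigma_\chi^2$, which completes the argument.
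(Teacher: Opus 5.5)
Your route is the one the paper intends. The corollary is stated without a separate proof: \Cref{thm: uniqueness} is applied to the input $\mu*\chi$, and the result is fed into \Cref{thm: smooth limit}. Your Step~2, including the variance expansion for $K=2$, is fine.

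The real gap is the connectedness of $\spt(\mu_i*\chi)$ in Step~1. You are right to flag it, but your heuristic is wrong and misplaces the obstruction. Connectedness of $\spt(\chi)$ does not make $\spt(\mu_i)+\spt(\chi)$ connected, because a union of connected translates need not be connected. Take $\mu_1$ to be two atoms at distance $10$, and let $\chi$ have a smooth radial density supported on the closed unit ball; this $\chi$ satisfies \Cref{assumption: smoothing kernel}. Then $\spt(\mu_1*\chi)$ is a union of two disjoint closed balls. The issue is gaps in $\spt(\mu_i)$ that $\spt(\chi)$ cannot bridge, not a disconnected kernel support.

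The correct sufficient condition, which is what \Cref{lem:connectedsupport} relies on, is that $\spt(\mu_i)$ and $\spt(\chi)$ are both connected. Then $\spt(\mu_i)+\spt(\chi)$ is the image of the connected set $\spt(\mu_i)\times\spt(\chi)$ under $(x,y)\mapsto x+y$, and its closure is connected. The corollary assumes neither, so this step cannot be derived from the stated hypotheses; the paper's implicit argument needs the same extra input. It has to be imposed explicitly, either through the hypotheses of \Cref{lem:connectedsupport} or by taking a full-support kernel such as a Gaussian, for which $\spt(\mu_i*\chi)=\mathbb{R}^p$ is open and connected. Your fallback is therefore the right repair, but state it as an added hypothesis rather than presenting it as an argument.

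There are two smaller points.

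First, a common refinement of two partitions can destroy the summability in \Cref{assumption: measure for empirical process}. Refining only increases $\sum_j\mu(\mathcal X_j)^{1/2}$, possibly to $+\infty$, and may produce cells with empty interior. Pass instead to the partition of $\mathbb{R}^p$ into half-open unit cubes $Q$. A cell with $\mathrm{diam}(\mathcal X_j)\le D$ meets at most $N=(\lfloor D\rfloor+2)^p$ cubes, so for $i=1,2$
$$\sum_Q\mu_i(Q)^{1/2}\le\sum_Q\sum_j\mu_i(Q\cap\mathcal X_j)^{1/2}\le N\sum_j\mu_i(\mathcal X_j)^{1/2}<\infty.$$
Subadditivity on this common partition then gives the assumption for $\mu_X$.

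Second, \Cref{thm: uniqueness} gives uniqueness only $\mu*\chi$-almost everywhere, whereas \Cref{prop: stability of optimal potentials} and \Cref{thm: smooth limit} use uniqueness of $g$ as a function on $\mathbb{R}^p$. For a full-support kernel this is harmless, since optimal potentials are continuous and $\spt(\mu_i*\chi)=\mathbb{R}^p$. For compactly supported $\chi$ it generally fails literally, since $g_i$ can be lowered off $\spt(\mu_i*\chi)$ without affecting feasibility or optimality. In that case note that, by symmetry of $\chi$, only the values of $g_i$ on $\spt(\mu_i*\chi)$ enter $g_i*\chi$ on $\spt(\mu_i)$. Subsequential limits agree with $g$ there, so the $L^2(\mu)$ convergence of \Cref{lem: L2 convergence smoothed potentials} still holds.
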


As a byproduct of the CLTs, we establish the stability of optimal robust classifiers and attacks. As an empirical measure $\mu^n \to \mu$ weakly, there is a subsequence of saddle points for \eqref{def: distributional model} indexed by input measures $\mu^n$ converging to the limit, which is a saddle point with input $\mu$. A key idea is the duality of \eqref{def: distributional model}, and the Lipschitz regularity of the classifier and the associated optimal potential.

\begin{theorem}[Stability]\label{thm: convergence}
Assume that $c$ satisfies \Cref{assumption: strict convexity of cost function} and $\mu^n \to \mu$ weakly. For each $n \in \mathbb{N}$, there are an optimal classifier $f^n$ and an optimal adversarial attack $\nu^n$ for \eqref{def: distributional model} with input measure $\mu^n$ given in \eqref{eq:empdis}. Furthermore, there are convergent subsequences of $f^n$ and $\nu^n$ such that they converge to $f$ pointwise and $\nu$ weakly, respectively, and $f$ and $\nu$ are optimal for \eqref{def: distributional model} with input measure $\mu$. 
\end{theorem}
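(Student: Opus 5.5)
The plan is to prove \Cref{thm: convergence} through the dual formulation of \eqref{def: distributional model}, in which the minimax value is written as a supremum over bounded potentials $g=(g_i)_{i\in\mathcal{Y}}$ of $\sum_i \int g_i\,d\mu_i$ subject to a multimarginal constraint; this is \eqref{eq:mot_decomposed_dual}. The robust classifier is then recovered from $g$ through the $\overline{c}$-transform \eqref{eq:c_transform}.

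\textbf{Existence for fixed $n$.} For each $n$, the existence of a saddle point $(f^n,\nu^n)$ is the general existence result \cite[Theorem 2.5]{jakwang_2024existence}, which applies because \Cref{assumption: strict convexity of cost function} implies the milder \Cref{assumption: cost function}. The real content is the compactness and the identification of limits.

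\textbf{Compactness.} For the attacks, I would use that $C(\mu^n_i,\nu^n_i)\le \mathscr{R}$-level bounds hold uniformly in $n$: the risk is at most $1$ and $\mathscr{R}(\mu^n)\ge 0$, so $C(\mu^n,\nu^n)\le 1$. Together with the superlinearity (A3) and the tightness of the weakly convergent sequence $\{\mu^n\}$, this gives tightness of $\{\nu^n\}$. Prokhorov then yields $\nu^{n_k}\to\nu$ weakly, with $|\nu_i|=|\mu_i|$ preserved.

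For the classifiers, the key step is the Lipschitz regularity announced in the introduction. Optimal potentials can be chosen with $0\le g_i\le 1$ (natural boundedness), and after $\overline{c}$-convexification they inherit a modulus of continuity from $h$ on the relevant bounded region. The bound on $g$ together with (A3) confines the effective displacements to a ball of radius $R$ independent of $n$, and $h$ is locally Lipschitz since it is convex. One can then select $f^n$ to be the associated $\overline{c}$-transform, which is equi-Lipschitz and bounded on compacts. Arzel\`a--Ascoli with a diagonal argument gives a locally uniform, hence pointwise, convergent subsequence $f^{n_k}\to f$ with $f(x)\in\Delta_{\mathcal{Y}}$.

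\textbf{Optimality of the limits.} Here I would show:
\begin{enumerate}
\item[(a)] $\mathscr{R}(\mu^n)\to\mathscr{R}(\mu)$. Upper semicontinuity comes from lower semicontinuity of $C$ under weak convergence. Lower semicontinuity comes from the dual: an optimal $g$ for $\mu$ is continuous and bounded, so $\sum_i\mu^n_i(g_i)\to\sum_i\mu_i(g_i)$ is feasible for $\mu^n$.
\item[(b)] $\mathscr{R}(f;\mu)\le\liminf_k\mathscr{R}(f^{n_k};\mu^{n_k})$. For any competitor $\nu'$ with finite cost, I would construct $\nu'^{k}$ with $C(\mu^{n_k},\nu'^k)\to C(\mu,\nu')$ by gluing optimal couplings of $\mu^{n_k}\to\mu$. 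Then $R(f^{n_k},\nu'^k)\to R(f,\nu')$ by locally uniform convergence of $f^{n_k}$ and tightness.
\item[(c)] $\mathscr{R}(\nu;\mu)\ge\limsup_k\mathscr{R}(\nu^{n_k};\mu^{n_k})$. This uses the lower semicontinuity of $C$ and the fact that the pointwise infimum over $f$ of $R(f,\cdot)$ is weakly upper semicontinuous, being an infimum of continuous functionals when restricted to continuous classifiers, which suffice by density.
\end{enumerate}
Combining these gives $\mathscr{R}(f;\mu)\le\mathscr{R}(\mu)\le\mathscr{R}(\nu;\mu)$, so $(f,\nu)$ is a saddle point.

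\textbf{Main obstacle.} The hardest step will be the uniform (in $n$) Lipschitz/equicontinuity control of $f^n$. Classifiers are not unique (\Cref{example: counter example}), so I must select good representatives via the $\overline{c}$-transform, and ensure the modulus depends only on $h$ and the bound on $g$, not on $\mu^n$. The other delicate point is (b): pointwise convergence of $f$ must be upgraded to convergence of integrals against varying measures $\nu'^k$, which is where the locally uniform convergence is essential.
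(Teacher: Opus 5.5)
Your overall architecture works, but step (a) is misdirected. Since $\mathscr{R}(\mu^n)=1-\sup_{g\in\mathcal G}\sum_i\mu_i^n(g_i)$, plugging in the fixed optimal potential $g$ for $\mu$ gives $\mathscr{R}(\mu^n)\le 1-\sum_i\mu_i^n(g_i)\to\mathscr{R}(\mu)$. That is $\limsup_n\mathscr{R}(\mu^n)\le\mathscr{R}(\mu)$, the same inequality you already get from lower semicontinuity of $C$ and compactness of $\{\nu^n\}$. Neither argument gives $\liminf_n\mathscr{R}(\mu^n)\ge\mathscr{R}(\mu)$. That is exactly the direction that fails in \Cref{example: counter example}, so it is where \Cref{assumption: strict convexity of cost function} must enter.

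Your proof is saved by (b), which yields $\mathscr{R}(\mu)\le\mathscr{R}(f;\mu)\le\liminf_k\mathscr{R}(\mu^{n_k})$. Combined with the correct upper bound, this makes $f$ optimal and gives $\mathscr{R}(\mu^{n_k})\to\mathscr{R}(\mu)$; then (c), along a common subsequence, makes $\nu$ optimal. So (b) is the load-bearing step, and as sketched it needs repair. In general $|\mu_i^{n_k}|\ne|\mu_i|$, so there is no coupling of $\mu_i^{n_k}$ with $\mu_i$ to glue along. Moreover, gluing with the endpoints $x'$ held fixed does not control $\int h(x-x')$, since $h$ may grow arbitrarily fast.

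A construction that works is as follows.
\begin{enumerate}
\item[(1)] By Strassen, take $\gamma^k\in\Pi(\mu^{n_k},\mu)$ on $\mathcal X\times\mathcal Y$ that moves points by at most $\delta_k\to0$ and preserves labels, outside a set of mass at most $\delta_k$.
\item[(2)] Move each well-matched point $x$, paired with $\tilde x$ of label $i$, to $x+(\tilde x'-\tilde x)$, where $\tilde x'$ is drawn from an optimal plan for $C(\mu_i,\nu_i')$ conditioned on $\tilde x$.
\item[(3)] Leave the remaining mass in place, at zero cost.
\end{enumerate}
Then $C(\mu^{n_k},\nu'^k)\le C(\mu,\nu')$ and $\nu'^k\to\nu'$ weakly. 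Locally uniform convergence of the equi-Lipschitz $f^{n_k}$ to the continuous $f$, together with tightness, gives $R(f^{n_k},\nu'^k)\to R(f,\nu')$.

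The paper needs no recovery sequence. In \Cref{prop: stability of optimal potentials} it applies Arzel\`a--Ascoli to the potentials $g^n=(f^n)^c$, which are bounded and $L_c$-Lipschitz by \Cref{lem: uniform regularity of potentials}. The limit $\bar g$ is feasible because the constraints defining $\mathcal G$ are pointwise. Tightness gives $\mu_i^n(g_i^n)\to\mu_i(\bar g_i)$, which is exactly the missing bound $\limsup_n\sup_{\mathcal G}\sum_i\mu_i^n(g_i)\le\sup_{\mathcal G}\sum_i\mu_i(g_i)$, and comparing with an arbitrary feasible $\tilde g$ shows $\bar g$ is optimal. Classifiers are then recovered as $\overline{c}$-transforms via \Cref{thm:learner_part}, and attacks are handled as in your (c) via \Cref{cor: continuity of attack}.

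Since you already choose $f^n=(g^n)^{\overline c}$ with $0\le g^n\le1$, running compactness on $g^n$ would give value continuity and optimality of the limit in one stroke, using only linearity of the dual objective. Your primal route is a $\Gamma$-$\liminf$ argument that never touches the dual constraints, but it pays with the recovery-attack construction. Two of your pieces improve on the paper. Your tightness proof via $C(\mu^n,\nu^n)\le1$ and coercivity of $h$ is cleaner than the paper's. Your density remark in (c) correctly patches the paper's treatment of $\nu\mapsto\sum_i\nu_i(f_i)$ as weakly continuous for merely measurable $f$.
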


\begin{proof}
\Cref{prop: stability of optimal potentials} with \Cref{thm:learner_part} and \Cref{cor: continuity of attack}.    
\end{proof}

Another byproduct of our main results is about the generalization error of the adversarial training model. For this, we need to introduce the $W_1$ distance (a.k.a. Kantorovich-Rubinstein distance). Given two probability measures $\mu, \nu \in \mathcal{P}(\mathcal{S})$ over a complete metric space $(\mathcal{S}, d_\mathcal{S})$ with finite first moments, $W_1$ distance is defined as
\[
    W_1(\mu,\nu) := \inf_{\pi \in \Pi(\mu, \nu)} \int d_\mathcal{S}(s_1, s_2) d\pi(s_1, s_2)
\]
It is well known that its dual is
\[
    \sup \left\{ \int \varphi d\sigma : \varphi \in \cap L^1(d |\sigma|), \| \varphi \|_{Lip} \leq 1 \right\}.
\]

On $\mathcal{X} \times \{1, \dots, K\}$, define a metric
\[
    d( (x,y), (x',y'):= d_\mathcal{X}(x,x') + \mathds{1}_{y \neq y'}.
\]
Let $W_{1}$ denote the
corresponding Wasserstein distance with the above $d$.

We show that the generalization error of the adversarial training model is bounded by the $W_1$ distance between $\mu^n$ and $\mu$. Again, a key observation is the Lipschitz regularity of the associated optimal potentials, by which the generalization error and $W_1$ distance are connected.

Once arrive at this point, combining this with recent progress on the convergence rates of empirical measures in Wasserstein distance (see \Cref{thm : upper bound of expected distance} and \Cref{lem:concentration}), one can also obtain the sample complexity and the tail probability of the generalization error.

\begin{theorem}[Generalization error]\label{thm: generalization error}
Assume $c$ satisfies \Cref{assumption: strict convexity of cost function}. Let $f^n$ and $f^*$ be robust classifiers for \eqref{def: distributional model} with inputs $\mu^n$ and $\mu$, respectively. Let
\[
    \Delta(f^n;\mu):=\mathscr{R}(f^n; \mu) - \mathscr{R}(f^*; \mu).
\]
Then, 
\[
   0 \leq \Delta(f^n;\mu) \leq 2 (2 \vee L_c) W_1(\mu, \mu^n)
\]
where $L_c > 0$ is a constant depending on $c$.

Furthermore, the following holds.
\begin{enumerate}
    \item[(i)] If $\mu$ has a finite $3$-rd moment, i.e. $M_3(\mu):=||X||_{L^3(\mu)} < \infty$, then
\begin{equation*}
    \begin{aligned}
    0 \leq \mathbb{E}\Delta(f^n;\mu) \leq 20 d (2 \vee L_c) M_3(\mu)  
    \left\{ 
    \begin{array}{ll}
    n^{-\frac{1}{2}} & \textrm{if $d=1$,}\\
    n^{-\frac{1}{2}}\sqrt{\log (1 + n)} & \textrm{if $d=2$,}\\
    n^{-\frac{1}{d}} & \textrm{if $d \geq 3$.}
    \end{array} 
    \right.
    \end{aligned}
\end{equation*}
Furthermore, for any $n \geq 1, t \in (0,\infty)$ and for each $r \in (0, 3)$
\begin{align*}
    &\mathbb{P}\left( \Delta(f^n;\mu) \geq t \right)\\
    &\leq C
    n (nt)^{-(3 -r)} + C \mathds{1}_{\{t\leq 1\}}
    \left\{\begin{array}{ll}
    \exp(-cnt^2) & \hbox{if $d=1$}, \\[+3pt]
    \exp(-cn(t/\log(2+1/t))^2) & \hbox{if $d=2$}, \\[+3pt]
    \exp(-cn t^{d}) & \hbox{if $d \geq 3$}.
\end{array}\right.
\end{align*}
Here, the positive constants $C$ and $c$ depend only on $d$, $M_3(\mu)$ and $r$.

    \item[(ii)] Assume that $\mu$ has a bounded support with diameter $D$, and $k > d^*_1(\mu) \vee 2$ where $d^*_p(\mu)$ is the upper $p$-Wasserstein dimension for $\mu$. Then there exist constants $C=C(k) > 0$ such that
\begin{align*}
    \mathbb{E} \Delta(f^n;\mu) \leq 2 (2 \vee L_c) D^2  \left(3^{\frac{3k}{k - 2} + 1}\left( \frac{1}{3^{\frac{k}{2} - 1} - 1} + 3 \right) n^{- \frac{1}{k}} + C^{\frac{k}{2}} n^{- \frac{1}{2}} \right). 
\end{align*}
Furthermore, for any $n \geq 1, t \in (0,\infty)$,
\begin{align*}
    &\mathbb{P} \left(  \Delta(f^n;\mu) \geq \mathbb{E}  \Delta(f^n;\mu) + t \right) \leq \exp\left(-2n\frac{t^2}{4D^2} \right).
\end{align*}
\end{enumerate}    
\end{theorem}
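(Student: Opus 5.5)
The plan is to reduce the generalization error to a difference of optimal values and then control that difference through a Lipschitz dual representation. For the lower bound, $\Delta(f^n;\mu)\ge 0$ holds because $f^*$ minimizes $f\mapsto \mathscr{R}(f;\mu)$.

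For the upper bound, I would split
\[
\Delta(f^n;\mu) = \bigl[\mathscr{R}(f^n;\mu)-\mathscr{R}(f^n;\mu^n)\bigr] + \bigl[\mathscr{R}(\mu^n)-\mathscr{R}(\mu)\bigr],
\]
using $\mathscr{R}(f^n;\mu^n)=\mathscr{R}(\mu^n)$ and $\mathscr{R}(f^*;\mu)=\mathscr{R}(\mu)$. Both brackets would then be bounded by $(2\vee L_c)W_1(\mu,\mu^n)$.

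The tool is duality for the inner supremum. For fixed $f$, the decomposition \eqref{def: transport cost} gives
\[
\sup_\nu \bigl(R(f,\nu)-C(\mu,\nu)\bigr)=\sum_i \int \bigl(1-f_i\bigr)^{c}\,d\mu_i = \int \bm{\phi}_f \,d\mu,
\]
where $(1-f_i)^c(x)=\sup_{x'}\{1-f_i(x')-c(x,x')\}$ and $\bm{\phi}_f$ is the lifting \eqref{eq: extension of g_i}. This is the $c$-transform identity, valid since $\nu_i$ ranges over measures of mass $|\mu_i|$.

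The key claim is that $\bm{\phi}_f$ is Lipschitz on $\mathcal{X}\times\mathcal{Y}$ for the metric $d_\mathcal{X}+\mathds{1}_{y\ne y'}$, with constant $2\vee L_c$, uniformly in $f\in\mathcal{F}$. The argument has two parts.
\begin{itemize}
\item[(a)] Since $0\le 1-f_i\le 1$ and $h(0)=0$, we get $0\le\phi_{f,i}\le 1$. So across labels the jump is at most $1\le 2$.
\item[(b)] Because $h\ge 0$ is superlinear (A3), the supremum defining $\phi_{f,i}(x)$ can be restricted to $x'$ with $h(x-x')\le 1$, a bounded set of radius $R_c$. On that region $h$ is convex, hence locally Lipschitz with some constant $L_c$ depending only on $c$. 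A sup of uniformly Lipschitz functions is Lipschitz.
\end{itemize}
Step (b) is the main obstacle. The bounded-range truncation must be checked to persist when $x$ moves slightly. One way is to take $L_c$ as the Lipschitz constant of $h$ on a ball of radius $R_c+1$ and argue for small displacements, then extend by chaining along segments. This is where I expect the paper's ``Lipschitz regularity of potentials'' to enter.

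Given the Lipschitz claim, the first bracket equals $\int\bm{\phi}_{f^n}\,d(\mu-\mu^n)\le (2\vee L_c)W_1(\mu,\mu^n)$ by Kantorovich--Rubinstein duality. For the second bracket, write $\mathscr{R}(\mu)=\inf_f\int\bm{\phi}_f\,d\mu$. A difference of infima is bounded by the supremum over $f$ of the differences, which gives the same bound.

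For (i), I would take expectations and apply \Cref{thm : upper bound of expected distance}, the Fournier--Guillin-type rates under a third moment, to $W_1(\mu,\mu^n)$. Here the label coordinate only adds a bounded discrete factor, absorbed into $M_3$ and the constant. The tail bound follows from \Cref{lem:concentration} applied to $W_1(\mu,\mu^n)\ge t/(2(2\vee L_c))$, after rescaling $t$.

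For (ii), the expectation bound uses the Weed--Bach bound in terms of the upper Wasserstein dimension, applied to $W_1\le W_2$-type estimates on a support of diameter $D$. The concentration uses McDiarmid's inequality. Replacing one sample changes $\Delta(f^n;\mu)$ by a bounded amount of order $D/n$. One caveat: $f^n$ is not a stable function of the data, so rather than working with $\Delta$ directly I would bound the bounded-differences constant through the Lipschitz bound on the potentials, and then state the result as concentration of $\Delta$ around its mean.
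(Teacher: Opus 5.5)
Your deterministic bound $0\le\Delta(f^n;\mu)\le 2(2\vee L_c)W_1(\mu,\mu^n)$ is correct and follows the paper's argument. Your two brackets are the paper's split $[\mu(\bm g^*)-\mu^n(\bm g^*)]+[\mu^n(\bm g^*)-\mu^n(\bm g^n)]+[\mu^n(\bm g^n)-\mu(\bm g^n)]$ with the nonpositive middle term dropped. Your steps (a) and (b) are \Cref{lem: extension of complete metric space} and \Cref{lem: uniform regularity of potentials}, including the small-displacement and chaining device. The expectation bounds and the tail bound in (i) then follow by monotonicity from \Cref{thm : upper bound of expected distance} and \Cref{lem:concentration}, which is all the paper's omitted argument does.

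Two small corrections to (i):
\begin{enumerate}
\item[(1)] The rescaling $t\mapsto t/(2(2\vee L_c))$ makes $C,c$ depend on $L_c$. It also moves the indicator to $t\le 2(2\vee L_c)$, and the expectation constant comes out as $40d(2\vee L_c)M_3(\mu)$.
\item[(2)] If $d$ is meant to be $p$, the label coordinate is not ``absorbed into the constant''. Embedding $\mathcal X\times\mathcal Y$ into a Euclidean space raises the dimension, and hence worsens the rate. To keep the rate of $\mathbb R^p$, split $\sum_i\int g_i\,d(\mu_i-\mu_i^n)$ into class-conditional empirical measures plus a label-frequency term of order $n^{-1/2}$.
\end{enumerate}

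The genuine gap is the concentration inequality in (ii): $\Delta(f^n;\mu)$ does not have bounded differences of order $D/n$. The classifier $f^n$ is an arbitrary minimizer of $f\mapsto\mathscr R(f;\mu^n)$, not determined by the data. After swapping one sample, the new minimizer is only a $2/n$-near-minimizer of the old empirical problem. Two such near-minimizers can have population risks differing by up to $2\sup_{\bm g}|(\mu-\mu^n)(\bm g)|+2/n$, which can be polynomially larger than $D/n$.

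The uniform Lipschitz bound on potentials gives bounded differences only for functionals that are a sup or inf over the fixed class of lifted potentials. Examples are $\mathscr R(\mu^n)$, $W_1(\mu,\mu^n)$, or $S_n:=\sup_{\bm g}|(\mu^n-\mu)(\bm g)|$, where the supremum runs over lifts of $f^c$, $f\in\mathcal F$. It gives nothing for the population risk of the selected minimizer. Concentration of an upper bound $U_n\ge\Delta$ around $\mathbb E U_n$ says nothing about $\Delta$ around the possibly much smaller $\mathbb E\Delta$. What the available ingredients actually yield is
\[
\mathbb P\bigl(\Delta(f^n;\mu)\ge 2(2\vee L_c)(\mathbb E W_1(\mu,\mu^n)+t)\bigr)\le\mathbb P\bigl(W_1(\mu,\mu^n)\ge\mathbb E W_1(\mu,\mu^n)+t\bigr)\le e^{-2nt^2/D^2},
\]
with $D$ the diameter in the product metric. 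Alternatively, $0\le\bm g\le1$ gives $S_n$ bounded differences $1/n$, so $\mathbb P(\Delta\ge 2\mathbb E S_n+2t)\le e^{-2nt^2}$, with no dependence on dimension.

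The version centred at $\mathbb E\Delta$ cannot be rescued for arbitrary robust $f^n$. For instance:
\begin{enumerate}
\item[(a)] Take $\spt\mu_1$ and $\spt\mu_2$ to be far-apart balls and $h(z)=|z|^q$ with $1<q<p/2$.
\item[(b)] Both $f^*$ and the classifier with $f_1^n=\max_{k:Y_k=1}(1-h(X_k-\cdot))_+$ near $\spt\mu_1$ and $f_2^n=1-f_1^n$ have empirical risk $0$, so both are robust for $\mu^n$.
\item[(c)] The latter has $\Delta\gtrsim n^{-q/p}$.
\item[(d)] Selecting it on an event of probability $\asymp n^{-q/p}$ gives a deviation probability that is only polynomially small, whereas the claimed bound decays like $\exp(-cn^{1-2q/p})$.
\end{enumerate}
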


\begin{proof}
\Cref{prop: generalization error for Lischitz cost} and \Cref{cor: sample complexity of generalization error}.    
\end{proof}

\section{Preliminaries}

\subsection{Generalized barycenters, MOT reformulations, and dual}

In this subsection, we collect the optimal transport formulations of the adversarial training problem that will be used in most of the subsequent proofs. We describe the equivalence among the distributional adversarial model, the generalized barycenter problem, the stratified multimarginal optimal transport (MOT) formulation, and the corresponding dual problem. We refer the reader to \cite{jakwang_MOT, jakwang_2024existence} for further technical details.

\citet{jakwang_MOT} show that \eqref{def: distributional model} is equivalent to a \emph{generalized barycenter problem}, which extends the classical Wasserstein barycenter formulation. A key feature of this reformulation is that it localizes the learner’s optimization, thereby reducing the original minimax problem to a maximization problem for the adversary—or, equivalently, to a minimization problem after a change of sign. The resulting formulation is more tractable than the original minimax problem.

To establish the existence of solutions, we impose the following conditions on the cost function c. Notably, these conditions allow for the singular cost defined in \eqref{def:CostEpsilon}.

\begin{assumption}\cite[Assumption 1]{jakwang_MOT}\label{assumption: cost function}
The cost $c: \X \times \X \rightarrow[0, \infty]$ satisfies the following:
\begin{itemize}
    \item lower semicontinuous, symmetric and $c(x,x)=0$ for all $x\in \X$.
    \item coercivity, i.e., if $\{ x_n \}$ is a bounded sequence, and $\{ x_n' \}$ is a sequence such that $\sup_{n \in \mathbb{N}} c(x_n, x_n')< \infty$,
    then $\{(x_n, x_n') \}_{n \in \mathbb{N}}$ is relatively compact in $\X \times \X$ (endowed with the product topology).
\end{itemize}
\end{assumption}
From now on, we always assume that $c$ satisfies \Cref{assumption: cost function}.

\begin{remark}
If $c$ satisfies \Cref{assumption: strict convexity of cost function}, then it satisfies \Cref{assumption: cost function}. 
\end{remark}

Define the generalized barycenter problem
\begin{equation}\label{def: generalized barycenter problem}
    \sup_{\lambda, \nu_1, \ldots, \nu_i} \left\{ 1 - \left( |\lambda| + \sum_{i \in \mathcal{Y}}C(\mu_i, \nu_i) \right)  : \text{ $\lambda \geq \nu_i$ for all $i \in \mathcal{Y}$} \right\}.
\end{equation}
Then $\eqref{def: distributional model} = \eqref{def: generalized barycenter problem}$, and furthermore the infimum of \eqref{def: generalized barycenter problem} is attained (\cite[Proposition 7 and Corollary 32]{jakwang_MOT}).

One can reduce \eqref{def: generalized barycenter problem} to the \emph{stratified} barycenter problem, which plays an important role in connecting adversarial training models to MOT (\cite[Proposition 11]{jakwang_MOT}).

Let 
\begin{equation}\label{eq: interaction set}
    S_\mathcal{Y} :=\{A \subseteq \mathcal{Y} : A \neq \emptyset \}, \quad S_\mathcal{Y}(i) :=\{A \subseteq \mathcal{Y} : i \in A \}.
\end{equation}
It is shown that \eqref{def: generalized barycenter problem} is equivalent to
\begin{equation}\label{def: stratified system of barycenter problem}
    \sup_{ \{\lambda_A, \mu_{i,A}\}} \left\{ 1 - \sum_{A \in S_\mathcal{Y}}\left( |\lambda_A| + \sum_{i \in A}C(\mu_{i,A}, \lambda_A) \right)  : \sum_{A \in S_\mathcal{Y}(i)} \mu_{i, A} = \mu_i \text{ for all $i \in \mathcal{Y}$} \right\}.
\end{equation}

As stated above, \eqref{def: stratified system of barycenter problem} is equivalent to the system of stratified MOT problems, which is the generalization of the equivalence between the classical Wasserstein barycenter problem and a single MOT \cite{Carlier2010MatchingFT, Agueh_Carlier2011, Carlier2015}. This equivalence enables explicit computation of the generalized barycenter, and consequently of \eqref{def: distributional model}, via solving the corresponding MOT (\cite[Propositions 14 and 15]{jakwang_MOT}).

For each $A \in S_\mathcal{Y}$, let $x_A:=(x_i)_{i \in A}$, and define $c_A: (\mathbb{R}^p)^A \to [0,\infty]$ as $c_A(x_A):=\inf_{x' \in \mathbb{R}^p} \sum_{i\in A} c(x', x_i)$. Consider the problem:
\begin{equation}\label{eq:multimarginal_decomposed}
\begin{aligned}
    &\sup_{ \{ \pi_A : A \in S_\mathcal{Y}\} } \left\{ 1 - \sum_{A\in S_\mathcal{Y}} \int_{(\mathbb{R}^p)^A} \left( 1 + c_A(x_A) \right) d\pi_A(x_A) \right\}\\
    &\textup{s.t.} \sum_{A\in S_\mathcal{Y}(i)}\mathcal{P}_{i\,\#}\pi_A=\mu_i  \textup{ for all } i\in \mathcal{Y},
\end{aligned}
\end{equation}
where $\mathcal{P}_i$ is the projection map $\mathcal{P}_i: (x_1, \ldots, x_K)\mapsto x_i$. Then \eqref{def: generalized barycenter problem} = \eqref{eq:multimarginal_decomposed}. Also, the supremum in \eqref{eq:multimarginal_decomposed} is attained.

\begin{remark}
Once \eqref{eq:multimarginal_decomposed} is solved, the optimal adversarial attacks can be obtained as follows: if $\{\pi^*_A : A \in S_\mathcal{Y} \}$ is optimal for \eqref{eq:multimarginal_decomposed}, then an optimal adversarial attack $(\nu^*_i)_{i}$ is constructed by
\[
    \nu^*_i =  \sum_{A\in S_\mathcal{Y}(i)}(T_{A})_\# (\pi^*_A)
\]
where $T_A(x_A) : x_A \mapsto \argmin \sum_{i\in A} c(x', x_i)$ is any measurable map.    
\end{remark}

The MOT problem admits a dual formulation naturally interpreted as the learner’s problem, since its primal corresponds to the adversary’s problem. Dual potentials and robust classifiers are linked through the $\overline{c}$-transform and $c$-transform. This correspondence makes MOT duality central to the statistical and learning-theoretic analysis of \eqref{def: distributional model}.

Let us introduce $c$-transform (resp. $\overline{c}$-transform): given functions $f$ and $g$, the $c$-transform of $f$
 and $\overline{c}$-transform of $g$ are given as
 \begin{equation}\label{eq:c_transform}
     f^c(x) := \inf_{x'} \left\{ f(x') + c(x,x') \right\}, \quad g^{\overline{c}}(x):= \sup_{x'} \left\{ g(x') - c(x,x') \right\}.
 \end{equation}
Note that $f^c$ and $g^{\overline{c}}$ are Borel measurable if $c$ is continuous.

\begin{lemma}{\cite[Propositions 22 and 24]{jakwang_MOT}}
\label{thm:learner_part}
Assume that $c$ satisfies \Cref{assumption: cost function}. Consider
\begin{equation}\label{eq:barycenter_dual}
\begin{aligned}
    \inf_{f \in \mathcal{F}} \sum_{i \in \mathcal{Y}} \int (1-f_i^c(x_i)) d\mu_i(x_i),
\end{aligned}
\end{equation}
where $\mathcal{F}$ is defined as \eqref{def: solution space}, and
\begin{equation}\label{eq:mot_decomposed_dual}
\begin{aligned}
    \inf_{g=(g_i)_{i \in \mathcal{Y}} \in \mathcal{G}} \sum_{i \in \mathcal{Y}} \int (1-g_i(x_i)) d\mu_i(x_i)    
\end{aligned}
\end{equation}
where
\begin{equation}\label{def: dual potential space}
    \mathcal{G}:= \left\{g \in C_b^K: \sum_{i\in A} g_i(x_i)\leq 1+c_A(x_A)  \textup{ for all }  x_A \in \mathcal{X}^A, A \in S_\mathcal{Y} \right\},
\end{equation}
respectively (recall \eqref{eq: interaction set} for $S_\mathcal{Y}$). Then,
\[
    \eqref{def: distributional model} = \eqref{eq:barycenter_dual} =  \eqref{eq:mot_decomposed_dual}.
\]
In particular, it is sufficient to restrict $0 \leq f_i, g_i \leq 1$.

If $f$ and $g$ are feasible for \eqref{eq:barycenter_dual} and \eqref{eq:mot_decomposed_dual} respectively, then
\[
    f'_i := g_i^{\overline{c}}, \quad g'_i := f^c_i
\]
are feasible for \eqref{eq:barycenter_dual} and \eqref{eq:mot_decomposed_dual}, respectively. In particular, if there is an optimal solution $f$ for \eqref{eq:barycenter_dual}, then it is also optimal for \eqref{def: distributional model}.

%If, in addition, the cost function $c$ is bounded and Lipschitz, then the optimal value of \eqref{eq:barycenter_dual} and \eqref{eq:mot_decomposed_dual} is achieved by a bounded and Lipschitz function with the same Lipschitz constant of $c$. 
\end{lemma}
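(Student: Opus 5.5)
The plan is to prove the three equalities by first identifying \eqref{eq:barycenter_dual} with \eqref{def: distributional model} directly. Then the two transforms $g\mapsto g^{\overline c}$ and $f\mapsto f^c$ should give maps between the feasible sets of \eqref{eq:mot_decomposed_dual} and \eqref{eq:barycenter_dual} that never worsen the objective, which forces the two infima to coincide.

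\textbf{Step 1: \eqref{def: distributional model} $=$ \eqref{eq:barycenter_dual}.} Fix $f\in\mathcal F$. Because of \eqref{def: transport cost}, the inner supremum over $\nu$ splits class by class:
\[
\sup_{\nu}\mathscr R(f,\nu;\mu)=\sum_{i\in\mathcal Y}\sup_{\nu_i:\,|\nu_i|=|\mu_i|}\Big\{\int(1-f_i)\,d\nu_i-C(\mu_i,\nu_i)\Big\}.
\]
Writing $\nu_i$ as the second marginal of a coupling $\pi_i\in\Pi(\mu_i,\nu_i)$, each term is a supremum over couplings with first marginal $\mu_i$ of $\int\{1-f_i(x')-c(x,x')\}\,d\pi_i$. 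The inequality $\le\int\sup_{x'}\{1-f_i(x')-c(x,x')\}\,d\mu_i(x)=\int(1-f_i^c)\,d\mu_i$ is immediate. The reverse inequality needs an $\eta$-optimal measurable selection $x\mapsto x'(x)$, taking $\pi_i=(\mathrm{id},x')_\#\mu_i$. Taking $\inf_f$ then gives \eqref{def: distributional model} $=$ \eqref{eq:barycenter_dual}, and it also shows that any minimizer of \eqref{eq:barycenter_dual} is a robust classifier, which is the final claim of the lemma.

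\textbf{Step 2: the transforms preserve feasibility and improve the objective.} Take $f\in\mathcal F$ and set $g_i:=f_i^c$. For $A\in S_\mathcal Y$, any $x'$, and any $x_A$,
\[
\sum_{i\in A}f_i^c(x_i)\le\sum_{i\in A}\big(f_i(x')+c(x',x_i)\big)\le 1+\sum_{i\in A}c(x',x_i),
\]
using $\sum_i f_i\le1$. Taking the infimum over $x'$ gives $\sum_{i\in A}g_i(x_i)\le 1+c_A(x_A)$, and the objectives of $f$ and $g$ coincide.

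Conversely, take $g\in\mathcal G$ and set $f_i:=(g_i^{\overline c})_+\wedge 1$. Fix $x$ and let $A$ be the set of indices with $g_i^{\overline c}(x)>0$. Choose near-optimal points $x_i$ for $i\in A$. Then
\[
\sum_{i\in A}g_i^{\overline c}(x)\approx\sum_{i\in A}\big(g_i(x_i)-c(x,x_i)\big)\le 1+c_A(x_A)-\sum_{i\in A}c(x,x_i)\le 1,
\]
since $c_A(x_A)\le\sum_{i\in A}c(x,x_i)$. So $\sum_i f_i\le1$, and any leftover mass can be assigned to one class to land in $\Delta_\mathcal Y$ without decreasing the $f_i$. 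Moreover $f_i^c(x)=\inf_{x'}\{f_i(x')+c(x,x')\}\ge g_i(x)$, provided $0\le g_i\le1$, which we may assume. Hence the objective for $f$ is no larger than the objective for $g$, and the two infima agree.

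\textbf{Step 3: restricting to $[0,1]$.} Replacing $g_i$ by $(g_i\vee0)\wedge1$ keeps the constraints, using $c_A\ge0$ and the fact that a single $g_i>1$ at a diagonal point would violate the constraint for $A=\{i\}$. It also does not increase the objective, because negative values of $g_i$ only hurt. An analogous truncation applies to $f$.

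\textbf{Main obstacle.} The delicate points are measurability and regularity. In Step 1 we need the measurable selection, for which lower semicontinuity of $c$ together with coercivity from \Cref{assumption: cost function} should suffice. In Step 2 we need $f^c$ to lie in $C_b$ as required by \eqref{def: dual potential space}, whereas for lower semicontinuous $c$ it is only semicontinuous. I would handle this by approximating $c$ from below by continuous costs, or by enlarging $\mathcal G$ to bounded Borel functions and using the known MOT duality \eqref{eq:multimarginal_decomposed} $=$ \eqref{eq:mot_decomposed_dual} to see that the value is unchanged.
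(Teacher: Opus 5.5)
The paper gives no argument of its own here; it imports the lemma from \cite{jakwang_MOT}. Its surrounding text frames the result as MOT duality: \eqref{def: distributional model} is identified with \eqref{def: generalized barycenter problem} and \eqref{eq:multimarginal_decomposed}, the dual of the latter is \eqref{eq:mot_decomposed_dual}, and the $c$/$\overline{c}$-transforms link potentials to classifiers. You instead stay on the learner's side throughout. Since \eqref{def: distributional model} is by definition an $\inf_f\sup_\nu$, your Step 1 computes the inner supremum class by class with no minimax theorem, and Step 2 compares the two feasible sets pointwise.

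Step 1, the direction $g\mapsto f$ of Step 2, and Step 3 are correct, and the claim about robust classifiers does fall out of Step 1. In Step 3 the property you actually need is $c_{A'}\le c_A$ for $A'\subseteq A$, which follows from $c\ge 0$.

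The payoff is a duality-free proof whenever $f^c\in C_b$. This holds as soon as $c$ is finite and continuous. The $\eta$-minimizers $x_n'$ for $f^c(x_n)$ satisfy $c(x_n,x_n')\le 1+\eta$, so coercivity keeps them in a compact set on which $c$ is uniformly continuous. That gives lower semicontinuity of $f^c$, and upper semicontinuity is automatic. Under \Cref{assumption: strict convexity of cost function}, \Cref{lem: uniform regularity of potentials} even gives a Lipschitz bound. So for every cost used later in the paper your argument is complete and elementary. What it does not give is the identification with the adversary's problem \eqref{def: generalized barycenter problem}, i.e.\ the minimax equality; the duality route delivers that at the same time.

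For a general lower semicontinuous $c$, the direction $f\mapsto f^c$ genuinely breaks, and worse than you say. For $c_\varepsilon$ one has $f^c(x)=\inf_{\overline{B}_\varepsilon(x)}f$. For Borel $f$ this need be neither upper nor lower semicontinuous, and it is a priori only universally measurable. For example, $f_1=\mathds{1}_{[0,\infty)}$ on $\mathbb{R}$ gives the discontinuous $f_1^c=\mathds{1}_{[\varepsilon,\infty)}$. So the lemma's claim that $f^c$ is feasible for \eqref{eq:mot_decomposed_dual} can only mean that it satisfies the constraints.

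Your second fix is correct. Weak duality bounds $\sum_i\int g_i\,d\mu_i$, over bounded measurable $g$ satisfying the constraints, by $\inf_{\{\pi_A\}}\sum_A\int(1+c_A)\,d\pi_A$, and strong duality says this bound is already approached within $\mathcal{G}$. But that strong duality is precisely the substance behind the citation. Combined with the equalities recorded just before the lemma, it yields \eqref{def: distributional model} $=$ \eqref{eq:mot_decomposed_dual} on its own, so your Step 2 then contributes only the feasibility statements.

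Your first fix does not work as stated. The inclusion $\mathcal{G}_{c_k}\subseteq\mathcal{G}_c$ goes the right way, but $c_k\le c$ gives $f^{c_k}\le f^c$. The bound you obtain is therefore in terms of the learner's value for $c_k$, which is on the wrong side of the learner's value for $c$. You would need a separate argument that the learner's value is continuous as $c_k\uparrow c$, and for Borel $f$ even $f^{c_k}\to f^c$ can fail.

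Finally, coercivity is not what the selection in Step 1 needs. A Jankov--von Neumann uniformization of the Borel sets $\{f_i(x')+c(x,x')<q\}$, followed by modification on a $\mu_i$-null set, suffices. Where coercivity is really used is Step 2: for $g\in C_b$ it makes $g_i^{\overline{c}}$ upper semicontinuous, hence Borel, so that your $f$ lies in $\mathcal{F}$.
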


\subsection{Optimal partial transport}\label{sec: optimal partial transport}
Recalling the MOT formula \eqref{eq:multimarginal_decomposed} for the binary setting, a straightforward calculation yields
\begin{align*}
    &1 - \sum_{A\in S_\mathcal{Y}} \int \left( 1 + c_A(x_A) \right) d\pi_A(x_A)\\
    &= (|\mu_1| - |\pi_1| )+ (|\mu_2|  - |\pi_2|) - \int \left( 1 + c_{1,2}(x_1,x_2) \right) d\pi_{1,2}(x_1,x_2)\\
    &= |(\mathcal{P}_1)_\# \pi_{1,2}| + |(\mathcal{P}_2)_\# \pi_{1,2}| - \int \left( 1 + c_{1,2}(x_1,x_2) \right) d\pi_{1,2}(x_1,x_2)\\
    &= 2|\pi_{1,2}| - \int \left( 1 + c_{1,2}(x_1,x_2) \right) d\pi_{1,2}(x_1,x_2)\\
    &= - \int \left(c_{1,2}(x_1,x_2) - 1\right) d\pi_{1,2}(x_1,x_2).
\end{align*}
Here 
\begin{equation}\label{eq: c_1,2}
    c_{1,2}(x_1, x_2) := \inf_{x'} c(x_1, x') + c(x_2, x')
\end{equation}
This shows that \eqref{eq:multimarginal_decomposed} is equivalent to
\begin{equation}\label{eq: partial OT form of adversarial training}
    \inf_{\pi_{1,2}}\int \left(c_{1,2}(x_1,x_2) - 1\right) d\pi_{1,2} \text{ s.t. } (\mathcal{P}_i)_\# \pi_{1,2} \leq \mu_i \text{ for $i=1,2$,}
\end{equation}
where the constraints should be understood as for any Borel measurable set $A$,
\[
    ((\mathcal{P}_i)_\# \pi_{1,2}) (A) \leq \mu_i(A).
\]

Problem \eqref{eq: partial OT form of adversarial training} is an instance of optimal partial transport, introduced by \citet{CM2010Annalsmath} and subsequently studied extensively in the optimal transport literature; see, for example, \cite{Figalli2010ARMA, FreePOT_Indrei2013, MultiPOT2015, DynamicPOT2016, POT_lagrangian2018}. Introducing a Lagrange multiplier $\alpha \in \mathbb{R}$, which acts as a threshold for the transport cost, we can reformulate the optimal partial transport problem as the minimization of the following functional:
\begin{equation} \label{def: partial transport}
    \mathcal{V}(\alpha) :=\min_{\pi\in\Pi_\leq(\mu_1, \mu_2)}  \int (c(x,x') - \alpha) d\pi(x,x')
\end{equation}       
where
\[
    \Pi_\leq(\mu_1, \mu_2):= \{ \pi : (\mathcal{P}_i)_\# \pi \leq \mu_i \text{ for } i=1,2 \}.
\]
If $\pi_\alpha$ is a unique minimizer for \eqref{def: partial transport}, letting $m(\alpha):= |\pi_\alpha|$, one can deduce that $m(\alpha) = - \frac{\partial \mathcal{V}(\alpha)}{\partial \alpha}$, which increases continuously from $0$ to $\min \{|\mu_1|, |\mu_2| \}$. Also, as in standard optimal transport, the dual of \eqref{def: partial transport} is studied because it provides rich information about the primal problem, which is given as follows:
\begin{equation}\label{def: dual of partial transport}
    \sup_{\substack{ \psi \oplus \phi \leq c - \alpha,   \\ \psi, \phi \leq 0}} \int\psi d\mu_1 + \int \phi d\mu_2.    
\end{equation}
We refer the readers to \cite{CM2010Annalsmath} for duality and uniqueness for the partial transport plan.

\begin{remark}
The equivalence between \eqref{def: distributional model} and \eqref{eq: partial OT form of adversarial training} is implicitly studied in \cite{jakwang_MOT}. \citet{CM2010Annalsmath} show that an optimal partial transport problem is equivalent to a standard optimal transport problem by adjoining an isolated point to balance the total masses of $\mu_1$ and $\mu_2$ and extending the cost function appropriately. A version of this reduction is discussed and used in \cite[Theorem 6]{jakwang_MOT}.
\end{remark}

There is an alternative expression of \eqref{def: partial transport} studied in \cite{CM2010Annalsmath, Figalli2010ARMA, DynamicPOT2016}:
\begin{equation}\label{def: partial transport_equiv}
    \mathcal{U}(m) :=\min_{\pi\in\Pi_\leq(\mu_1, \mu_2;m)}  \int c(x,x')  d\pi(x,x')
\end{equation}       
where
\[
    \Pi_\leq(\mu_1, \mu_2;m):= \{ \pi : |\pi| = m, (\mathcal{P}_i)_\# \pi \leq \mu_i \text{ for } i=1,2 \}.
\]
For each $\alpha \in \mathbb{R}$, if \eqref{def: partial transport} has a unique minimizer, then there is a unique $m(\alpha)=|\pi_\alpha|$, the total amount of partially transported mass from $\mu_1$ to $\mu_2$ given $\alpha$, such that \eqref{def: partial transport} and \eqref{def: partial transport_equiv} share the same solution, and
\begin{equation}\label{eq: equivalence of partial transport}
    \mathcal{V}(\alpha) + \alpha m(\alpha)= \mathcal{U}(m(\alpha)).
\end{equation}

The regularity theory for optimal partial transport with quadratic cost, including uniqueness results for both primal and dual optimizers, has been studied through the Monge-Ampère double obstacle problem in \cite{CM2010Annalsmath} with the quadratic-cost setting. Uniqueness holds under the assumption that $\mu_1$ and $\mu_2$ are supported on bounded, strictly convex domains separated by a hyperplane; see \cite[Corollary 6.4]{CM2010Annalsmath}. \citet{Figalli2010ARMA} relaxes the separation assumption and, among other results, shows that the active region is path-connected; see \cite[Corollary 4.12]{Figalli2010ARMA}. For further results on the regularity of solutions to optimal partial transport problems, we refer to \cite{FreePOT_Indrei2013, DynamicPOT2016}.

The equivalence among \eqref{def: distributional model}, \eqref{eq:multimarginal_decomposed} and \eqref{def: partial transport} (as well as \eqref{def: partial transport_equiv}) plays an important role in establishing the uniqueness of the optimal potential (up to additive constants in the degenerate case). To the best of our knowledge, uniqueness of optimal potentials for optimal partial transport with general cost functions does not follow directly from the existing literature.

\section{Proofs}\label{sec:proofs}

\subsection{Stability of optimal potentials}\label{subsec: stability of optimal potentials}

In this subsection, optimal potentials are chosen in the normalized
$c$-concave form
\[
    g_i=f_i^c,
    \quad 0\leq f_i\leq1,
\]
which is possible by \Cref{thm:learner_part}.

The following lemma is similar to \cite[Theorem 3.3]{gangbo1996geometry}, which states the local Lipschitz property of a $c$-concave function when $c$ satisfies \Cref{assumption: strict convexity of cost function}. It was recently leveraged in \cite[Lemma 2.3]{CLT_general_transport2024} as one of the most important ingredients for the CLT of optimal transport. The boundedness of the potentials allows us to extend local Lipschitz continuity to a global Lipschitz bound.

\begin{lemma}
\label{lem: uniform regularity of potentials}
Assume that $c$ satisfies
\Cref{assumption: strict convexity of cost function}.
Then there exists a constant $L_c<\infty$, depending only on $c$,
such that, for every Borel measurable function
$f:\mathbb R^p\to[0,1]$, its $c$-transform
\[
    f^c(x)
    :=
    \inf_{y\in\mathbb R^p}
    \{f(y)+c(x,y)\}
\]
is $L_c$-Lipschitz.
\end{lemma}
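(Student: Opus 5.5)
The key observation is that the infimum defining $f^c$ can be localized. Since $0\le f\le 1$ and $c\ge 0$ with $c(x,x)=h(0)=0$, we have $0\le f^c(x)\le f(x)+c(x,x)\le 1$. Moreover, only points $y$ with $c(x,y)\le 1$ matter: if $h(x-y)>1$, then $f(y)+c(x,y)>1\ge f^c(x)$. By (A3) together with continuity and strict convexity of $h$, the sublevel set $\{z: h(z)\le 1\}$ is compact, say contained in the closed ball $\overline{B}_R(0)$ with $R=R(h)$. Hence
\[
    f^c(x)=\inf_{|x-y|\le R}\{f(y)+c(x,y)\}
\]
for every $x$. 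The radius $R$ depends only on $h$, not on $f$ or $x$.

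Next I use that $h$ is convex and finite on $\mathbb R^p$, so it is locally Lipschitz. Let $L_c$ be its Lipschitz constant on the compact ball $\overline{B}_{R+1}(0)$; this depends only on $c$. Fix $x,x'$ with $|x-x'|\le 1$. For any $y$ with $|x-y|\le R$, both $x-y$ and $x'-y$ lie in $\overline{B}_{R+1}(0)$, so
\[
    f^c(x')\le f(y)+h(x'-y)\le f(y)+h(x-y)+L_c|x-x'|.
\]
Taking the infimum over such $y$ and using the localization gives $f^c(x')\le f^c(x)+L_c|x-x'|$. The roles of $x$ and $x'$ can be swapped, since the localization holds at every point. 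So $f^c$ is $L_c$-Lipschitz on pairs with $|x-x'|\le 1$.

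For arbitrary $x,x'$, I subdivide the segment $[x,x']$ into pieces of length at most $1$ and sum the estimates with the triangle inequality. This yields the global bound $|f^c(x)-f^c(x')|\le L_c|x-x'|$. Measurability of $f$ plays no role, since only the pointwise infimum is used.

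The main point needing care is the compactness of the sublevel set $\{h\le 1\}$, which is where superlinearity (A3) enters; it is immediate because $h(z)/|z|\to\infty$ forces $h(z)>1$ for $|z|$ large. I do not expect to need (A1) beyond convexity, which gives local Lipschitzness, nor (A2) at all. Those hypotheses are what \cite[Theorem 3.3]{gangbo1996geometry} uses in the unbounded setting, where no a priori localization is available. Here the boundedness $0\le f\le1$ is exactly what replaces it.
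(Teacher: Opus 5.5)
Your proof is correct and follows the paper's argument step for step. You localize the infimum using $0\le f\le 1$ and superlinearity, take the Lipschitz constant of $h$ on $\overline{B}_{R+1}(0)$, prove the estimate for $|x-x'|\le 1$, and chain along the segment. The only cosmetic difference is that the paper works with $\varepsilon$-near-minimizers and the sublevel set $\{h\le 2\}$, whereas you localize the infimum exactly to a ball via $\{h\le 1\}$; both versions are valid.
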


\begin{proof}
Write $c(x,y)=h(x-y)$. By the superlinearity of $h$, there exists
$R>0$ such that
\[
    \{z\in\mathbb R^p:h(z)\leq2\}
    \subset \overline B_R(0).
\]
Since $h$ is finite and convex on $\mathbb R^p$, it is locally
Lipschitz. Hence there exists $L_c<\infty$, depending only on $h$,
such that
\[
    |h(z)-h(z')|
    \leq
    L_c|z-z'|
    \quad
    \text{for all }
    z,z'\in\overline B_{R+1}(0).
\]

Since $f,h\geq0$ and $h(0)=0$,
\[
    0\leq f^c(x)\leq f(x)\leq1
    \quad
    \text{for every }x\in\mathbb R^p.
\]
Fix $x,x'\in\mathbb R^p$ with $|x-x'|\leq1$. For
$\varepsilon\in(0,1)$, choose $y_\varepsilon$ such that
\[
    f(y_\varepsilon)+h(x-y_\varepsilon)
    \leq
    f^c(x)+\varepsilon.
\]
Then
\[
    h(x-y_\varepsilon)
    \leq
    f^c(x)+\varepsilon
    <2,
\]
and therefore $|x-y_\varepsilon|\leq R$.
In particular, $x-y_\varepsilon,\,
    x'-y_\varepsilon
    \in\overline B_{R+1}(0)$.
Thus,
\begin{align*}
    f^c(x')
    &\leq
    f(y_\varepsilon)+h(x'-y_\varepsilon)\\
    &\leq
    f(y_\varepsilon)+h(x-y_\varepsilon)
    +L_c|x-x'|\\
    &\leq
    f^c(x)+\varepsilon+L_c|x-x'|.
\end{align*}
Letting $\varepsilon\downarrow0$ gives
\[
    f^c(x')-f^c(x)
    \leq
    L_c|x-x'|.
\]
Interchanging $x$ and $x'$ yields
\[
    |f^c(x')-f^c(x)|
    \leq
    L_c|x-x'|
\]
whenever $|x-x'|\leq1$.

For arbitrary $x,x'\in\mathbb R^p$, subdividing the line segment
joining $x$ and $x'$ into finitely many segments of length at most one
and applying the preceding estimate on each segment gives
\[
    |f^c(x')-f^c(x)|
    \leq
    L_c|x-x'|,
\]
and we conclude.
\end{proof}

\begin{proposition}[Stability of optimal potentials]
\label{prop: stability of optimal potentials}
Assume that $c$ satisfies
\Cref{assumption: strict convexity of cost function}.
Let $\eta^n,\eta\in\mathcal P(\mathcal X\times\mathcal Y)$ satisfy
\[
    \eta^n\rightharpoonup\eta
    \quad\text{as }n\to\infty.
\]
For each $n$, let
\[
    g^n=(g_i^n)_{i\in\mathcal Y}
\]
be an optimal potential for \eqref{eq:mot_decomposed_dual} with input
$\eta^n$, chosen so that, for every $i\in\mathcal Y$,
\[
    g_i^n=(f_i^n)^c
\]
for some Borel measurable function
$f_i^n:\mathbb R^p\to[0,1]$. Then every subsequence of $\{g^n\}$ admits a further subsequence
converging locally uniformly on $\mathbb R^p$ to an optimal potential
for \eqref{eq:mot_decomposed_dual} with input $\eta$.

In particular, if the optimal potential for the problem with input
$\eta$ is unique, say $g=(g_i)_{i\in\mathcal Y}$, then
\[
    g_i^n\longrightarrow g_i
    \quad
    \text{locally uniformly on }\mathbb R^p
\]
for every $i\in\mathcal Y$.
\end{proposition}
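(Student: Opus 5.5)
Our plan is a compactness argument via Arzel\`a--Ascoli, followed by passing to the limit in feasibility and in optimality separately. By \Cref{lem: uniform regularity of potentials}, each $g_i^n=(f_i^n)^c$ is $L_c$-Lipschitz with $L_c$ independent of $n$. Since $0\le f_i^n\le 1$, we also have $0\le g_i^n\le 1$. Given any subsequence, Arzel\`a--Ascoli on the balls $\overline B_R(0)$, combined with a diagonal argument over $R\in\mathbb N$, yields a further subsequence with $g_i^{n}\to g_i$ locally uniformly for every $i\in\mathcal Y$. The limit $g_i$ is $L_c$-Lipschitz, takes values in $[0,1]$, and in particular lies in $C_b$.

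Feasibility of the limit follows from pointwise convergence. For each $A\in S_\mathcal Y$ and each $x_A$, the constraint $\sum_{i\in A}g_i^n(x_i)\le 1+c_A(x_A)$ holds for every $n$. Letting $n\to\infty$ gives the same inequality for $g$, so $g\in\mathcal G$ as defined in \eqref{def: dual potential space}.

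For optimality we compare values, writing $V(\eta)$ for the value of \eqref{eq:mot_decomposed_dual} with input $\eta$. The main step is to show
\[
\int \bm g^n\,d\eta^n\to\int \bm g\,d\eta ,
\]
where $\bm g$ is the lift \eqref{eq: extension of g_i}. Weak convergence $\eta^n\rightharpoonup\eta$ on $\mathcal X\times\mathcal Y$ implies $\eta^n_i\rightharpoonup\eta_i$ for each $i$, because $\mathcal Y$ is discrete and $\mathds 1_{y=i}$ is continuous. We split
\[
\int g_i^n\,d\eta_i^n-\int g_i\,d\eta_i=\int (g_i^n-g_i)\,d\eta_i^n+\Bigl(\int g_i\,d\eta_i^n-\int g_i\,d\eta_i\Bigr).
\]
The second term vanishes because $g_i$ is bounded and continuous. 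For the first term we use tightness of $\{\eta^n\}$ (Prokhorov): given $\delta>0$, choose $R$ with $\eta_i^n(\overline B_R^c)<\delta$ for all $n$. Local uniform convergence on $\overline B_R$, together with the uniform bound $|g_i^n-g_i|\le 1$ off $\overline B_R$, then makes this term at most $o(1)+\delta$. Consequently $V(\eta^n)=|\eta^n|-\int\bm g^n\,d\eta^n\to 1-\int\bm g\,d\eta$, which is the objective of \eqref{eq:mot_decomposed_dual} at $g$ with input $\eta$.

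It remains to show that this limit equals $V(\eta)$. Since $g$ is feasible, $1-\int\bm g\,d\eta\ge V(\eta)$. For the reverse inequality we use an optimal $\tilde g$ for input $\eta$. By \Cref{thm:learner_part} it can be taken of the form $\tilde f^c$, hence $L_c$-Lipschitz and bounded. As a feasible competitor for each $\eta^n$, it gives $V(\eta^n)\le 1-\int\tilde{\bm g}\,d\eta^n\to V(\eta)$ by weak convergence. Combining the two inequalities, $g$ is optimal.

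The main obstacle is the existence of an optimal $\tilde g$ for input $\eta$. If attainment is not available, we would replace $\tilde g$ by near-optimal feasible potentials. These can likewise be normalized as $c$-transforms and hence chosen bounded and continuous, and the same argument applies with an extra $\varepsilon$. The ``in particular'' statement then follows by the usual subsequence argument: if the whole sequence did not converge to the unique $g$ locally uniformly, some subsequence would stay a fixed distance away from $g$ on some ball, contradicting the existence of a further subsequence converging to an optimal potential, which must equal $g$.
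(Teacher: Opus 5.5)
Your proposal is correct and follows essentially the same route as the paper: Arzel\`a--Ascoli from the uniform $L_c$-Lipschitz and $[0,1]$ bounds, passing to the limit pointwise in the constraints, and a tightness argument giving $\sum_i\eta_i^n(g_i^n)\to\sum_i\eta_i(g_i)$. The ``main obstacle'' you flag does not actually arise: every element of $\mathcal G$ already lies in $C_b^K$ by \eqref{def: dual potential space}, so the paper tests the inequality $\sum_i\eta_i^n(g_i^n)\ge\sum_i\eta_i^n(\tilde g_i)$ against an arbitrary feasible $\tilde g$ and passes to the limit by weak convergence, which needs neither attainment for $\eta$ nor a $c$-transform normalization of $\tilde g$.
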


\begin{proof}
By \Cref{lem: uniform regularity of potentials}, the functions
$\{g_i^n\}_n$ are uniformly bounded and equi-Lipschitz on
$\mathbb R^p$. Hence, by the Arzelà--Ascoli theorem and a diagonal
argument, every subsequence of $\{g^n\}$ admits a further subsequence,
still denoted by $\{g^n\}$, such that
\[
    g_i^n\longrightarrow\bar g_i
    \quad
    \text{locally uniformly on }\mathbb R^p
\]
for every $i\in\mathcal Y$. Moreover,
$0\leq\bar g_i\leq1$ and $\bar g_i$ is $L_c$-Lipschitz.

The dual constraints are preserved under this convergence. Indeed, for
every $A\in S_{\mathcal Y}$ and
$x_A\in(\mathbb R^p)^A$,
\[
    \sum_{i\in A}g_i^n(x_i)
    \leq
    1+c_A(x_A).
\]
Passing to the limit gives
\[
    \sum_{i\in A}\bar g_i(x_i)
    \leq
    1+c_A(x_A).
\]
Thus $\bar g=(\bar g_i)_{i\in\mathcal Y}$ is feasible.

We next prove that $\bar g$ is optimal. Since $\mathcal Y$ is finite
and discrete, the weak convergence $\eta^n\rightharpoonup\eta$
implies
\[
    \eta_i^n\rightharpoonup\eta_i
    \quad
    \text{for every }i\in\mathcal Y.
\]
Let
$\widetilde g=(\widetilde g_i)_{i\in\mathcal Y}$ be any feasible
potential. Since $\sum_i\eta_i^n(1)=1$, the optimality of $g^n$ gives
\[
    \sum_{i\in\mathcal Y}\eta_i^n(g_i^n)
    \geq
    \sum_{i\in\mathcal Y}\eta_i^n(\widetilde g_i).
\]

We claim that
\[
    \eta_i^n(g_i^n)
    \longrightarrow
    \eta_i(\bar g_i)
\]
for every $i\in\mathcal Y$. Since
$\eta_i^n\rightharpoonup\eta_i$, the family
$\{\eta_i^n\}_n$ is tight. Given $\varepsilon>0$, choose a compact set
$K\subset\mathbb R^p$ such that
\[
    \sup_n\eta_i^n(K^c)<\varepsilon.
\]
Then, using $0\leq g_i^n,\bar g_i\leq1$,
\begin{align*}
    \left|
        \eta_i^n(g_i^n)-\eta_i(\bar g_i)
    \right|
    &\leq
    \sup_{x\in K}|g_i^n(x)-\bar g_i(x)|
    +\eta_i^n(K^c) \\
    &\quad
    +
    \left|
        \eta_i^n(\bar g_i)-\eta_i(\bar g_i)
    \right|.
\end{align*}
The first term converges to zero by local uniform convergence, while
the last term converges to zero because
$\bar g_i\in C_b(\mathbb R^p)$. Since $\varepsilon>0$ is arbitrary,
the claim follows.

Since each $\widetilde g_i\in C_b(\mathbb R^p)$, weak convergence also
gives
\[
    \eta_i^n(\widetilde g_i)
    \longrightarrow
    \eta_i(\widetilde g_i).
\]
Passing to the limit in the optimality inequality, we obtain
\[
    \sum_{i\in\mathcal Y}\eta_i(\bar g_i)
    \geq
    \sum_{i\in\mathcal Y}\eta_i(\widetilde g_i).
\]
Since $\widetilde g$ was arbitrary, $\bar g$ is an optimal potential
for the problem with input $\eta$.

Finally, suppose that the optimal potential for the limiting problem is
unique, say $g$. Then every subsequence of $\{g^n\}$ admits a further
subsequence converging locally uniformly to $g$. This implies that
the full sequence converges locally uniformly to $g$.
\end{proof}

\subsection{CLT of the empirical adversarial training risk}
In this subsection, we adapt an Efron–Stein argument to establish a CLT centered at the expected empirical adversarial risk. In contrast to \cite{CLT_general_transport2024}, the potentials in our setting are bounded, which directly yields uniform integrability. A distinctive feature is that samples are drawn from $\mu$, which can be viewed as a mixture of the $\mu_i$’s; consequently, the samples associated with different $\mu_i$’s are dependent. This dependence is reflected in the covariance structure of the Gaussian limit.

\begin{theorem}[CLT of empirical adversarial risk]\label{thm: CLT empirical}
Assume that $c$ satisfies \Cref{assumption: strict convexity of cost function}, and there is a unique optimal $g \in \mathcal{G}$ for \eqref{eq:mot_decomposed_dual} with input $\mu$. Then,
\[
    \sqrt{n}(\mathscr{R}(\mu^n)- \mathbb{E}\mathscr{R}(\mu^n) ) \overset{d}{\longrightarrow} \mathcal{N} \left( 0, \mu(\bm{g}^2) - (\mu(\bm{g}) )^2 \right)
\]
where
\[
    \mu(\bm{g}^2) - (\mu(\bm{g}) )^2 = \sum_{i \in \mathcal{Y}} \mu_i \left( g_i^2 \right) - \sum_{i, j \in \mathcal{Y}} \mu_i \left( g_i \right) \mu_j \left( g_j \right).
\]

\begin{proof}
Let
\[
    R_n:= \left(1-\sum_{i \in \mathcal{Y}}  \mu_i^n (g_i) \right) - \mathscr{R}(\mu^n)
\]
where $(g_i)_{i \in \mathcal{Y}}$ is a unique optimal potential tuple for \eqref{eq:mot_decomposed_dual} with input $\mu$. To derive the CLT with the centering of the expectation of empirical adversarial risk, it suffices to show that
\[
    n \text{Var} (R_n) \longrightarrow 0.
\]
Recall \eqref{eq: extension of g_i}:
\[
    \bm{g}(x,y) = \sum_{i \in \mathcal{Y}} g_i(x) \mathds{1}_{y=i}.
\]
Observe that one can rewrite 
\[
    \sum_{i \in \mathcal{Y}}  \mu_i^n(g_i) = \int_{\mathcal{X} \times \mathcal{Y}} \bm{g} d\mu^n.
\]

Let $Z_k=(X_k, Y_k) \overset{i.i.d.}{\sim} \mu$. Recall $\mu_i^n= \frac{1}{n} \sum_{k=1}^{n} \delta_{X_k} \delta_{Y_k=i}$. Let $Z_1'$ be an independent copy of $Z_1$. Let
\[
    \tilde{\mu}^n:= \sum_{i \in \mathcal{Y}} \tilde{\mu}_i^n, \quad \tilde{\mu}_i^n:= \mu_i^n - \frac{1}{n}\delta_{X_1} \delta_{Y_1=k} + \frac{1}{n} \delta_{X_1'} \delta_{Y_1'=k}.
\]
and
\[
    R_n' := \left(1 - \int_{\mathcal{X} \times \mathcal{Y}} \bm{g} d\tilde{\mu}^n \right) - \mathscr{R}(\tilde{\mu}^n).
\]
Applying the Efron–Stein inequality directly with exchangeability, it follows that
\[
    n \text{Var} (R_n) \leq n^2 \mathbb{E} \left( R_n - R_n'\right)_+^2.
\]
Hence, the goal reduces to showing
\begin{equation}\label{eq: consequence of efron-stein}
    n^2 \mathbb{E} \left( R_n - R_n'\right)_+^2 \longrightarrow 0.
\end{equation}
In order to show \eqref{eq: consequence of efron-stein}, it suffices to prove that $n \left(R_n - R_n' \right)_+ \to 0$ almost surely, and $n^2 \left(R_n - R_n' \right)_+^2$ is uniformly integrable.

Let $g^n$ be optimal for \eqref{eq:mot_decomposed_dual} with input $\mu^n$. We use $\bm{g}^n$ as defined in \eqref{eq: extension of g_i} with $g^n$. By \Cref{thm:learner_part},
\[
    R_n' \geq \int \bm{g}^n d \tilde{\mu}^n  - \int \bm{g} d \tilde{\mu}^n ,
\]
which implies for $Y_1=i, Y_1'=j$,
\begin{align*}
    R_n - R_n' &\leq \int \bm{g}^n d (\mu^n - \tilde{\mu}^n) + \int \bm{g} d( \tilde{\mu}^n - \mu^n)\\
    &=\frac{1}{n} \left( g_i^n(X_1) - g_i(X_1) - g_j^n(X_1') + g_j(X_1') \right).
\end{align*}
\Cref{prop: stability of optimal potentials} implies that $g^n \to g$ pointwise almost surely, from which it holds that
$n \left(R_n - R_n' \right)_+ \longrightarrow 0$ almost surely.

It remains to show $n^2 \left(R_n - R_n' \right)_+^2$ is uniformly integrable. Since $0 \leq g_i, g_i^n \leq 1$ everywhere, it holds that
\[
    n \left(R_n - R_n' \right)_+ \leq 4.
\]
Thus, $n^2 \left(R_n - R_n' \right)_+^2 \leq 16$, which implies uniform integrability. This completes the claim that the limiting distribution is Gaussian. A straightforward calculation shows that the variance of the limiting distribution is
\[
  \int_{\mathcal{X} \times \mathcal{Y}} \bm{g}^2 d\mu - \left( \int_{\mathcal{X} \times \mathcal{Y}} \bm{g} d\mu \right)^2 = \sum_{i \in \mathcal{Y}} \int g_i^2 d\mu_i - \sum_{i, j \in \mathcal{Y}} \int g_i d\mu_i \int g_j d\mu_j.  
\]
\end{proof}
\end{theorem}

\subsection{CLT of the smoothed adversarial training risk}

In this subsection, we derive the limiting distribution of the smoothed adversarial training risk. The argument relies on two main ingredients. The first is the asymptotic tightness of the relevant empirical process. This is established using the extension of $g=(g_i)_{i \in \mathcal{Y}}$ to $\bm{g}$ defined in \eqref{eq: extension of g_i}, together with the fact that the class of convolutions $g * \chi$, where $g$ ranges over bounded measurable functions and $\chi$ is smooth, is Donsker under a sufficiently strong tail-decay condition on the underlying measure.

The second ingredient is the stability of the optimal dual potentials. The uniqueness result established in \Cref{subsection: uniqueness}, together with the stability of optimal potentials, yields convergence of the empirical optimizer to the population optimizer after smoothing.
Combined with the asymptotic equicontinuity of the empirical process, an optimality argument identifies the first-order fluctuation with the empirical process evaluated at the population optimizer. The desired limiting distribution then follows from the classical central limit theorem. Related limiting distribution results for smoothed optimal transport
functionals have recently been studied in
\cite{limit_pWasserstein2024, Goldfeld_Kato_Rioux_Sadhu2024}.

Let us introduce basic notions regarding empirical process theory based on \cite{MR4628026, Sen_introEmpirical2022}. Given a (real-valued function) space $\mathcal{S}$, let $\ell^{\infty}(\mathcal{S})$ be the set of all bounded real-valued functionals on $\mathcal{S}$ with the supremum norm defined as
\[
    || Z ||_\mathcal{S} := \sup_{f \in \mathcal{S}} |Z(f)|.
\]
Equipped with $|| \cdot ||_\mathcal{S}$, $\ell^{\infty}(\mathcal{S})$ is a Banach space.

\begin{definition}
Let $(\mathcal{X}, \mathcal{S}, \mu)$ be a metric measure space and $\mathcal{S}$ be a class of real-valued measurable functions defined over $\mathcal{X}$. Consider the empirical process 
\[
    \{\mathbb{G}_{n,\mu}f:= \sqrt{n}(\mu^n(f) - \mu (f)): f \in \mathcal{S}\}
\]
where $\mu^n$ is an empirical distribution of $n$ i.i.d. samples $X_i \sim \mu$. A class $\mathcal{S}$ is called a $\mu$-Donsker class if
\begin{equation}
    \mathbb{G}_{n,\mu} \longrightarrow \mathbb{G}_\mu
\end{equation}
in distribution to a tight Borel measurable element in the space $\ell^{\infty}(\mathcal{S})$. 
\end{definition}

We need to extend $g_i$'s defined on $\mathcal{X}$ to a function on $\mathcal{X} \times \mathcal{Y}$ to obtain a correct empirical process associated to $\mu$. Given $\mathcal{S}$ a class of measurable functions on $\mathcal{X}$, define a function class on $\mathcal{X} \times \mathcal{Y}$ by
\[
    \widetilde{\mathcal{S}}:=\{\widetilde f:f\in\mathcal{S}\}, \quad \widetilde f(x,y):=f(x).
\]
Recall \eqref{eq: extension of g_i}. Given $\mathcal{S}$ a class of measurable functions on $\mathcal{X}$, define a function class on $\mathcal{X} \times \mathcal{Y}$ by
\[
    \mathcal H:=\bigcup_{k=1}^{K}\mathcal{H}_i, \quad \mathcal{H}_i:=\left\{ h_{i,f}(x,y):=\mathds{1}_{\{y=i\}} \widetilde{f}(x,y): \widetilde{f} \in \mathcal{S} \right\}.
\]
Then, $\mathcal{H}$ is $\mu$-Donsker if $\mathcal{S}$ is $\mu_X$-Donsker.

\begin{proposition}\label{prop: donsker}
Assume that $\mathcal{S}$ is $\mu_X$-Donsker, and $\sup_{f\in\mathcal{S}}|\mu_X (f)|<\infty$. Then $\mathcal H$ is $\mu$-Donsker.
\end{proposition}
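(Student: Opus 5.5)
We want to show that $\mathcal H=\bigcup_{i}\mathcal H_i$ is $\mu$-Donsker. Since a finite union of Donsker classes is Donsker, it suffices to prove that each $\mathcal H_i$ is $\mu$-Donsker. The cleanest route is to realize $\mathbb G_{n,\mu}$ on $\mathcal H_i$ as a continuous image of empirical processes that are already known to converge, namely the $\mu_X$-process on $\mathcal S$ and a scalar count process.

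\textbf{Step 1: a two-stage sampling representation.} Generate $(X_k,Y_k)\sim\mu$ by drawing $Y_k$ with $\mathbb P(Y_k=i)=\mu_Y(i)$ and then $X_k\sim \mu_i/|\mu_i|$ given $Y_k=i$. Let $N_i=\#\{k\le n:Y_k=i\}$ and $\rho_i:=\mu_i/|\mu_i|$. Conditionally on the labels, the $X_k$ with $Y_k=i$ are $N_i$ i.i.d. draws from $\rho_i$. Then for $h=h_{i,f}$,
\[
\mathbb G_{n,\mu}h=\sqrt n\Big(\tfrac{N_i}{n}\rho_i^{N_i}(f)-|\mu_i|\rho_i(f)\Big)
=\sqrt{\tfrac{N_i}{n}}\,\mathbb G_{N_i,\rho_i}f+\sqrt n\Big(\tfrac{N_i}{n}-|\mu_i|\Big)\rho_i(f),
\]
where $\rho_i^{N_i}$ is the empirical measure of the class-$i$ points.

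\textbf{Step 2: transfer Donskerness from $\mu_X$ to $\rho_i$.} We use that $\rho_i\le \mu_X/|\mu_i|$, so $\rho_i$ is dominated by a constant multiple of $\mu_X$. The $L^2(\rho_i)$ seminorm is then dominated by a constant times the $L^2(\mu_X)$ seminorm, so total boundedness carries over. For asymptotic equicontinuity, I would argue via Poissonization/thinning: the $\mu_X$-sample restricted to the event $\{Y=i\}$ is a $\rho_i$-sample. Concretely, $\mathbb G_{n,\mu}$ on $\{\mathds 1_{y=i}\tilde f\}$ is controlled through the multiplier inequality, which lets one multiply the $\mu_X$-Donsker class by a bounded independent-type factor $\mathds 1_{y=i}$. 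Under the law $\mu$, $\mathds 1_{\{y=i\}}$ is a $[0,1]$-valued function of the observation, and the product class of a Donsker class $\mathcal S$ with the single bounded function $\mathds 1_{\{y=i\}}$ is Donsker. Here one invokes the standard preservation result (van der Vaart--Wellner, Cor.~2.10.13 / Example 2.10.8): $\{\phi\cdot f\}$ with $\phi$ fixed and bounded, $f$ in a Donsker class with $\|P\|_{\mathcal S}<\infty$, remains Donsker. The hypothesis $\sup_{f}|\mu_X(f)|<\infty$ is exactly what is needed for this. Note that $\tilde{\mathcal S}$ is itself $\mu$-Donsker, because $\tilde f$ depends only on $x$ and the $x$-marginal of $\mu$ is $\mu_X$.

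\textbf{Step 3: assembly.} Steps 1--2 give Donskerness of each $\mathcal H_i$, and the finite union $\mathcal H$ is then Donsker. The covariance of the limit is automatically $\mu(h h')-\mu(h)\mu(h')$, which produces the cross terms $-\mu_i(f)\mu_j(f')$.

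\textbf{Main obstacle.} The delicate step is Step 2: the products are not uniformly bounded, since $\mathcal S$ may lack a bounded envelope, so the Lipschitz-preservation theorem (which needs square-integrable envelopes or uniform boundedness) cannot be applied verbatim. What I would try first is the direct decomposition of Step 1. Since $N_i/n\to|\mu_i|$ a.s., the scalar term converges jointly with the process. The process term requires that $\mathcal S$ be $\rho_i$-Donsker, and I would obtain this from the absolute continuity $\rho_i\ll\mu_X$ with bounded density, using the Donsker property of $\mathcal S$ for mixtures: $\mu_X=\sum_j|\mu_j|\rho_j$ with all weights positive. Asymptotic equicontinuity for $\mu_X$ then implies it for each mixture component, by conditioning on the labels, where the bounded mean condition controls the centering terms. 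Finally, random-sample-size arguments (Kosorok, Thm.~11.? on empirical processes with random $N$) handle the replacement of $n$ by $N_i$.
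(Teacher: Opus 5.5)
Your proposal already contains the paper's proof, and then argues itself out of it. Three pieces together are exactly the paper's argument:
\begin{itemize}
\item the remark in Step~2 that $\widetilde{\mathcal S}$ is $\mu$-Donsker, because its $\mu$-empirical process is literally the $\mu_X$-empirical process on $\mathcal S$;
\item the permanence result for multiplication by the fixed function $\mathds 1_{\{y=i\}}$ (sup norm at most $1$) under $\sup_{f\in\mathcal S}|\mu_X(f)|<\infty$;
\item the finite-union step of Step~3.
\end{itemize}
The only difference is that the paper proves the multiplier statement itself as \Cref{lem: donkser_lemma}, by symmetrization, contraction and desymmetrization, whereas you cite it. Your appeal to the multiplier inequality is misplaced, since $\mathds 1_{\{Y_k=i\}}$ is not independent of $X_k$. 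The relevant tool is the fixed-multiplier result you quote next.

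The ``main obstacle'' is not real, because the fixed-bounded-multiplier result has no envelope hypothesis. After symmetrization, and conditionally on the sample, the maps $t\mapsto a(X_k)t$ are contractions vanishing at $0$. The Rademacher comparison therefore holds pointwise in the data and uses no integrability of $\sup_{f\in\mathcal S}|f|$. Uniform boundedness is needed only when both factors vary, as for products of two Donsker classes, since only then is multiplication not globally Lipschitz.

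What the bounded-mean hypothesis actually buys is the following. Multiplication by $a$ is a contraction for the $L^2(\mu)$ distance $d_\mu$ but not for the variance semimetric $\rho_\mu$. The hypothesis makes $(\mathcal S,d_\mu)$ totally bounded and gives $|\mu(u)|<\delta$ on $d_\mu$-neighbourhoods, which disposes of the mean term $\mu(u)S_n$ in the paper's proof. So your argument is complete once Step~1 and the $\rho_i$-discussion are deleted.

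The fallback route you say you would try first has a genuine gap. Its key claim, that a $\mu_X$-Donsker class is $\rho_i$-Donsker for each mixture component, is only asserted (``by conditioning on the labels''). Conditionally on the labels, $\mathbb G_{n,\mu_X}$ splits into $K$ independent component processes with random sizes $N_j$, plus multinomial terms. You would therefore need two further arguments:
\begin{itemize}
\item[(a)] an argument that extracts asymptotic equicontinuity of the $i$-th summand from that of the sum, e.g.\ symmetrization plus a L\'evy-type inequality;
\item[(b)] a passage from the binomial size $N_i$ to every deterministic sample size $m$. Knowing that a binomial average over $m$ of the moduli of continuity of $\mathbb G_{m,\rho_i}$ is small does not by itself control each individual $m$.
\end{itemize}
Neither is supplied. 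The random-sample-size theorem you cite only transfers convergence from deterministic to random sizes. That is what you use \emph{after} $\rho_i$-Donskerness is known, not what is needed to establish it. In effect this intermediate claim is at least as strong as the proposition itself, so the route adds difficulty rather than removing it.
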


\begin{proof}
For each $f\in\mathcal{S}$, $\mu(\widetilde f) = \mu_X(f)$, hence 
\begin{align*}
    \sqrt n(\mu^n-\mu)(\widetilde f)
    =\frac1{\sqrt n}\sum_{k=1}^{n}\left(\widetilde f(X_k, Y_k)-\mu(\widetilde f)\right)=\frac1{\sqrt n}\sum_{k=1}^{n}\left(f(X_k)-\mu_X(f)\right).
\end{align*}
The variables $X_k$ are i.i.d.\ drawn by $\mu_X$.
Therefore, the $\mu$-empirical process indexed by
$\widetilde{\mathcal{S}}$ has exactly the same distribution as the
$\mu_X$-empirical process indexed by $\mathcal{S}$. Since $\mathcal{S}$ is $\mu_X$-Donsker, it follows that $\widetilde{\mathcal{S}}$ is $\mu$-Donsker. Furthermore,
\[
    \sup_{\widetilde f\in\widetilde{\mathcal{S}}}
|\mu(\widetilde f)|=\sup_{f\in\mathcal{S}}|\mu_X(f)|<\infty.
\]
Since $\mathcal{H}_i = \mathds{1}_{y=i} \widetilde{\mathcal{S}}:= \{\mathds{1}_{y=i} \widetilde{f}: f \in \widetilde{f} \}$, \Cref{lem: donkser_lemma} yields $\mathcal{H}_i$ is $\mu$-Donsker. Since this holds for every $i \in \mathcal{Y}$, all $\mathcal H_i$'s are $\mu$-Donsker.

For each $i$, define the component empirical process 
\[
    \mathbb{G}_{n, i}(f):= \sqrt n(\mu^n-\mu)(h_{i,f}).
\]
Since $\mathcal{H}_i$ is $\mu$-Donsker, $\{\mathbb{G}_{n, i}:n\geq 1\}$ is asymptotically tight in $\ell^\infty(\mathcal{S})$. Since $|\mathcal{Y}|=K<\infty$, the vector process
\[
    (\mathbb{G}_{n,i})_{i \in \mathcal{Y}}
\]
is asymptotically tight in $\left( \ell^\infty(\mathcal{S}) \right)^K$ equipped with the supremum norm.

Asymptotic tightness and finite-dimensional convergence imply
\[
    (\mathbb{G}_{n,i})_{i \in \mathcal{Y}} \overset{d}{\longrightarrow} (\mathbb{G}_{i})_{i \in \mathcal{Y}} \quad \text{in $\left( \ell^\infty(\mathcal{S}) \right)^K$. }
\]
Equivalently,
\[
    \sqrt n(\mu^n-\mu) \overset{d}{\longrightarrow} \mathbb{G}_\mu\quad \text{in }\ell^\infty(\mathcal H).
\]
Therefore, the conclusion follows.
\end{proof}

\begin{remark}\label{rmk: covariance strucutre of Gaussian process}
Let us explain the covariance structure of the limiting Gaussian process. For $f,g\in\mathcal{S}$ and $i,j\in\mathcal{Y}$,
\begin{align*}
\mu(h_{i,f}h_{j,g})=\mu\left(
\mathds{1}_{\{y=i\}}\mathds{1}_{\{y=j\}}
f(x)g(x)\right)=\mathds{1}_{\{i=j\}}\mu_i(fg).
\end{align*}
Consequently, 
\[
    \text{Cov}\left( \mathbb{G}_\mu(h_{i,f}),\mathbb{G}_\mu(h_{j,g})
\right) = \text{Cov}\left( \mathbb{G}_i(f),\mathbb{G}_j(g)
\right)=
\mathds{1}_{\{i=j\}}\mu_i(fg)-
\mu_i(f)\mu_j(g).
\]
In particular, for $i\neq j$,
\[
    \text{Cov}\left(\mathbb{G}_i(f),\mathbb{G}_j(g)
\right)=-\mu_i(f)\mu_j(g).
\]    
\end{remark}

\begin{lemma}\label{lem: donkser_lemma}
Let $\mu$ be a probability measure, and let $\mathcal{S}$ be a
$\mu$-Donsker class satisfying $\sup_{g\in\mathcal{S}}|\mu(g)|<\infty.$ If $a$ is a fixed measurable function satisfying $\norm{a}_{\infty}\leq 1$, then
\[
    a\mathcal{S}:= \{ag:g\in\mathcal{S}\}
\]
is $\mu$-Donsker.
\end{lemma}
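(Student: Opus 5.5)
The plan is to use the standard characterization of Donsker classes: a class $\mathcal{S}$ is $\mu$-Donsker if and only if two conditions hold. First, $\mathcal{S}$ is totally bounded in $L^2(\mu)$ (equivalently, in the centered seminorm $\rho_\mu(f)=(\mu(f-\mu f)^2)^{1/2}$, since $\sup_{g\in\mathcal{S}}|\mu(g)|<\infty$). Second, the empirical process $\mathbb{G}_{n,\mu}$ is asymptotically uniformly equicontinuous with respect to this seminorm. Our hypothesis gives both properties for $\mathcal{S}$, and the task is to transfer them to $a\mathcal{S}$. We also need $\sup_g |\mu(ag)|<\infty$ or an envelope-type bound, so that sample paths stay bounded. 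Under the paper's convention that Donsker classes have $\sup_{g}\mu(g^2)<\infty$ (which follows from total boundedness in $L^2$), we get $|\mu(ag)|\le \mu(g^2)^{1/2}$ uniformly.

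Total boundedness transfers directly. Since $\|a\|_\infty\le1$, we have $\|ag-ah\|_{L^2(\mu)}\le\|g-h\|_{L^2(\mu)}$, so multiplication by $a$ is a $1$-Lipschitz map $L^2(\mu)\to L^2(\mu)$ and sends $\epsilon$-nets to $\epsilon$-nets. For equicontinuity, one would like to bound $\mathbb{G}_n(a(g-h))$ by a quantity involving $\mathbb{G}_n(g-h)$. This fails pointwise, because multiplication by $a$ does not commute with the empirical process.

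The cleanest route is therefore the multiplier/permanence theorem for Donsker classes (van der Vaart--Wellner, Theorem 2.10.6, or the Lipschitz-transformation result). Let $\phi:\mathbb{R}^2\to\mathbb{R}$, $\phi(s,t)=st$, be restricted to $t\in[-1,1]$. The class $\{a\}$ is a single bounded function and hence Donsker. The class $a\mathcal{S}=\phi(\mathcal{S},\{a\})$ then satisfies the pointwise Lipschitz condition
\[
|ag - a'h|^2 \le 2|g-h|^2 + 2|g|^2|a-a'|^2 .
\]
Because $a$ is fixed, $a'=a$ and the second term vanishes, leaving $|ag-ah|\le|g-h|$. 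This is exactly the hypothesis of van der Vaart--Wellner Theorem 2.10.6: if $\mathcal{F}$ is Donsker and $|\phi\circ f-\phi\circ g|^2\le\sum(f_j-g_j)^2$ pointwise, then $\phi\circ\mathcal{F}$ is Donsker provided $\phi\circ f$ is square integrable for some $f$. Applying it with the single coordinate map $f\mapsto af$ gives the conclusion. I would also check the integrability clause: $\mu((ag)^2)\le\mu(g^2)<\infty$.

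The main obstacle is that Theorem 2.10.6 is proved in the reference through a chaining/bracketing-free argument (Ossiander/partitioning). If a self-contained proof is desired, one alternative is symmetrization: for the multiplier process, use a contraction-type comparison of $\mathbb{E}\sup|\sum\epsilon_k a(X_k)(g-h)(X_k)|$ with $\mathbb{E}\sup|\sum\epsilon_k(g-h)(X_k)|$, which holds by the Ledoux--Talagrand contraction principle because $|a(X_k)|\le1$ conditionally on the data. One then combines this with the desymmetrization equivalence between Donsker and symmetrized-Donsker classes under $\sup\mu(g^2)<\infty$. I would cite the Lipschitz-permanence theorem and mention the contraction argument as the underlying mechanism.
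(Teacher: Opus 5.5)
Your proof is correct, but it is a citation argument rather than the paper's direct one.

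\textbf{Your route.} You invoke the Lipschitz-transformation permanence theorem of van der Vaart--Wellner for the pair of classes $\mathcal{S}$ and $\{a\}$ with $\phi(s,t)=st$. Its hypotheses do hold:
\begin{enumerate}
\item $\{a\}$ is a single bounded function, hence Donsker with $|\mu(a)|\le 1$.
\item $\sup_{g}|\mu(g)|<\infty$ is exactly the condition $\|P\|_{\mathcal{F}_i}<\infty$ of that theorem.
\item The pointwise Lipschitz bound is only required on elements of the product class. There the second coordinate is frozen at $a$, and $|a(g-h)|\le|g-h|$.
\item Square integrability holds, since $\mu((ag)^2)\le\mu(g^2)<\infty$.
\end{enumerate}

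\textbf{The paper's route.} The paper instead verifies asymptotic equicontinuity of $g\mapsto\mathbb{G}_{n,\mu}(ag)$ by hand over $\mathcal{U}_\delta$. It symmetrizes, and removes $a$ with the Ledoux--Talagrand contraction. It then splits the resulting uncentered Rademacher process into a centered part plus $\mu(u)S_n$ with $|\mu(u)|<\delta$. Finally it desymmetrizes the centered part back to $\mathbb{G}_{n,\mu}$ on $\mathcal{S}$.

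That is the mechanism of your fallback sketch, with one step you leave implicit. Multiplication by $a$ destroys centering, so the neighborhoods must be taken in the uncentered metric $d_\mu$ rather than $\rho_\mu$. This is precisely where $\sup_g|\mu(g)|<\infty$ is used.

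\textbf{Comparison.} Your route is shorter and covers more general transformations. It also hands the measurability and outer-expectation bookkeeping to a standard reference, including turning equicontinuity in probability into control in mean. The paper's route is self-contained and makes the role of the mean condition visible.

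Your remark that the proof technique behind the cited theorem is an ``obstacle'' is unnecessary. Once its hypotheses are checked, the citation already completes the proof.
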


\begin{proof}
In the proof, a constant $L$ would change line by line but still be universal, meaning that it does not depend on the given parameters, which is the convention in empirical process theory.

Let $\mathbb{G}_{n,\mu}$ denote the empirical process associated with $\mu$. We will show that if $\mathbb{G}_{n,\mu}$ has the limit in $\ell^\infty(\mathcal{S})$, so as in $\ell^\infty(a\mathcal{S})$.

Define the canonical semimetric of the $\mu$-Brownian bridge by
\[
    \rho^2_\mu(g,g'):=\text{Var}_\mu(g-g')
\]
and the $L^2(\mu)$ semimetric by
\[
    d^2_\mu(g,g'):= \| g-g' \|^2_{L^2(\mu)}.
\]
These satisfy
\[
    d^2_\mu(g,g')=\rho^2_\mu(g,g') + \left(\mu (g) -\mu(g') \right)^2.
\]
Since $\mathcal{S}$ is $\mu$-Donsker,
$(\mathcal{S},\rho_\mu)$ is totally bounded. In addition, $\sup_{g\in\mathcal{S}}|\mu(g)|<\infty$ implies that $(\mathcal{S},d_\mu)$ is also totally bounded.

Let us pick a finite
$\rho_\mu$-net for $\mathcal{S}$, which is possible due to total boundedness. Since $\{\mu(g):g\in\mathcal{S}\}$ is a bounded subset of $\mathbb R$, it admits $\rho_\mu$-net with finite size. Refining the $\rho_\mu$-net according to these finitely
many mean intervals produces a finite $d_\mu$-net. For $\delta>0$, define
\[
    \mathcal U_\delta:=\left\{g-g': g,g'\in\mathcal{S},\; d_\mu(g,g')<\delta \right\}.
\]
Consider the Rademacher process
\[
    \mathcal{R}_n (u) := \frac{1}{\sqrt{n}}\sum_{i=1}^n \epsilon_i u(X_i)
\]
where $\epsilon_i$'s are i.i.d.\ Rademacher variable. Standard symmetrization gives
\[
    \mathbb{E}^* \sup_{u\in\mathcal U_\delta}
\left|
    \mathbb{G}_{n,\mu}(au) \right| \leq 2 \mathbb{E}^* \sup_{u\in\mathcal U_\delta}
\left|
    \mathcal{R}_n (au) \right|.
\]
Since $\| a \|_\infty \leq 1$, Ledoux-Talagrand contraction inequality \cite[Theorem 4.12]{ledoux1991probability} provides
\[
    \mathbb{E}^* \sup_{u\in\mathcal U_\delta}
\left|
    \mathcal{R}_n (au) \right| \leq L \mathbb{E}^* \sup_{u\in\mathcal U_\delta}
\left|
    \mathcal{R}_n (u) \right|
\]
for some universal constant $L$. Thus,
\[
    \mathbb{E}^* \sup_{u\in\mathcal U_\delta}
\left|
    \mathbb{G}_{n,\mu}(au) \right| \leq L \mathbb{E}^* \sup_{u\in\mathcal U_\delta}
\left|
    \mathcal{R}_n (u) \right|.
\]

Rewrite $\mathcal{R}_n (u)$ as
\[
    \mathcal{R}_n (u) := \frac{1}{\sqrt{n}}\sum_{i=1}^n \epsilon_i (u(X_i) - \mu(u)) + \mu(u)\frac{1}{\sqrt{n}}\sum_{i=1}^n \epsilon_i=: \mathcal{R}^o_n (u) + \mu(u) S_n.
\]
Thus, 
\[
    \sup_{u\in\mathcal U_\delta}
\left|
    \mathcal{R}_n (u) \right| \leq \sup_{u\in\mathcal U_\delta}
\left|
    \mathcal{R}^o_n (u) \right| + \sup_{u\in\mathcal U_\delta}
\left|
    \mu(u) S_n \right|.
\]
For $u\in\mathcal U_\delta$, using the fact that $\mu$ is a probability measure and Jensen's inequality gives
\[
    |\mu(u)| \leq \mu (|u|) \leq \|u \|_{L^2(\mu)} < \delta.
\]
On the other hand, since $\mathbb{E}|S_n|^2=1$,
\[
    \mathbb{E} |S_n| \leq (\mathbb{E} S_n^2)^{\frac{1}{2}} =1.
\]
Combining the above calculations, we have
\[
    \mathbb{E}^* \sup_{u\in\mathcal U_\delta}
\left|
    \mathbb{G}_{n,\mu}(au) \right| \leq L \mathbb{E}^* \sup_{u\in\mathcal U_\delta}
\left|
    \mathcal{R}^o_n (u) \right| + \delta.
\]

Let $X'_i$'s be i.i.d.\ copies of the samples. Since
\[
    u(X_i) - \mu(u) = \mathbb{E}[ u(X_i) - u(X_i')| X_i],
\]
Jensen's inequality implies
\[
    \mathbb{E}^* \sup_{u\in\mathcal U_\delta}
\left|
    \mathcal{R}^o_n (u) \right| \leq \mathbb{E}^* \sup_{u\in\mathcal U_\delta}
\left| \frac{1}{\sqrt{n}} \sum_{i=1}^n \epsilon_i( u(X_i) - u(X_i')) \right|.
\]
Using exchangeability and symmetry by multiplication by $-1$, it follows that
\[
     \mathbb{E}^*_{X, X', \epsilon} \sup_{u\in\mathcal U_\delta}
\left| \frac{1}{\sqrt{n}} \sum_{i=1}^n \epsilon_i( u(X_i) - u(X_i')) \right| = \mathbb{E}^*_{X, X'} \sup_{u\in\mathcal U_\delta}
\left| \frac{1}{\sqrt{n}} \sum_{i=1}^n (u(X_i) - u(X_i')) \right|. 
\]
Using
\[
    u(X_i) - u(X_i') = u(X_i) - \mu(u) + \mu(u) - u(X_i')
\]
and the triangle inequality, it turns out that
\[
    \mathbb{E}^* \sup_{u\in\mathcal U_\delta}
\left|
    \mathcal{R}^o_n (u) \right| \leq L \mathbb{E}^* \sup_{u\in\mathcal U_\delta}
\left|
    \mathbb{G}_{n,\mu}(u) \right|.
\]
To summarize, there is a universal constant $L$ such that
\begin{equation}\label{eq: bound G_au by G_u}
    \mathbb{E}^* \sup_{u\in\mathcal U_\delta}
\left|
    \mathbb{G}_{n,\mu}(au) \right| \leq  L \left( \mathbb{E}^* \sup_{u\in\mathcal U_\delta}
\left|
    \mathbb{G}_{n,\mu}(u) \right| + \delta \right).
\end{equation}

Since $\mathbb{G}_{n,\mu}$ is tight in $\ell^\infty(\mathcal{S})$, and $\rho_\mu\leq d_\mu$, one has 
\[
    \lim_{\delta\downarrow 0}
    \limsup_{n\to\infty} \mathbb{P}^*\left(\sup_{u\in\mathcal U_\delta}
\left| \mathbb{G}_{n,\mu}(u) \right|>\eta\right)=0.
\]
In particular,
\[
    \lim_{\delta\downarrow 0}
    \limsup_{n\to\infty} \mathbb{E}^* \sup_{u\in\mathcal U_\delta}
\left| \mathbb{G}_{n,\mu}(u) \right| + \delta = 0.
\]
Combined with \eqref{eq: bound G_au by G_u}, the above yields
\begin{equation}\label{eq: aymptotic control of G(au)}
    \lim_{\delta\downarrow 0}
    \limsup_{n\to\infty} \mathbb{E}^* \sup_{u\in\mathcal U_\delta}
\left| \mathbb{G}_{n,\mu}(au) \right| = 0.
\end{equation}
By Markov inequality, for any $\eta > 0$, \eqref{eq: aymptotic control of G(au)} implies
\[
    \mathbb{P}^*\left(\sup_{u\in\mathcal U_\delta}
\left|
    \mathbb{G}_{n,\mu}(au) \right|>\eta\right) \leq \frac{1}{\eta} \mathbb{E}^* \sup_{u\in\mathcal U_\delta}
\left| \mathbb{G}_{n,\mu}(au) \right|.
\]
Therefore, for every $\eta>0$,
\[
\lim_{\delta\downarrow 0}
    \limsup_{n\to\infty} \mathbb{P}^*\left(\sup_{u\in\mathcal U_\delta}
\left|
    \mathbb{G}_{n,\mu}(au) \right|>\eta\right)=0.
\]
This shows that $a \mathcal{S}$ is $\mu$-Donsker.
\end{proof}

\begin{lemma}\label{lem: convolution makes donsker}
Assume that $\mu_X$ satisfies
\Cref{assumption: measure for empirical process}
and that $\chi$ satisfies
\Cref{assumption: smoothing kernel}. Let
\[
    \mathcal S_\chi
    :=
    \left\{
        f*\chi :
        f:\mathbb R^p\to[0,1]
        \text{ is Borel measurable}
    \right\}.
\]
Then $\mathcal S_\chi$ is $\mu_X$-Donsker.
\end{lemma}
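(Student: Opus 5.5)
The plan is to show that $\mathcal S_\chi$ is a bounded subset of a Sobolev space $W^{m,\infty}(\mathbb R^p)$ with $m>p/2$, uniformly in $f$, and then invoke the standard Donsker criterion for smooth function classes on partitioned domains; this is the route of \cite[Proposition 3.1]{limit_pWasserstein2024}, which uses exactly the summability hypothesis of \Cref{assumption: measure for empirical process}.

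\emph{Step 1 (uniform smoothness).} Fix a Borel $f:\mathbb R^p\to[0,1]$. For any multi-index $\beta$ with $|\beta|\leq m$, differentiation passes onto the kernel: $\partial^\beta(f*\chi)=f*\partial^\beta\kappa$, which is justified because $\kappa\in W^{m,1}$ and $f\in L^\infty$, the weak derivative of the convolution being the convolution with the weak derivative. Young's inequality then gives
\[
\|\partial^\beta(f*\chi)\|_{L^\infty}\leq \|f\|_\infty\|\partial^\beta\kappa\|_{L^1}\leq \|\kappa\|_{W^{m,1}}.
\]
In particular $0\leq f*\chi\leq 1$, and the whole class $\mathcal S_\chi$ lies in a ball of $W^{m,\infty}(\mathbb R^p)$ of radius $M:=\|\kappa\|_{W^{m,1}}$. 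Moreover, restricted to each cell $\mathcal X_j$, which is bounded, convex, and has nonempty interior, the class sits in a ball of $W^{m,2}(\mathcal X_j)$ (or of a H\"older/Sobolev space on $\mathcal X_j$) whose radius is controlled uniformly in $j$, thanks to the uniform boundedness of the cells.

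\emph{Step 2 (entropy on cells and gluing).} On each $\mathcal X_j$, one uses the classical bracketing-entropy bound for Sobolev balls on bounded convex domains \cite[Corollary 2.7.2 / Theorem 2.7.1]{MR4628026}, which in the uniform norm reads
\[
\log N_{[\,]}(\varepsilon,\cdot,L^2(\mu_X|_{\mathcal X_j}))\lesssim \varepsilon^{-p/m}.
\]
Since $m>p/2$, we have $p/m<2$, so the bracketing integral is finite. The constants are uniform in $j$ because the cells are uniformly bounded and the seminorm bound $M$ is uniform. Then apply the partition theorem for Donsker classes \cite[Theorem 2.10.25 / Corollary 2.7.4 style result]{MR4628026}. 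It states that if $\mathcal F|_{\mathcal X_j}$ has bracketing entropy $\lesssim M_j^{V}\varepsilon^{-V}$ with $V<2$ and $\sum_j M_j^{V/2}\mu_X(\mathcal X_j)^{1/2}<\infty$, then $\mathcal F$ is $\mu_X$-Donsker. Here $M_j\equiv M$, so the condition reduces precisely to $\sum_j\mu_X(\mathcal X_j)^{1/2}<\infty$, which is \Cref{assumption: measure for empirical process}. Measurability issues are handled because the class is uniformly equicontinuous, hence pointwise separable, so outer expectations coincide with ordinary ones on a countable dense subclass.

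\emph{Main obstacle.} The delicate step is matching the hypotheses of the cell-wise entropy theorem. Specifically, one must make sure that the uniform norm bound on derivatives up to order $m$ suffices. If $m$ is only a Sobolev order (not H\"older) and the smoothness exponent needed is $\alpha>p/2$ in the H\"older sense $C^\alpha$, we get $C^{m-1,1}$ bounds since $\partial^\beta(f*\chi)$ is Lipschitz for $|\beta|=m-1$. This gives $\alpha=m>p/2$ as needed. Boundedness of all derivatives up to order $m$ is exactly the $C^{m-1,1}$ norm, so the H\"older version of the theorem applies directly.

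The role of negligible boundary of $\chi$ is not needed here. Symmetry of $\chi$ is likewise irrelevant for this step.
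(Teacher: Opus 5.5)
Your proposal is correct and follows essentially the same route as the paper: the bound $\|\partial^\beta(f*\chi)\|_\infty\le\|\partial^\beta\kappa\|_{L^1(\mathbb R^p)}$ for $|\beta|\le m$ puts every restriction to $\mathcal X_j$ in a fixed H\"older ball of order $\alpha=m>p/2$ (your $C^{m-1,1}$ reading of this ball is the right one), and the partition criterion \cite[Example~2.10.28]{MR4628026} with $M_j\equiv M$ then reduces exactly to \Cref{assumption: measure for empirical process}. One small correction, which does not affect your argument since $M_j$ is constant and you use the right summability condition: that criterion is $\sum_j M_j\,\mu_X(\mathcal X_j)^{1/2}<\infty$ rather than $M_j^{V/2}$, and it is this partition example, not the global bracketing bound of Corollary~2.7.4 you also cite (which would need the stronger $\sum_j\mu_X(\mathcal X_j)^{V/(V+2)}<\infty$ with $V/(V+2)<1/2$), that matches the assumption.
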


\begin{proof}
By \Cref{assumption: smoothing kernel} and the standard convolution estimate,
\[
    \sup_{f\in\mathcal S_\chi}
    \|\partial^\beta f\|_\infty
    \leq
    \|\partial^\beta\kappa\|_{L^1(\mathbb R^p)},
    \quad |\beta|\leq m.
\]
Thus, there exists
$M_\chi<\infty$ such that
\[
    \|f\|_{C^m(\mathcal X_j)}\leq M_\chi
\]
for every $f\in\mathcal S_\chi$ and every $j$.

By \Cref{assumption: measure for empirical process},
\[
    \sum_j M_\chi\mu_X(\mathcal X_j)^{1/2}<\infty.
\]
Since $m>p/2$, the conclusion follows from
\cite[Example~2.10.28]{MR4628026}.
\end{proof}

Following \eqref{eq: extension of g_i}, define
\begin{align}
\label{eq:gchi2}
    \bm g_\chi^n(x,y)
    :=
    \sum_{i\in\mathcal Y}
    (g_i^n*\chi)(x)\mathds 1_{\{y=i\}},
    \quad
    \bm g_\chi(x,y)
    :=
    \sum_{i\in\mathcal Y}
    (g_i*\chi)(x)\mathds 1_{\{y=i\}}.
\end{align}

\begin{lemma}\label{lem: L2 convergence smoothed potentials}
Let $g^n=(g_i^n)_{i\in\mathcal Y}$ and
$g=(g_i)_{i\in\mathcal Y}$ be optimal potentials for
\eqref{eq:mot_decomposed_dual} with input measures
$\mu^{n,\chi}$ and $\mu^\chi$, respectively.
Assume that $g$ is the unique optimal potential for the problem with input $\mu^\chi$. Let $\bm g_\chi^n$ and $\bm g_\chi$ be given in \eqref{eq:gchi2}. Then, 
\[
    \|\bm g_\chi^n-\bm g_\chi\|_{L^2(\mu)}
    \longrightarrow 0 \quad \text{almost surely}.
\]

\end{lemma}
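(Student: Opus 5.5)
The plan is to reduce the statement to \Cref{prop: stability of optimal potentials}, applied to the smoothed measures. Then I will upgrade local uniform convergence of the potentials to $L^2(\mu)$ convergence of their convolutions, using the uniform bound $0\le g_i^n,g_i\le 1$.

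\textbf{Step 1 (weak convergence of smoothed measures).} By Varadarajan's theorem, $\mu^n\rightharpoonup\mu$ almost surely. Fix an $\omega$ in this full-measure event. For each $i$ and each $\varphi\in C_b(\mathbb R^p)$,
\[
\mu_i^{n}*\chi(\varphi)=\mu_i^n(\varphi*\check\chi),
\]
where $\check\chi$ is the reflection of $\chi$, which equals $\chi$ by symmetry. The function $\varphi*\chi$ is again bounded and continuous, so $\mu_i^n*\chi\rightharpoonup\mu_i*\chi$ for every $i$. Hence $\mu^{n,\chi}\rightharpoonup\mu^\chi$ as probability measures on $\mathcal X\times\mathcal Y$.

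\textbf{Step 2 (convergence of the potentials).} As in \Cref{subsec: stability of optimal potentials}, choose each $g^n$ in the normalized form $g_i^n=(f_i^n)^c$ with $0\le f_i^n\le1$. This is possible by \Cref{thm:learner_part}. If the lemma is intended for arbitrary optimal $g^n$, I would first replace $g^n$ by its normalized version and check that this does not affect the claim; I expect this to be a minor point. \Cref{prop: stability of optimal potentials} applies to $\eta^n=\mu^{n,\chi}$ and $\eta=\mu^\chi$. Together with the uniqueness of $g$, it gives $g_i^n\to g_i$ locally uniformly on $\mathbb R^p$, for every $i$, on the full-measure event.

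\textbf{Step 3 (pass through the convolution).} For each $x$,
\[
|(g_i^n*\chi)(x)-(g_i*\chi)(x)|\le\int|g_i^n(x-z)-g_i(x-z)|\,d\chi(z).
\]
The integrand tends to zero pointwise in $z$ and is bounded by $1$, so dominated convergence gives $g_i^n*\chi\to g_i*\chi$ pointwise. This convergence is in fact locally uniform: split $\chi$ into a large ball $B_R$ and its complement, use uniform convergence on the compact set $x-B_R$, and use the tail $\chi(B_R^c)$ for the remainder. In any case $|g_i^n*\chi-g_i*\chi|\le1$. Finally,
\[
\|\bm g_\chi^n-\bm g_\chi\|_{L^2(\mu)}^2=\sum_i\int|g_i^n*\chi-g_i*\chi|^2\,d\mu_i .
\]
Each integrand is bounded by $1$ and tends to zero pointwise, and each $\mu_i$ is a finite measure. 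Dominated convergence therefore gives convergence to $0$ on the full-measure event, which is almost sure convergence.

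\textbf{Main obstacle.} The only genuinely delicate point is the identification of the limit in Step 2. Uniqueness of the optimal potential is needed to pass from subsequential limits to convergence of the full sequence, and it must be uniqueness everywhere, not merely $\mu^\chi$-almost everywhere. If uniqueness only holds $\mu^\chi$-a.e., I would argue as follows. Subsequential limits $\bar g$ agree with $g$ $\mu^\chi_i$-a.e. Then $\bar g_i*\chi$ and $g_i*\chi$ must be compared $\mu_i$-a.e., and equality of the convolutions is not automatic from equality of the functions only $\mu_i*\chi$-a.e. I would handle this by keeping everything within the normalized Lipschitz class on which uniqueness is assumed, so that the limits coincide everywhere.
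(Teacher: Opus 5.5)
Your Steps 1--3 are the paper's proof: almost-sure weak convergence of $\mu^n$, preserved under convolution with $\chi$; then \Cref{prop: stability of optimal potentials} together with uniqueness, giving local uniform convergence of $g_i^n$; then dominated convergence twice, once inside the convolution and once in $L^2(\mu_i)$. The concern in your last paragraph does not arise, because by Tonelli and the symmetry of $\chi$ one has $\mu_i\bigl(|\bar g_i-g_i|*\chi\bigr)=(\mu_i*\chi)\bigl(|\bar g_i-g_i|\bigr)$, so a subsequential limit $\bar g$ agreeing with $g$ only $\mu_i*\chi$-a.e.\ already gives $\bar g_i*\chi=g_i*\chi$ $\mu_i$-a.e., and the subsequence argument then needs no everywhere uniqueness (which is generally unavailable off $\spt(\mu_i*\chi)$, so your proposed workaround would not supply it anyway).
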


\begin{proof}
Let
\[
    \Omega_0
    :=
    \left\{
        \omega:
        \mu^n(\omega)\rightharpoonup\mu
    \right\}.
\]
Then $\mathbb P(\Omega_0)=1$. Fix $\omega\in\Omega_0$. Since
convolution with the fixed probability measure $\chi$ preserves weak
convergence,
\[
    \mu^{n,\chi}(\omega)\rightharpoonup\mu^\chi.
\]
Hence, by \Cref{prop: stability of optimal potentials} and the uniqueness of
$g$,
\[
    g_i^n\longrightarrow g_i
    \quad\text{locally uniformly on }\mathbb R^p
\]
almost surely for every $i\in\mathcal Y$.

Since $0\leq g_i^n,g_i\leq1$, the dominated convergence theorem gives
\[
    (g_i^n*\chi)(x)\longrightarrow(g_i*\chi)(x)
    \quad\text{for every }x\in\mathbb R^p
\]
almost surely. Moreover,
\[
    0\leq g_i^n*\chi,\;g_i*\chi\leq1.
\]
Applying the dominated convergence theorem once more, we obtain
\[
    \int_{\mathbb R^p}
    \left|
        (g_i^n*\chi)(x)-(g_i*\chi)(x)
    \right|^2
    d\mu_i(x)
    \longrightarrow0
\]
almost surely. Therefore,
\[
\begin{aligned}
    \|\bm g_\chi^n-\bm g_\chi\|_{L^2(\mu)}^2
    =
    \sum_{i\in\mathcal Y}
    \int_{\mathbb R^p}
    \left|
        (g_i^n*\chi)(x)-(g_i*\chi)(x)
    \right|^2
    d\mu_i(x) \longrightarrow0.
\end{aligned}
\]
Since this holds for every $\omega\in\Omega_0$, the conclusion follows.
\end{proof}

Now we are ready to prove the central limit theorem of smoothed adversarial training risk of \eqref{def: distributional model} centered at the population value.

\begin{theorem}
\label{thm: limit distribution for smoothed}
Assume that $c$ satisfies
\Cref{assumption: strict convexity of cost function},
that $\mu_X$ satisfies
\Cref{assumption: measure for empirical process},
and that $\chi$ satisfies
\Cref{assumption: smoothing kernel}.
Assume further that the dual problem
\eqref{eq:mot_decomposed_dual} with input $\mu^\chi$
admits a unique optimal potential
\[
    g=(g_i)_{i\in\mathcal Y},
    \quad
    0\leq g_i\leq1.
\]
Then
\[
    \sqrt n
    \left(
        \mathscr R(\mu^{n,\chi})
        -
        \mathscr R(\mu^\chi)
    \right)
    \overset{d}{\longrightarrow}
    \mathcal N(0,\sigma_\chi^2),
\]
where
\begin{align*}
    \sigma_\chi^2
    :=
    \mu\left((\bm g_\chi)^2\right)
    -
    \mu(\bm g_\chi)^2
    =
    \sum_{i\in\mathcal Y}
    \mu_i\left((g_i*\chi)^2\right)
    -
    \sum_{i,j\in\mathcal Y}
    \mu_i(g_i*\chi)\mu_j(g_j*\chi),
\end{align*}
and $\bm g_\chi$ is given in \eqref{eq:gchi2}.
\end{theorem}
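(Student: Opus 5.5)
The approach is a two-sided sandwich that uses the dual formulation \eqref{eq:mot_decomposed_dual}, Fubini, and the Donsker property of the lifted smoothed class. By \Cref{thm:learner_part}, $\mathscr R(\nu)=\inf_{h\in\mathcal G}\sum_i\int(1-h_i)\,d\nu_i = 1-\sup_{h\in\mathcal G}\sum_i\nu_i(h_i)$, using $\sum_i|\nu_i|=1$. Applying this with $\nu=\mu^{n,\chi}$ and $\nu=\mu^\chi$ and using symmetry of $\chi$, Fubini gives $(\mu_i^n*\chi)(h_i)=\mu_i^n(h_i*\chi)$. Hence every quantity becomes an integral of a lifted function $\bm h_\chi(x,y)=\sum_i(h_i*\chi)(x)\mathds 1_{y=i}$ against $\mu^n$ or $\mu$.

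With $g^n$ optimal for input $\mu^{n,\chi}$ and $g$ optimal for $\mu^\chi$, chosen in the normalized $c$-concave form of \Cref{subsec: stability of optimal potentials}, optimality of each potential for its own problem yields
\[
-\,(\mu^n-\mu)(\bm g_\chi^n)\;\leq\; \mathscr R(\mu^{n,\chi})-\mathscr R(\mu^\chi)\;\leq\; -\,(\mu^n-\mu)(\bm g_\chi).
\]
For the upper bound, test $\mathscr R(\mu^{n,\chi})$ with $g$ and use $\mu(\bm g_\chi)=\sum_i\mu^\chi_i(g_i)$. For the lower bound, test $\mathscr R(\mu^\chi)$ with $g^n$. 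Multiplying by $\sqrt n$, the bracket is $-\mathbb G_{n,\mu}(\bm g_\chi^n)\le \sqrt n(\cdots)\le -\mathbb G_{n,\mu}(\bm g_\chi)$. The proof therefore reduces to showing $\mathbb G_{n,\mu}(\bm g_\chi^n)-\mathbb G_{n,\mu}(\bm g_\chi)\to 0$ in outer probability. The classical CLT then gives the limit $\mathcal N(0,\mathrm{Var}_\mu(\bm g_\chi))$ for $\mathbb G_{n,\mu}(\bm g_\chi)$. Since $\bm g_\chi^2=\sum_i (g_i*\chi)^2\mathds 1_{y=i}$, this variance equals $\sigma_\chi^2$, with the cross terms coming as in \Cref{rmk: covariance strucutre of Gaussian process}.

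For the remainder, define the class $\mathcal F:=\{\sum_i h_{i,f_i}: f_i\in\mathcal S_\chi\}$, which contains both $\bm g^n_\chi$ and $\bm g_\chi$ because $0\le g_i^n,g_i\le1$. By \Cref{lem: convolution makes donsker}, $\mathcal S_\chi$ is $\mu_X$-Donsker, and it is bounded by $1$. \Cref{prop: donsker} then makes each $\mathcal H_i$ $\mu$-Donsker. A finite sum of Donsker classes is Donsker, since asymptotic equicontinuity is preserved under finite sums, so $\mathcal F$ is $\mu$-Donsker. Asymptotic equicontinuity with respect to $\rho_\mu\le d_\mu=\|\cdot\|_{L^2(\mu)}$ gives, for every $\eta>0$,
\[
\lim_{\delta\downarrow0}\limsup_n\mathbb P^*\Bigl(\sup_{u,v\in\mathcal F,\ \|u-v\|_{L^2(\mu)}<\delta}|\mathbb G_{n,\mu}(u)-\mathbb G_{n,\mu}(v)|>\eta\Bigr)=0.
\]
\Cref{lem: L2 convergence smoothed potentials} gives $\|\bm g_\chi^n-\bm g_\chi\|_{L^2(\mu)}\to0$ almost surely, so $\mathbb P(\|\bm g^n_\chi-\bm g_\chi\|_{L^2(\mu)}\ge\delta)\to0$ for each fixed $\delta$. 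Splitting on this event shows the remainder vanishes in outer probability. Both bounds then converge in distribution to the same Gaussian, and squeezing, via the Portmanteau theorem with cumulative distribution functions, finishes the proof.

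The main obstacle is the step that passes from the random, data-dependent index $\bm g_\chi^n$ to $\bm g_\chi$. It needs the Donsker property for the full vector class $\mathcal F$, not merely each $\mathcal H_i$ separately, together with careful handling of measurability through outer probabilities. The finite-sum argument should suffice, using the joint tightness of $(\mathbb G_{n,i})_i$ in $(\ell^\infty(\mathcal S_\chi))^K$ established inside the proof of \Cref{prop: donsker} and the map $(z_i)_i\mapsto\sum_i z_i(f_i)$.
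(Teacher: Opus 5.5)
Your proposal is correct and follows essentially the same route as the paper's proof: the Fubini-based dual representation, the two-sided optimality sandwich between $\mathbb G_n(\bm g_\chi)$ and $\mathbb G_n(\bm g_\chi^n)$, and the $\mu$-Donsker property of the lifted smoothed class. As in the paper, you combine that property with the almost sure $L^2(\mu)$ convergence of \Cref{lem: L2 convergence smoothed potentials} to remove the random-index remainder by asymptotic equicontinuity, and then finish with the classical CLT and Slutsky (your squeezing argument is equivalent). Your explicit justification that the finite-sum class, not just the union $\mathcal H$, is Donsker, via joint tightness of $(\mathbb G_{n,i})_i$ and the map $(z_i)_i\mapsto\sum_i z_i(f_i)$, makes precise a step the paper only asserts in one line.
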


\begin{proof}
Recall the set of feasible dual potentials $\mathcal{G}$ defined in \eqref{def: dual potential space}. Since $\chi$ is symmetric, Fubini's theorem gives $(\eta_i*\chi)(g_i)
    =
    \eta_i(g_i*\chi)$
for every finite measure $\eta_i$ on $\mathbb R^p$. Consequently, the
dual representation yields
\begin{equation}\label{eq: smoothed dual representation}
    \mathscr R(\mu^\chi)
    =
    1-\sup_{g \in \mathcal{G}}\mu(\bm g_\chi),
    \quad
    \mathscr R(\mu^{n,\chi})
    =
    1-\sup_{g \in \mathcal{G}}\mu^n(\bm g_\chi).
\end{equation}

We first establish the Donsker property of the relevant function class. By \Cref{lem: convolution makes donsker},
\[
    \mathcal{S}_\chi:=\left\{
        g*\chi:
        h:\mathbb R^p\to[0,1]
        \text{ is Borel measurable}
    \right\}
\]
is $\mu_X$-Donsker. Moreover, since $0\leq g*\chi\leq1$ for every
$g*\chi\in\mathcal \mathcal{S}_\chi$,
\[
    \sup_{g*\chi \in\mathcal G_\chi}|\mu_X(g*\chi)|\leq1.
\]
Therefore, \Cref{prop: donsker}, applied with
$\mathcal{S}=\mathcal{S}_\chi$, implies that the corresponding finite sum class
\[
    \left\{
        \sum_{i\in\mathcal Y}
        \mathds 1_{\{y=i\}}g_i(x):
        g_i\in \mathcal{S}_\chi
    \right\}
\]
is $\mu$-Donsker. Consequently, its subclass
\[
    \mathcal G_\chi
    :=
    \left\{
        \bm g_\chi:g \in \mathcal{G}
    \right\}
\]
is $\mu$-Donsker.

Recalling \eqref{eq:gchi2}, let
\[
    \mathbb G_n(\bm{h})
    :=
    \sqrt n\,(\mu^n-\mu)(\bm{h}).
\]
By the optimality of $g^n$ for the empirical problem,
\[
    \mu^n(\bm g_\chi^n)
    \geq
    \mu^n(\bm g_\chi),
\]
and hence
\[
    \mathbb G_n(\bm g_\chi)
    \leq
    \sqrt n(\mu^n(\bm g_\chi^n)-\mu(\bm g_\chi)).
\]
On the other hand, by the optimality of $g$ for the population problem,
\[
    \mu(\bm g_\chi)
    \geq
    \mu(\bm g_\chi^n),
\]
and therefore
\[
    \sqrt n(\mu^n(\bm g_\chi^n)- \mu(\bm g_\chi))
    \leq
    \mathbb G_n(\bm g_\chi^n).
\]
Thus,
\begin{equation}\label{eq: smoothed CLT sandwich}
    \mathbb G_n(\bm g_\chi)
    \leq
    \sqrt n(\mu^n(\bm g_\chi^n)- \mu(\bm g_\chi))
    \leq
    \mathbb G_n(\bm g_\chi^n).
\end{equation}

Since $\mathcal G_\chi$ is $\mu$-Donsker, the empirical process
$\mathbb G_n$ is asymptotically uniformly equicontinuous with respect to its canonical semimetric
\[
    \rho_\mu(f_1,f_2)
    :=
    \sqrt{\operatorname{Var}_\mu(f_1-f_2)}.
\]
Since
\[
\begin{aligned}
    \rho_\mu^2(f_1,f_2) \leq \|f_1-f_2\|_{L^2(\mu)}^2.
\end{aligned}
\]
we have by \Cref{lem: L2 convergence smoothed potentials}, 
\[
    \rho_\mu(\bm g_\chi^n,\bm g_\chi)
    \longrightarrow 0
    \quad \text{almost surely}.
\]
In particular,
\begin{equation}\label{eq: convergence in probability}
    \rho_\mu(\bm g_\chi^n,\bm g_\chi)
    \longrightarrow 0
    \quad \text{in probability}.
\end{equation}

Since the sequence
$\{\mathbb G_n\}_{n\geq 1}$ is asymptotically uniformly
$\rho_\mu$-equicontinuous on $\mathcal G_\chi$, it follows that $\epsilon>0$,
\[
    \lim_{\delta\downarrow 0}
    \limsup_{n\to\infty}
    \mathbb P^*\left(
        \sup_{\substack{\bm{f},\bm{h}\in\mathcal{H}_\chi\\
                        \rho_\mu(\bm{f}, \bm{h})<\delta}}
        \left|\mathbb G_n(\bm{f})-\mathbb G_n(\bm{h})\right|
        >\epsilon
    \right)
    =0,
\]
from which, for any fix $\epsilon,\eta>0$, one can choose $\delta>0$ such that
\[
    \limsup_{n\to\infty}
    \mathbb P^*\left(
        \sup_{\substack{\bm{f},\bm{h}\in\mathcal{H}_\chi\\
                        \rho_\mu(\bm{f}, \bm{h})<\delta}}
        \left|\mathbb G_n(\bm{f})-\mathbb G_n(\bm{h})\right|
        >\epsilon
    \right)
    <\eta.
\]
Observe that
\[
\begin{aligned}
    &\mathbb P^*\left(
        \left|
            \mathbb G_n(\bm g_\chi^n)
            -
            \mathbb G_n(\bm g_\chi)
        \right|
        >\epsilon
    \right)\\
    &\leq
    \mathbb P\left(
        \rho_\mu(\bm g_\chi^n,\bm g_\chi)\geq\delta
    \right)  +
    \mathbb P^*\left(
        \sup_{\substack{f,h\in\mathcal{H}_\chi\\
                        \rho_\mu(f,h)<\delta}}
        \left|\mathbb G_n(f)-\mathbb G_n(h)\right|
        >\epsilon
    \right).
\end{aligned}
\]
The first term converges to zero by \eqref{eq: convergence in probability}, and the second
term is less than $\eta$ since $\mathcal{H}_\chi$ is $\mu$-Donsker. Hence
\[
    \limsup_{n\to\infty}
    \mathbb P^*\left(
        \left|
            \mathbb G_n(\bm g_\chi^n)
            -
            \mathbb G_n(\bm g_\chi)
        \right|
        >\epsilon
    \right)
    \leq \eta.
\]
Since $\eta>0$ is arbitrary, it follows that
\[
    \mathbb G_n(\bm g_\chi^n)
    -
    \mathbb G_n(\bm g_\chi)
    \longrightarrow 0
    \quad \text{in probability}.
\]
Combining this with
\eqref{eq: smoothed CLT sandwich} gives
\[
    \sqrt n(\mu^n(\bm g_\chi^n)- \mu(\bm g_\chi))
    -
    \mathbb G_n(\bm g_\chi)
    \longrightarrow 0 \quad \text{in probability}.
\]

By \eqref{eq: smoothed dual representation},
\[
    \mathscr R(\mu^{n,\chi})-\mathscr R(\mu^\chi)
    = \mu(\bm g_\chi) - \mu^n(\bm g_\chi^n).
\]
Therefore,
\[
    \sqrt n
    \left(
        \mathscr R(\mu^{n,\chi})
        -
        \mathscr R(\mu^\chi)
    \right)
    +
    \mathbb G_n(\bm g_\chi)
    \longrightarrow0 \quad \text{in probability}.
\]

Finally, since $\bm g_\chi$ is bounded, the classical central limit
theorem yields
\[
    \mathbb G_n(\bm g_\chi)
    \overset{d}{\longrightarrow}
    \mathcal N(0,\sigma_\chi^2),
\]
where
\[
    \sigma_\chi^2
    =
    \mu\bigl((\bm g_\chi)^2\bigr)
    -
    \mu(\bm g_\chi)^2.
\]
Since the centered Gaussian distribution is symmetric, it follows from Slutsky's theorem that
\[
    \sqrt n
    \left(
        \mathscr R(\mu^{n,\chi})
        -
        \mathscr R(\mu^\chi)
    \right)
    \overset{d}{\longrightarrow}
    \mathcal N(0,\sigma_\chi^2),
\]
which completes the proof.
\end{proof}

\subsection{Uniqueness}\label{subsection: uniqueness}

In this subsection, we study the uniqueness property of the adversarial training model. All uniqueness statements are understood to hold $\mu$-almost everywhere.

First, we show the uniqueness of the optimal adversarial attack. Note that an optimal adversarial attack exists uniquely in the general multiclass case.

\begin{lemma}\label{lem: uniqueness of adversarial attack}
Assume that $c$ satisfies \Cref{assumption: strict convexity of cost function}, and $\mu_i$'s are absolutely continuous measures. Then, there is a unique tuple of optimal adversarial attacks for \eqref{def: distributional model}.

\begin{proof}
Recalling \eqref{def: distributional model}, one can rewrite it as
\begin{equation}
    1 - \inf_{\nu_1, \dots, \nu_i} \left\{ \left( \sup_{f \in \mathcal{F}} \sum_{i=1}^K \nu_i(f_i)  \right) + C(\mu_i, \nu_i) \right\}.
\end{equation}
 Since $\nu \mapsto \sum \nu_i(f_i)$ is linear, its supremum is convex with respect to $\nu_i$'s. Also, $\nu_i \mapsto C(\mu_i, \nu_i)$ is convex. Since optimal adversarial attacks exist \cite[Proposition 7]{jakwang_MOT}, it suffices to show the strict convexity of the functional with respect to $\nu_i$'s.

Without loss of generality, let us focus on
\[
    \nu \mapsto C(\mu, \nu).
\]
Fix $\nu_0 \neq \nu_1$ with the same mass as $\mu$. Since $\mu$ is absolutely continuous, by \cite[Theorem 2.4]{CLT_general_transport2024}, for each $i=0,1$, there is a unique optimal transport map $T^i$ such that $(T^i)_\# \mu = \nu_i$. For each $t \in [0,1]$, let $\nu_t = (1-t) \nu_0 + t \nu_1$. By convexity,
\[
    C(\mu, \nu_t) \leq (1-t)C(\mu, \nu_0) + t C(\mu, \nu_1).
\]
Suppose the equality holds for a contradiction. Then, it is easy to check that the linear interpolation of couplings
\[
    (1-t) (\mathrm{Id}, T^0)_\# \mu + t (\mathrm{Id}, T^1)_\# \mu 
\]
is optimal for $C(\mu, \nu_t)$. Since $\mu$ is absolutely continuous and $c$ satisfies \Cref{assumption: strict convexity of cost function}, there is a unique optimal coupling between $\mu$ and $\nu_t$ which is induced by a transport map. This implies that $T^0=T^1$, which contradicts to $\nu_0 \neq \nu_1$. Therefore, a strict inequality holds, which shows the strict convexity. This completes the proof.
\end{proof}
\end{lemma}

Although optimal adversarial attack is unique for multiclass classification generally, its associated MOT problem may have multiple solutions. For \eqref{def: stratified system of barycenter problem}, despite the uniqueness of $\lambda^*$, which is uniquely determined by a unique optimal attack, corresponding optimal partition $\{ \mu_A \}$ might not be unique. However, restricted to the binary setting, \eqref{def: stratified system of barycenter problem} also admits a unique solution.

\begin{lemma}\label{lem: unique partition}
Consider the binary setting. Assume that $c$ satisfies \Cref{assumption: strict convexity of cost function}, and $\mu_i$'s are absolutely continuous measures. Then, there is a unique optimal solution for \eqref{def: stratified system of barycenter problem} up to sets of measure zero.

\begin{proof}
We need to recall the proof of \cite[Proposition 11]{jakwang_MOT}. For fixed unique optimal attacks $\nu^*_i$'s, let $\lambda^*$ be the smallest (finite and positive) measure covering them, which is uniquely determined. Since $\nu^*_i \ll \lambda^*$, define
\[
    v_i(x) = \frac{d\nu^*_i}{d \lambda^*},
\]
which is unique for almost every $x \in \spt(\lambda^*)$. Note that by the definition of $\lambda^*$,
\[
    \max_{i} v_i(x) = 1, \quad 0 \leq v_i \leq 1.
\]

Define a collection of non-negative scalars $\{r_A(x)\}$ satisfying the following:
\begin{enumerate}
    \item $\sum_{A \in S_\mathcal{Y}} r_A(x) = \max_i v_i(x) = 1$.
    \item $\sum_{A \in S_\mathcal{Y}(i)} r_A(x) = v_i(x)$ for all $i \in \mathcal{Y}$.
\end{enumerate}
Suppose such $\{r_A\}$ is obtained. Accordingly, define
\[
    d\lambda^*_A = r_A(x) d \lambda^*
\]
for each $A \in S_\mathcal{Y}$. It is not hard to see that
\[
    \sum_{A \in S_\mathcal{Y}(i)} d\lambda^*_A(x) =  \sum_{A \in S_\mathcal{Y}(i)} r_A(x) d \lambda^* = v_i(x) d \lambda^* = d \nu^*_i(x).
\]
Let $\gamma^*_i = (\mathrm{Id}, T^i)_\# (\mu_i)$ be a unique optimal coupling for $C(\mu_i,\nu^*_i)$. By the disintegration formula,
\[
    \gamma^*_i(dx, d\tilde{x}) = \gamma^*_i(dx |\tilde{x}) \nu^*_i(d\tilde{x}).
\]
For each $A\in S_\mathcal{Y}$, let
\[
    \gamma^*_{i,A}(dx, d\tilde{x}):= \gamma^*_i(dx |\tilde{x}) r_A(\tilde{x}) d \lambda^*(d \tilde{x}),
\]
and set $\mu_{i, A}$ to be the first marginal ($x$-marginal) of $\gamma^*_{i,A}$. Then, $\{\lambda^*_A, \mu_{i,A}\}$ is optimal for \eqref{def: stratified system of barycenter problem}. Therefore, for the uniqueness of the optimal partition, it suffices to show that such $\{r_A\}$ is a unique collection for $\lambda^*$-almost every $x$.

For the binary setting $(K=2)$, note that $\{r_1, r_2, r_{1,2}\}$ should satisfy
\[
    r_1 + r_2 + r_{1,2} = \max\{v_1, v_2\}, \quad r_1 + r_{1,2}=v_1, \quad  r_2 + r_{1,2} = v_2.
\]
Rearranging them appropriately, one can see that
\[
    r_{1,2} = v_1 + v_2 -\max\{v_1, v_2\}, \quad r_1 = \max\{v_1, v_2\} - v_2, \quad r_2 = \max\{v_1, v_2\} - v_1,
\]
which shows that $\{r_1, r_2, r_{1,2}\}$ is determined uniquely. This completes the proof.
\end{proof}
\end{lemma}

\begin{remark}
It should be emphasized that even if $\nu^*_i$'s are unique for the multiclass case, the above argument is not applicable for \eqref{def: stratified system of barycenter problem} with $K \geq 3$ due to the fact that $\{r_A\}$ is not uniquely determined for $K \geq 3$. However, it does not exclude the uniqueness of the optimal partition for the multiclass case. We leave it as an open problem.
\end{remark}

Recall \eqref{eq:mot_decomposed_dual} and \eqref{def: dual of partial transport} with a cost function $c_{1,2}$ defined in \eqref{eq: c_1,2}. The following lemma shows that they are equivalent.

\begin{lemma}\label{lemma: dual partial tranport = dual of adv}
For each $\alpha \in \mathbb{R}$,
\begin{align}\label{eq: equivalence POT and dual of adv}
    \eqref{def: dual of partial transport} =   - \inf_{\substack{ g_1 \oplus g_2 \leq c_{1,2} + \alpha,   \\ 0 \leq g_1, g_2 \leq \alpha \vee 0}} \left\{ \alpha - \left(\int g_1 d\mu_1 + \int g_2 d\mu_2 \right) \right\}. 
\end{align}
In particular, for \eqref{def: dual of partial transport}, it is sufficient to consider $(-\alpha) \wedge 0 \leq (\psi, \phi) \leq 0$

\begin{proof}
For simplicity, choose $\alpha \geq 0$. The right-hand side of \eqref{eq: equivalence POT and dual of adv} turns out to be equal to
\[
    \sup_{\substack{ g_1 \oplus g_2 \leq c_{1,2} + \alpha,   \\  g_1, g_2 \leq \alpha}} \left\{ \int (g_1 - \alpha) d\mu_1 + \int (g_2 - \alpha) d\mu_2 \right\}.
\]
Note that the condition $g_1, g_2 \leq \alpha$ comes from the constraint of \eqref{eq:mot_decomposed_dual}: taking $A =\{i \}$, it follows that
\[
    g_i(x) \leq c_{i}(x) + \alpha = \min_{x'} c(x,x')+ \alpha = 0
\]
since $c$ vanishes on the diagonal. Letting $\psi := g_1 - \alpha$ and $\phi:=g_2 - \alpha$, the above becomes
\[
    \sup_{\substack{ \psi \oplus \phi \leq c_{1,2} - \alpha,   \\ \psi, \phi \leq 0}} \left\{ \int \psi d\mu_1 + \int \phi d\mu_2 \right\}.
\]
Since it is sufficient to restrict $g_i \geq 0$, the conclusion follows.
\end{proof}
\end{lemma}

\begin{theorem}\label{thm: uniquness of potential}
Consider the binary setting. Assume that $c$ satisfies \Cref{assumption: strict convexity of cost function}, and $\mu_1$ and $\mu_2$ are absolutely continuous measures with connected supports. Then, the following holds.
\begin{enumerate}
    \item[(i)] If $\mu_{1,1}=\mu_{2,2}=0$ (degenerate case), optimal potential for \eqref{eq:mot_decomposed_dual} is unique up to additive constants, i.e., if $g$ and $g'$ are optimal, there is a constant $a \in \mathbb{R}$ such that
\[
    g_1' = g_1 + a, \quad g_2'=g_2 - a.
\]
    \item[(ii)] If either $\mu_{1,1}$ or $\mu_{2,2}$ does not vanish (non-degenerate case), then optimal $g_1$ from $\mu_1$ to $\mu_2$ is unique.
\end{enumerate}

\begin{proof}
For the binary setting, there is a unique optimal partition for \eqref{def: stratified system of barycenter problem} by \Cref{lem: unique partition}. Let $\{\mu_{i,A}\}$ be the corresponding unique partition. Since it satisfies the marginal constraints,
\[
    \sum_{A \in S_\mathcal{Y}(i)} \mu_{i, A}= \mu_i,
\]
each of $\mu_{i, A}$'s is also absolutely continuous. We claim that there is also a unique solution for \eqref{eq:multimarginal_decomposed}. First, trivially,
\[
    \pi^*_1= \mu_{1,1}, \quad \pi^*_2 = \mu_{2,2},
\]
which are uniquely determined up to sets of measure zero with respect to $\mu_1$ and $\mu_2$, respectively. Let $\pi^*_{1,2}$ be an optimal coupling between $\mu_{1,\{1,2\}}$ and $\mu_{2, \{1,2\}}$, which is well-defined since $| \mu_{1,\{1,2\}}| = | \mu_{2, \{1,2\}} | = |\lambda^*_{1,2}|$; see the proof of \Cref{lem: unique partition}. Recall the transport problem associated to $\pi^*_{1,2}$:
\begin{equation}\label{eq: pi_{1,2}}
    \int (1 + c_{1,2}(x_1, x_2)) d\pi_{1,2}(x_1, x_2) \text{ s.t. } \mathcal{P}_i\pi_{1,2} = \mu_{i, \{1,2\}},
\end{equation}
where
\[
    c_{1,2}(x_1, x_2) := \inf_{x'} c(x_1, x') + c(x_2, x').
\]
By \Cref{lem: min of c_1 + c_2}, $c_{1,2}$ also satisfies \Cref{assumption: strict convexity of cost function}. Then, \cite[Theorem 2.4]{CLT_general_transport2024} implies $\pi^*_{1,2}$ is unique. In particular, due to the equivalence of \eqref{eq:multimarginal_decomposed}, \eqref{def: partial transport} and \eqref{def: partial transport_equiv} with $\alpha=1$, $\pi^*_{1,2}$ is a unique solution for \eqref{def: partial transport_equiv} with $m^*:=m(1)=|\pi^*_{1,2}|$.

Let
\[
    \overline{\mu}_1 := \mu_1 + \mu_{2,2}, \quad \overline{\mu}_2:= \mu_2 +\mu_{1,1},
\]
and consider the standard transport problem:
\begin{equation}\label{eq: auxilary OT}
    \inf_{\pi \in \Pi(\overline{\mu}_1, \overline{\mu}_2)} \int c_{1,2}(x_1, x_2)d \pi.
\end{equation}
Note that 
\begin{align*}
    |\overline{\mu}_1| = |\mu_{1,1}| + |\mu_{1, \{1,2\}}|+ |\mu_{2,2}| = |\mu_{1,1}| + |\mu_{2, \{1,2\}}|+ |\mu_{2,2}| = |\mu_{1,1}| + |\mu_{2}| = |\overline{\mu}_2|,
\end{align*}
which guarantees the existence of solutions for \eqref{eq: auxilary OT}. Following the standard optimal transport argument, the dual of \eqref{eq: auxilary OT} is given as
\begin{equation}\label{eq: dual of auxilary OT}
    \sup_{\psi + \phi \leq c_{1,2}} \int \psi d\overline{\mu}_1 + \int \phi d\overline{\mu}_2.
\end{equation}

Given $\pi \in \Pi(\overline{\mu}_1, \overline{\mu}_2)$, the disintegration formula reads to
\[
    \pi(dx_1, dx_2) = \pi(dx_2|x_1) \overline{\mu}_1(dx_1).
\]
Define 
\[
    \pi'(dx_1, dx_2):= \pi(dx_2|x_1) \mu_1(dx_1).
\]
Let $\mu'_1$ and $\mu'_2$ be its first and second marginals. Clearly, $\mu'_1 = \mu_1$ and $\mu'_2 \leq \overline{\mu}_2$. In particular, considering $\mu'_2 \wedge \mu_2$, it follows that
\begin{align*}
    \int d (\mu'_2 \wedge \mu'_2) &= \int d (\mu'_2 \wedge (\mu_2 +\mu_{1,1})) - \int d (\mu'_2 \wedge \mu_{1,1} )\\
    &\geq \int d \mu'_2 - \int d \mu_{1,1}\\
    &= |\pi'| - |\mu_{1,1}|\\
    &= |\mu_1| - |\mu_{1,1}| = m^*.
\end{align*}
Now, define
\[
    \tilde{\pi}(dx_1, dx_2):= \pi(dx_1|x_2) (\mu'_2 \wedge \mu_2)(dx_2).
\]
Then, $\tilde{\pi}$ has the total mass $m^*$, hence is feasible for \eqref{def: partial transport_equiv} with $m=m^*$. Furthermore, it is straightforward that $\tilde{\pi} \leq \pi$.

Now, considering \eqref{def: partial transport_equiv} with $m=m^*$, since $|\tilde{\pi}|=m^*$ it holds that
\[
    \int c_{1,2} d\tilde{\pi} \geq \mathcal{U}(m^*).
\]
On the other hand, since $\tilde{\pi} \leq \pi$, 
\[
    \int c_{1,2} d\pi \geq \int c_{1,2} d\tilde{\pi}.
\]
Since it holds for arbitrary $\pi \in \Pi(\overline{\mu}_1, \overline{\mu}_2)$, it leads to
\[
    \inf_{\pi \in \Pi(\overline{\mu}_1, \overline{\mu}_2)} \int c_{1,2} d\pi \geq \mathcal{U}(m^*).
\]
Let $\overline{\pi} := \pi^*_{1,2} +(\mathrm{Id}, \mathrm{Id})_\#(\mu_{1,1} + \mu_{2,2})$. It is not hard to see that
\begin{equation}\label{eq: auxilary OT = partial OT equiv}
    \int c_{1,2} d\overline{\pi} = \mathcal{U}(m^*),
\end{equation}
which shows that $\overline{\pi}$ is a solution for \eqref{eq: auxilary OT}. Furthermore, since $c_{1,2}$ satisfies \Cref{assumption: strict convexity of cost function} by \Cref{lem: min of c_1 + c_2}, \cite[Theorem 2.4]{CLT_general_transport2024} implies that $\overline{\pi}$ is the unique solution for \eqref{eq: auxilary OT}.

Since $\overline{\mu}_1$ is absolutely continuous, there is a unique optimal transport map $T^*(x) = x - \nabla c_{1,2}^* (\nabla \psi (x))$ from $\overline{\mu}_1$ to $\overline{\mu}_2$ which generates a unique solution $\overline{\pi}$ where $c_{1,2}^*$ is a convex conjugate of $c_{1,2}$, and $\psi$ is an optimal potential for \eqref{eq: dual of auxilary OT}. Since the supports of $\mu_1$ is connected and $T^*$ is unique, by \Cref{lem: uniqueness of potenial}, if $\psi_1$ and $\psi_1'$ are both optimal for \eqref{eq: dual of auxilary OT}, there is $a \in \mathbb{R}$ such that $\psi_1' = \psi_1 + a$ on the support of $\mu_1$.

Let $(\tilde{\psi}, \tilde{\phi})$ be a feasible pair for \eqref{def: dual of partial transport}. %Set $\tilde{\psi}:=-1$ on $\mathcal{X} \setminus \spt(\mu_1)$ and $\tilde{\phi}:=-1$ on $\mathcal{X} \setminus \spt(\mu_2)$. This extension still satisfies the constraint of \eqref{def: dual of partial transport} with preserving the optimality. Therefore, from now on this extension of $(\tilde{\psi}, \tilde{\phi})$ is concerned.
Define
\[
    \psi:= \tilde{\psi} + \frac{1}{2}, \quad 
    \phi:= \tilde{\phi} + \frac{1}{2}.
\]
It is direct from the constraint of \eqref{def: dual of partial transport} that for any $(x_1, x_2)$,
\[
    \psi(x_1)+ \phi(x_2) \leq \tilde{\psi}(x_1) + \tilde{\phi}(x_2) + 1 \leq c_{1,2}(x_1, x_2),
\]
which implies that $(\psi, \phi)$ is feasible for \eqref{eq: dual of auxilary OT}. If $(\tilde{\psi}, \tilde{\phi})$ is optimal, then
\begin{align*}
    \int \psi d\overline{\mu}_1 + \int \phi d\overline{\mu}_2 
    &= \int \tilde{\psi} d\overline{\mu}_1 + \int \tilde{\phi} d\overline{\mu}_2 + \frac{1}{2}\left(|\mu_1| + |\mu_2| + |\mu_{2,2}| + |\mu_{1,1}| \right)\\ 
    &= \int \tilde{\psi} d\mu_1 + \int \tilde{\phi} d\mu_2 + \frac{1}{2}\left(|\mu_1| + |\mu_2| - |\mu_{2,2}| -|\mu_{1,1}| \right)\\
    &= \mathcal{V}(1) + m^* \\
    &= \mathcal{U}(m(1))\\
    &= \int c_{1,2} d\overline{\pi}
\end{align*}
where the last and the second last equalities follow from \eqref{eq: auxilary OT = partial OT equiv} and \eqref{eq: equivalence of partial transport}, respectively. It turns out that $(\psi, \phi)$ is optimal for \eqref{eq: dual of auxilary OT}. Since $\varphi$ is unique up to additive constants on the support of $\mu_1$ by the previous argument, one can deduce that $\tilde{\psi}$ is also unique up to additive constants. %If $\mu_{1,1}=\mu_{2,2}=0$ (degenerate case), then the problem reduces to the standard optimal transport, and optimal potentials are unique up to additive constants as usual. 

Now, let us pay attention to the case that either $\mu_{1,1}$ or $\mu_{2,2}$ does not vanish. Returning to \eqref{eq:mot_decomposed_dual}, \Cref{lemma: dual partial tranport = dual of adv} shows that $g_1:= \tilde\psi + 1$ and $g_2:= \tilde\phi + 1$ is optimal for \eqref{eq:mot_decomposed_dual}. Recall $\{\pi^*_{1}, \pi^*_2, \pi^*_{1,2} \}$ is a unique optimal solution for \eqref{eq:multimarginal_decomposed}. Then, for any optimal $(g_1, g_2)$, it should satisfy
\[
    g_i(x) = 1 \text{ on $\spt(\pi^*_i)$ for $i=1,2$}, \quad g_1(x_1) + g_2(x_2) = 1 + c_{1,2}(x_1, x_2) \text{ on $\spt(\pi^*_{1,2})$}.
\]
If $g_1, g_1'$ are both optimal, then $\nabla g_1 = \nabla g_1'$ on the support of $\mu_1=\pi^*_1 + (\mathrm{P}_1)_\#\pi^*_{1,2}$. Since $g_1=g_1'=1$ on the support of $\pi^*_1=\mu_{1,1}$, which does not vanish, and $\spt(\mu_1)$ is connected, it follows that $g_1=g_1'$ on $\spt(\mu_1)$. The same argument implies $g_2=g_2'$. This completes the proof.
\end{proof}
\end{theorem}

Under the uniqueness (up to additive constants) of the optimal potential, as $\mu^n \to \mu$, a sequence of optimal potentials corresponding to $\mu^n$ has an appropriate convergent subsequence whose limit is the (up to additive constants) unique optimal potential.

\begin{theorem}\label{thm: pointwise limit of g}
Consider the binary setting. Assume that $c$ satisfies \Cref{assumption: strict convexity of cost function}, $\mu^n \to \mu$ weakly, and $\mu_1$ and $\mu_2$ are absolutely continuous and have connected supports. Let $0 \leq g^n, g \leq 1$ be optimal potentials for \eqref{eq:mot_decomposed_dual} with input $\mu^n$ and $\mu$, respectively. Then
\begin{enumerate}
    \item If $\mu_{1,1}=\mu_{2,2}=0$ (degenerate case), there exists a sequence $\{(a_n,a
    _n) \in \mathbb{R}^2\}$ such that $(g^n_1-a_n, g^2 -a_n')$ converges to $(g_1,g_2)$ pointwise.
    \item If either $\mu_{1,1}$ or $\mu_{2,2}$ does not vanish (non-degenerate case), then $g^n$ converges to $g$ pointwise.
\end{enumerate}

\begin{proof}
For (1), this is the case of the standard optimal transport: see \cite[Theorem 3.4]{CLT_general_transport2024}.

For (2), \Cref{prop: stability of optimal potentials} implies that $\{(g^n_1, g^n_2)\}$ has a convergent subsequence of which limit is $(g'_1, g'_2)$ which is optimal for \eqref{eq:mot_decomposed_dual} with input $\mu$. Since $(g_1, g_2)$ is a unique optimal potential for \eqref{eq:mot_decomposed_dual} with input $\mu$ by \Cref{thm: uniquness of potential}, $(g'_1, g'_2)=(g_1, g_2)$, which implies that the conclusion.
\end{proof}
\end{theorem}

A next lemma shows that $c_A$ satisfies \Cref{assumption: strict convexity of cost function} provided that $c$ does.

\begin{lemma}\label{lem: min of c_1 + c_2}
Assume that $c$ satisfies \Cref{assumption: strict convexity of cost function}. Consider  $c_{1,2}(x_1, x_2)$ defined in \eqref{eq: c_1,2}. Then, it satisfies \Cref{assumption: strict convexity of cost function}. Furthermore, 
\[
    c_{1,2}(x_1, x_2) = 2 h \left( \frac{x_1 - x_2}{2} \right).
\]

\begin{proof}
Define
\[
    H(x,y):=\min_{z}\{h(x-z)+h(y-z)\}.
\]
By the coerciveness and strict convexity of $c$, there is a unique minimizer. Fix $x,y$, and set
\[
    a:=\frac{x-y}{2}, \quad b:=\frac{x+y}{2}.
\]
For any $z$, write $v:=b-z$. Then
\[
    x-z=a+v,
    \quad
    y-z=-a+v, 
\]
which implies
\[
    h(y-z)=h(-a+v)=h(a-v).
\]
Therefore
\[
    h(x-z)+h(y-z)
    =
    h(a+v)+h(a-v).
\]
By convexity of $h$,
\[
    \frac12 h(a+v)+\frac12 h(a-v)
    \geq
    h\left(\frac{(a+v)+(a-v)}{2}\right)
    =
    h(a).
\]
Hence
\[
    h(a+v)+h(a-v)\geq 2h(a).
\]
Equality holds when $v=0$, i.e. when
\[
    z=b=\frac{x+y}{2}.
\]
Thus
\[
    H(x,y)=2h(a)
    =
    2h\left(\frac{x-y}{2}\right).
\]
This completes the proof.
\end{proof}
\end{lemma}

The next lemma is one of the key ingredients to derive the central limit theorem of optimal transport with general cost $c$ satisfying \Cref{assumption: strict convexity of cost function} used in \cite{CLT_general_transport2024}. The idea of the proof is Poincaré’s inequality (\cite[Theorem 3.2]{acosta2004optimal}) in a connected domain combined with the local Lipschitzness of $c$-concave functions (\cite[Theorem 3.3]{gangbo1996geometry}).

\begin{lemma}\cite[Corollary 2.7]{CLT_general_transport2024}\label{lem: uniqueness of potenial}
Assume that $c(x,x')$ satisfies \Cref{assumption: strict convexity of cost function}, $\mu$ is absolutely continuous and is supported on an open connected set. If $\psi_1,\psi_2$ are optimal transport potentials from $\mu$ to $\nu$ for the cost $c$, then there exists a constant $a \in \mathbb{R}$ such that $\psi_2(x)= \psi_1(x) + a$ for every $x \in \Omega$.    
\end{lemma}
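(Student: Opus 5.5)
The plan follows the classical route of \cite{CLT_general_transport2024}: a first-order identification of the gradients of any two optimal potentials, followed by a Poincar\'e-type integration on the connected domain. Let $\Omega$ be the open connected set carrying $\mu$ and let $\psi_1,\psi_2$ be optimal potentials.

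\emph{Step 1 (regularity and reduction).} First I would replace each $\psi_k$ by its $c$-concave envelope, i.e.\ pass to $(\psi_k^{c})^{\overline c}$ or the appropriate double transform. This does not decrease the dual value and only alters $\psi_k$ off the $\mu$-relevant set. By \cite[Theorem 3.3]{gangbo1996geometry}, under \Cref{assumption: strict convexity of cost function} a $c$-concave function is locally Lipschitz on the interior of the region where it is finite. Rademacher's theorem then gives differentiability Lebesgue-a.e.\ on $\Omega$. Because $\mu$ is absolutely continuous, this also holds $\mu$-a.e.

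\emph{Step 2 (gradients agree).} Let $\pi$ be the unique optimal plan, which is induced by a map $T$ (\cite[Theorem 2.4]{CLT_general_transport2024}). Complementary slackness for both potential pairs gives $\psi_k(x)+\psi_k^c(T(x))=c(x,T(x))$ for $\mu$-a.e.\ $x$. Since $x\mapsto c(x,T(x_0))-\psi_k(x)$ is minimized at $x_0$, at every point of differentiability we get $\nabla\psi_k(x_0)=\nabla h(x_0-T(x_0))$.

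The right-hand side does not depend on $k$. Hence $\nabla\psi_1=\nabla\psi_2$ a.e.\ on $\Omega$, provided $\mu$ has positive density a.e.\ on $\Omega$. I would take this as part of "supported on $\Omega$": the support is the closure of $\{\text{density}>0\}$, and for the statement as written we interpret $\Omega$ as the set where the density is positive.

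\emph{Step 3 (integrate).} Set $u=\psi_2-\psi_1$. It is locally Lipschitz on $\Omega$ with $\nabla u=0$ a.e. On each ball $B\subset\Omega$, $u$ is Lipschitz with zero gradient, so by the fundamental theorem along segments (or the Poincar\'e inequality \cite[Theorem 3.2]{acosta2004optimal}) $u$ is constant on $B$. Hence $u$ is locally constant. Since $\Omega$ is connected, $u\equiv a$ on $\Omega$.

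\emph{Main obstacle.} I expect Step 1 to be the delicate point: ensuring the potentials are finite and locally Lipschitz on all of $\Omega$, rather than only on a smaller set. It is also where one must check that the $c$-concave modification agrees with the original potential on $\Omega$, not just $\mu$-a.e. I would handle this with the cone condition (A2), following \cite{gangbo1996geometry}: it gives local boundedness of $c$-concave functions that are finite on a set of positive measure. Then I would use continuity to upgrade an a.e.\ equality to equality everywhere on $\Omega$.
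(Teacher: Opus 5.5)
Your three steps follow the idea the paper cites for this lemma: local Lipschitz regularity of $c$-concave functions, identification of gradients through the unique map, then Poincar\'e and connectedness. However, Step 2 has a genuine gap.

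\textbf{The gap in Step 2.} At a differentiability point $x_0$, minimality of $x\mapsto h(x-T(x_0))-\psi_k(x)$ only gives $\nabla\psi_k(x_0)\in\partial h(x_0-T(x_0))$. Writing $\nabla h(x_0-T(x_0))$ presupposes that $h$ is differentiable, and (A1)--(A3) give only strict convexity. Equivalently, uniqueness of $T=\mathrm{Id}-\nabla h^*\circ\nabla\psi_k$ yields only $\nabla h^*(\nabla\psi_1)=\nabla h^*(\nabla\psi_2)$. But $\nabla h^*$ is injective precisely when $h$ is differentiable.

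\textbf{This cannot be patched.} The statement itself fails without differentiability. Take $h(z)=|z|+|z|^2$, which is strictly convex, symmetric and superlinear. It is radial and increasing in $|z|$, so (A2) holds with cones pointing toward the origin. Let $\nu=\mu$ be uniform on an open ball, and set $\psi^c(y):=\inf_x\{h(x-y)-\psi(x)\}$.
\begin{itemize}
\item Every $1$-Lipschitz $\psi$ satisfies $\psi^c=-\psi$ and $(\psi^c)^c=\psi$.
\item For such $\psi$, $\int\psi\,d\mu+\int\psi^c\,d\nu=0=C(\mu,\nu)$, so $\psi$ is optimal.
\item Hence $\psi_1\equiv 0$ and $\psi_2(x)=x_1$ are both $c$-concave optimal potentials, and they do not differ by a constant.
\end{itemize}
Here $T=\mathrm{Id}$ and $\partial h(0)=\overline{B}_1(0)$, which is exactly the multivaluedness Step 2 ignores. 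You must add the hypothesis that $h$ is differentiable (hence $C^1$). This holds for $c_{\varepsilon,p}$ with $p>1$ and is inherited by $c_{1,2}(x_1,x_2)=2h((x_1-x_2)/2)$.

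\textbf{Secondary points.} Once differentiability is assumed, Steps 1 and 3 are fine. The restriction to the positivity set of the density in Step 2 can also be dropped; that set need not even be open.
\begin{itemize}
\item Since $\psi_k$ and $\psi_k^c$ are upper semicontinuous and $\psi_k(x)+\psi_k^c(y)\le c(x,y)$ everywhere, the equality set is closed. It therefore contains $\spt\pi$.
\item The $c$-superdifferential is locally bounded on $\Omega$. This follows from local boundedness of $\psi_k$ near $x$, convexity of $h$ on the segment from $0$ to $x-y$, and superlinearity. Consequently every $x\in\Omega\subseteq\spt\mu$ has some $y$ with $(x,y)\in\spt\pi$.
\item Hence $\nabla\psi_1(x)=\nabla h(x-y)=\nabla\psi_2(x)$ at every common differentiability point of $\Omega$, i.e.\ Lebesgue-a.e.\ on $\Omega$.
\end{itemize}

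Finally, the conclusion ``for every $x\in\Omega$'' can only hold for $c$-concave representatives, which are continuous on $\Omega$. Arbitrary optimal potentials can be altered on $\mu$-null sets. So the statement should be read with $\psi_1,\psi_2$ $c$-concave, rather than trying to transfer the identity by continuity back to possibly discontinuous originals.
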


This subsection is concluded after presenting useful facts regarding convolution.

\begin{lemma}\label{lem:connectedsupport}
For any finite positive measure $\mu$,  $\mathrm{int}(\spt(\mu*\chi))$ has negligible boundary. Assume further that $\spt(\chi)$ is connected. Then, if $\spt(\mu)$ is open connected, $\spt(\mu*\chi)$ also open connected.

\begin{proof}
The first claim is from \cite[Lemma C.16]{Goldfeld_Kato_Rioux_Sadhu2024}. Regarding the second claim, observe that
\[
    \spt(\mu*\chi) = \overline{\spt(\mu) + \spt(\chi)}
\]
where $A+B:=\{ a+b : a \in A, b \in B\}$ and the overline denotes the closure. 
\end{proof}
\end{lemma}

\subsection{Convergence of saddle points}

In this subsection, we consider the convergence of robust classifiers and adversarial attacks.

The stability of optimal dual potentials is proved in \Cref{prop: stability of optimal potentials}. Since $f=g^{\overline{c}}$ recovers the optimal robust classifier by \Cref{thm:learner_part}, it is sufficient to focus on the convergence of optimal adversarial attacks.

Define the set of optimal adversarial attacks for $\mu$ as
\[
    \mathcal{S}(\mu):= \left\{ \nu \in \mathcal{P}(\mathcal{X} \times \mathcal{Y}) : \mathscr{R}(\mu) = \mathscr{R}(\nu;\mu) \right\}.
\]
Note that for any Borel probability measure $\mu$, $\mathcal{S}(\mu)$ is not empty \cite[Proposition 7]{jakwang_MOT}.

\begin{lemma}\label{lem: adversary's lemma}
\quad
\begin{enumerate}
    \item[(i)] $(\mu,\nu) \mapsto \mathscr{R}(\nu; \mu)$ is upper semicontinuous. %; $\nu \mapsto \mathscr{R}(\nu; \mu)$ is concave.
    \item[(ii)] Let $\mu^n \to \mu$ weakly as $n \to \infty$. Then, any sequence $\{ \nu^n \in \mathcal{S}(\mu^n)\}_{n}$ is tight.
    %\item[(iii)] $\mu \mapsto \mathscr{R}(\mu)$ is upper semicontinuous.
\end{enumerate}
\end{lemma}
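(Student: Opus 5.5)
For (i), the plan is to use the adversary-side representation
\[
    \mathscr{R}(\nu;\mu) = \inf_{f\in\mathcal{F}} \Bigl\{ R(f,\nu) \Bigr\} - C(\mu,\nu),
\]
where the learner's infimum involves only $\nu$. The first term $\nu\mapsto \inf_f R(f,\nu) = 1-\sup_{f}\sum_i \nu_i(f_i)$ should be upper semicontinuous. One route is to write $\sup_f \sum_i\nu_i(f_i) = \sup_{\lambda\geq \nu_i}|\lambda|$-type quantities, i.e.\ $|\bigvee_i \nu_i|$, via the generalized barycenter description \eqref{def: generalized barycenter problem}. The other is to restrict the supremum to continuous $f$ with values in $\Delta_{\mathcal{Y}}$, which is allowed by Lusin/density. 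In the second route $\sup_f\sum_i\nu_i(f_i)$ becomes a supremum of weakly continuous functionals, hence lower semicontinuous in $\nu$, so its negative is upper semicontinuous. For the second term, $(\mu,\nu)\mapsto C(\mu,\nu)=\sum_i C(\mu_i,\nu_i)$ is jointly lower semicontinuous for weak convergence. This follows from lower semicontinuity of $c$ (\Cref{assumption: cost function}), tightness of couplings with weakly convergent marginals, and lower semicontinuity of $\pi\mapsto\int c\,d\pi$ for $c\geq0$ l.s.c. Since $-C$ is then upper semicontinuous, and the sum of two upper semicontinuous maps is upper semicontinuous (both terms are bounded above, so no $\infty-\infty$ issue arises), (i) follows. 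One small point needs care: the class masses $|\nu_i|$ must match $|\mu_i|$ for $C$ to be finite. Weak convergence of $\mu^n$ and $\nu^n$ on $\mathcal{X}\times\mathcal{Y}$ preserves the masses of each class component, and when the masses mismatch in the limit $C=\infty$, so the value is $-\infty$ and the inequality is trivial.

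For (ii), the plan is first to bound the transport cost of optimal attacks uniformly. Comparing with the trivial attack $\nu=\mu^n$, optimality gives $\mathscr{R}(\nu^n;\mu^n)\geq \mathscr{R}(\mu^n;\mu^n)\geq 0$. Since $0\leq \inf_f R(f,\nu^n)\leq 1$, this yields $C(\mu^n,\nu^n)\leq 1$. Let $\pi_i^n$ be optimal couplings, so $\int c\,d\pi_i^n\leq 1$. Since $\{\mu^n\}$ is tight, for $\varepsilon>0$ pick a compact $K$ with $\mu_i^n(K^c)<\varepsilon$ for all $n$. By Markov's inequality, $\pi_i^n(\{c> M\})\leq 1/M$. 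Coercivity in \Cref{assumption: cost function} shows that $\{(x,x'): x\in K,\ c(x,x')\leq M\}$ is relatively compact; this is the step needing an argument by contradiction with sequences, and I expect it to be the main (though mild) obstacle. Projecting to the $x'$ coordinate then gives a compact $K'$ with $\nu_i^n(K'^c)\leq \varepsilon+1/M$, uniformly in $n$. Summing over the finitely many classes gives tightness of $\{\nu^n\}$.
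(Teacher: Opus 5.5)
Your proposal is correct. For (i) you use the same decomposition as the paper, $\mathscr{R}(\nu;\mu)=1-\sup_{f\in\mathcal{F}}\sum_i\nu_i(f_i)-C(\mu,\nu)$, together with joint lower semicontinuity of $C$ (the paper takes this from Villani's Lemma 4.3). The difference is that the paper asserts $\nu\mapsto\sum_i\int -f_i\,d\nu_i$ is weakly continuous for every $f\in\mathcal{F}$, which does not hold for merely Borel $f$. Your reduction to continuous $\Delta_{\mathcal{Y}}$-valued $f$ is what actually justifies the ``infimum of continuous functionals'' step. Concretely, apply Lusin, then compose a Tietze extension with the projection onto $\Delta_{\mathcal{Y}}$; this loses at most $\sum_i\nu_i(E)$ on the exceptional set $E$.

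For (ii) your route genuinely differs. The paper takes the sublevel set $\{c\le 1\}$ around a compact $A_\delta$ and argues by a local deviation that an optimal adversary never moves mass at pointwise cost above $1$. This gives $\nu_i^n(B_\delta^c)\le\mu_i^n(A_\delta^c)$ with no Markov loss. You instead extract the global budget $C(\mu^n,\nu^n)\le 1$ from $\mathscr{R}(\nu^n;\mu^n)=\mathscr{R}(\mu^n)\ge\mathscr{R}(\mu^n;\mu^n)\ge 0$, and pay a term $1/M$ via Markov.

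The paper's version buys a stronger structural fact: optimal couplings are concentrated on $\{c\le 1\}$. As written, though, its accounting of the adversary's gain, $\alpha(1-\tfrac{1}{K})$, is heuristic and ignores interactions between classes. Making it rigorous requires showing that relocating mass $\alpha$ of a single $\nu_i$ changes $|\bigvee_j\nu_j|$ by at most $\alpha$, while the saved transport cost is strictly larger than $\alpha$. Your argument needs none of this. It also applies verbatim to any family of attacks with uniformly bounded cost, such as near-optimal ones. The compactness step you flag as the main obstacle is immediate, since coercivity is stated exactly for sequences with bounded first coordinate and bounded cost.

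Two small slips. First, in your unused alternative route, $|\bigvee_i\nu_i|=\inf\{|\lambda|:\lambda\ge\nu_i\ \forall i\}$, an infimum rather than a supremum. Second, the mass-mismatch case of (i) is settled because mismatch in the limit forces mismatch along the tail of the sequence, so the values there are $-\infty$. It is not settled merely because the limit value is $-\infty$.
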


\begin{proof}
Recalling \eqref{def: distributional model}, we have 
\begin{align*}
    \mathscr{R}(\nu; \mu) &= \inf_{f \in \mathcal{F}} \mathscr{R}(f,\nu; \mu)= 1 +\inf_{f \in \mathcal{F}} \left\{ \sum_{i \in \mathcal{Y}} \int_\mathcal{X} -f_i(x)d\nu_i(x) \right\} - C(\mu, \nu).
\end{align*}

$(i)$ By \cite[Lemma 4.3]{Oldandnew}, $-C(\mu, \nu)$ is upper semicontinuous with respect to $\nu$ and $\mu$. Since $\nu \mapsto \sum_{i \in \mathcal{Y}} \int -f_i d\nu_i$ is a continuous linear functional for $\nu$, the infimum of continuous functionals is upper semicontinuous.

%Regarding concavity, by \cite[Theorem 4.8]{Oldandnew}, $C_\varepsilon$ is convex respect to $\nu$. It is straightforward that $\nu \mapsto \inf_{f \in \mathcal{F}}\sum_{i \in \mathcal{Y}} \int -f_i d\nu_i$ is concave since the infimum of linear functionals is concave.

$(ii)$ For each $n$, let $ \nu^n \in \mathcal{S}(\mu^n)$. Since $\{\mu^n\}_n$ is tight and each $\mu^n$ is a probability measure, for each $\delta > 0$, there is a compact subset $A_{\delta}$ such that for any $i \in \mathcal{Y}$, $\inf \mu_i^n(A_{\delta}) > 1 - \delta$. Define
\begin{align*}
    B_\delta&:= \text{closure of }\left\{x': c(x',x) \leq 1 \text{ for some $x \in A_\delta$} \right\}. 
\end{align*}
For any sequence $(x'_n)_n \subseteq B_\delta$, there is a corresponding sequence $(x_n)_n \subseteq A_\delta$ such that $c(x'_n, x_n) \leq 1$ (by using the axiom of choice). By \Cref{assumption: cost function}, $\{(x'_n, x_n)\}$ is relatively compact; hence, there is a convergent subsequence of $(x'_{n_k})_k$, which shows that $B_\delta$ is compact.

Now, it remains to show that for any $\delta \in (0,1)$ and any $i \in \mathcal{Y}$,
\begin{equation}\label{eq: tightness}
    \inf_{n} \nu_i^n (B_{\delta}) > 1- \delta
\end{equation}
Suppose that it is false. Then, for some $i \in \mathcal{Y}$ some mass of $A_{\delta}$, say $\alpha > 0$, is transported to $B_{\delta}^c$ for infinitely many $n$. By the construction of $B_\delta$, the transport cost of such mass that the adversary should pay is at least $\alpha$, hence the maximum risk that the adversary obtains is $\alpha(1-\frac{1}{K}) - \alpha = -\frac{1}{K} \alpha < 0$ where $\frac{1}{K}$ is obtained by the uniform classification rule, which is the worst for the learner, on the set of the associated transported mass. However, if the adversary does not transport this mass, the minimum risk associated with such mass is $0$, obtained by perfect classification, and there is no transport cost. The risk for the adversary in this case is $0$, which contradicts the optimality of $\nu$. Hence, any positive mass of $A_\delta$ should not be transported to $B_{\delta}^c$. Therefore, \eqref{eq: tightness} holds.
%$(iii)$ Assume that $\mu^n \to \mu$ weakly. For each $n$, let $\nu^n \in \mathcal{S}(\mu^n)$. By $(ii)$, for any sequence $\{ \nu^n \in \mathcal{S}(\mu^n) \}$, there is a convergent subsequence of it. Let $\nu$ be such a subsequent limit. By the upper semicontinuity of $(\mu,\nu) \mapsto \inf_{f \in \mathcal{F}}\mathscr{R}(f, \nu; \mu)$, it follows that
%\[
%    \limsup_{n \to \infty} \mathscr{R}(\mu^n) = \limsup_{n \to \infty} \inf_{f \in \mathcal{F}}\mathscr{R}(f, \nu^n; \mu^n) \leq \inf_{f \in \mathcal{F}}\mathscr{R}(f, \nu; \mu) \leq \mathscr{R}(\mu),
%\]
%which verifies $\mu \mapsto \mathscr{R}(\mu)$ is upper semicontinuous.
\end{proof}

The following shows the convergence of optimal adversarial attacks.

\begin{corollary}\label{cor: continuity of attack}
Assume that $\mu^n \to \mu$ weakly. Let $\nu^n \in \mathcal{S}(\mu^n)$ for each $n$. If 
\begin{equation}\label{eq: continuity of adversarial risk}
    \limsup_{n \to\infty} \mathscr{R}(\mu^n) = \mathscr{R}(\mu),
\end{equation}
then any subsequent limit of $(\nu^n)$ is optimal for $\mu$.

In particular, if $c$ satisfies \Cref{assumption: strict convexity of cost function}, then the conclusion holds.

\begin{remark}\label{rmk: upper semicontinuity}
It is not hard to see that $\mu \mapsto \mathscr{R}(\mu)$ is upper semicontinuous under \Cref{assumption: cost function}. If \eqref{eq: continuity of adversarial risk} holds further, then $\mathscr{R}(\cdot)$ becomes continuous under the weak topology.
\end{remark}

\begin{proof}
Fix $\nu^n \in \mathcal{S}(\mu^n)$ for each $n$. Let $\nu$ be the limit of its subsequence, which is guaranteed to exist by \Cref{lem: adversary's lemma} $(ii)$. Relabeling accordingly, we write $\nu^n \to \nu$ weakly. By \Cref{lem: adversary's lemma} $(i)$, the upper semicontinuity of $(\mu,\nu) \mapsto \mathscr{R}(\nu;\mu)$ reads to
\begin{align*}
    \mathscr{R}(\nu;\mu)= \mathscr{R}(\lim_{n \to \infty} \nu^n; \lim_{n \to \infty} \mu^n) &\geq \limsup_{n \to \infty} \mathscr{R}(\nu^n ;  \mu^n)= \limsup_{n \to \infty} \mathscr{R}(\mu^n) = \mathscr{R}(\mu),
\end{align*}
which implies $\nu \in \mathcal{S}(\mu)$.

In particular, if $c$ satisfies \Cref{assumption: strict convexity of cost function}, \Cref{prop: stability of optimal potentials} implies that \eqref{eq: continuity of adversarial risk} holds. 
\end{proof}    
\end{corollary}

\subsection{Generalization error}
In this subsection, the generalization error of \eqref{def: distributional model} is studied. We show that the generalization error is bounded by the $W_1$ distance between $\mu$ and $\mu^n$.

Recall the dual form of the $1$-Wasserstein distance:
\begin{align*}
    W_1(\mu,\nu) =\sup \left\{ \int \varphi d(\mu -\nu) :  \| \varphi \|_{Lip} \leq 1 \right\}.
\end{align*}

\begin{proposition}\label{prop: generalization error for Lischitz cost}
Assume $c$ satisfies \Cref{assumption: strict convexity of cost function}. Let $f^n$ and $f^*$ be robust classifiers for \eqref{def: distributional model} with inputs $\mu^n$ and $\mu$, respectively. Let
\[
    \Delta(f^n;\mu):=\mathscr{R}(f^n; \mu) - \mathscr{R}(f^*; \mu).
\]
Then, 
\[
   0 \leq \Delta(f^n;\mu) \leq 2 (2 \vee L_c) W_1(\mu, \mu^n)
\]
where $L_c > 0$ is a constant depending on $c$.

\begin{proof}
Recalling \Cref{thm:learner_part}, given $f^n$ and $f^*$, their $c$-transforms are given as $g^n := (f^n)^c$ and $g^*:=(f^*)^c$, respectively. Using the notation \eqref{eq: extension of g_i}, let
\[
    \bm{g}^n(x,y) := \sum_{i \in \mathcal{Y}}g^n_i(x) a_i(x,y), \quad \bm{g}^*(x,y) := \sum_{i \in \mathcal{Y}}g_i(x) a_i(x,y).
\]
\Cref{thm:learner_part} implies that
\begin{align*}
    \Delta(f^n;\mu)&= \sum_{i \in \mathcal{Y}} \int (f^*_i)^c d\mu_i - \sum_{i \in \mathcal{Y}} \int (f^n_i)^c d\mu_i\\
    &= \sum_{i \in \mathcal{Y}} \int g^*_i d\mu_i - \sum_{i \in \mathcal{Y}} \int g^n_i d\mu_i\\
    &= \mu(\bm{g}^*) - \mu(\bm{g}^n).
\end{align*}
Observe that
\[
    \mu(\bm{g}^*) - \mu(\bm{g}^n) =  \mu(\bm{g}^*) - \mu^n(\bm{g}^*) + \mu^n(\bm{g}^*) - \mu^n(\bm{g}^n) + \mu^n(\bm{g}^n) - \mu( \bm{g}^n)
\]    
and $\mu^n(\bm{g}^*) - \mu^n(\bm{g}^n) \leq 0$. These yield that
\begin{align*}
    0 \leq \Delta(f^n;\mu) \leq 2 \sup_{\bm{g}} \left| \mu(\bm{g}) - \mu^n(\bm{g}) \right| = 2 \sup_{\bm{g}} \left| \int \bm{g}(x,y)d ( \mu^n - \mu) \right|.
\end{align*}
By \Cref{lem: uniform regularity of potentials}, $g_i$'s are $L_c$-Lipschitz. Since $\bm{g}$ is also $(2 \vee L_c)$-Lipschitz by \Cref{lem: extension of complete metric space}, it turns out that
\[
    0 \leq \Delta(f^n;\mu) \leq 2 (2 \vee L_c) W_1(\mu, \mu^n).
\]
The conclusion follows.
\end{proof}
\end{proposition}

The following lemma shows that $\bm{g}$ is Lipschitz on $\mathcal{X} \times \mathcal{Y}$ equipped with an appropriate metric naturally arising from a metric on $\mathcal{X}$ if $g_i$'s are Lipschitz on $\mathcal{X}$.

\begin{lemma}\label{lem: extension of complete metric space}
Let $(\mathcal{X},d_\mathcal{X})$ be a complete metric space. On $\mathcal{X} \times \{1, \dots, K\}$, define a metric
\[
    d( (x,y), (x',y'):= d_\mathcal{X}(x,x') + \mathds{1}_{y \neq y'}.
\]
Then, $(\mathcal{X} \times \{1, \dots, K\}, d)$ is a complete metric space.

Furthermore, if $g_i : \mathcal{X} \to \mathbb{R}$ are Lipschitz, then
\[
    \bm{g}(x,y) := \sum_{i=1}^K g_i(x) \mathds{1}_{y=i}
\]
is also Lipschitz over $(\mathcal{X} \times \{1, \dots, K\}, d)$.

\begin{proof}
The first claim is obvious. Regarding the second claim, let $L_i$ be the Lipschitz constant of $g_i$. Picking $(x,y), (x',y')$, it is direct that if $y=y'$, then 
\begin{align*}
    |\bm{g}(x,y) - \bm{g}(x',y')| = |g_y(x) - g_{y}(x')| 
        \leq L_y d_\mathcal{X}(x,x').
\end{align*}
If $y\neq y'$, then 
\begin{align*}
    |\bm{g}(x,y) - \bm{g}(x',y')| 
    %&= g_y(x) - g_{y'}(x)  + g_{y'}(x) - g_{y'}(x')\\
    &\leq \left|\sum_{i=1}^K \left(g_i(x) \mathds{1}_{y=i} - g_i(x) \mathds{1}_{y'=i} \right)\right| + L_{y'} d_\mathcal{X}(x,x')  \\
    %&= |\sum_{i=1}^K g_i(x) (\mathds{1}_{y=i} - \mathds{1}_{y'=i})| + L_{y'} d_\mathcal{X}(x,x')\\
    &\leq 2+ L_{y'} d_\mathcal{X}(x,x')\\
    &\leq (2 \vee \max\{L_i\}) d((x,y), (x',y')),
\end{align*}
which shows the Lipschitz continuity of $\bm{g}$.  %with the Lipschitz constant $\max\{L_1, L_2, \cdots, L_K, K\}$.
\end{proof}
\end{lemma}

One can obtain an upper bound on the sample complexity of the generalization error by combining the recent development of statistical optimal transport (\Cref{thm : upper bound of expected distance} and \Cref{lem:concentration}). Since the proof is straightforward, we omit it.

\begin{corollary}\label{cor: sample complexity of generalization error}
Assume $c$ satisfies \Cref{assumption: strict convexity of cost function}. Let $L_c > 0$ be a constant depending on $c$. 
\begin{enumerate}
    \item[(i)] If $\mu$ has a finite $3$-rd moment, i.e. $M_3(\mu):=||X||_{L^3(\mu)} < \infty$, then
\begin{equation*}
    \begin{aligned}
    0 \leq \mathbb{E}\Delta(f^n;\mu) \leq 20 d (2 \vee L_c) M_3(\mu)  
    \left\{ 
    \begin{array}{ll}
    n^{-\frac{1}{2}} & \textrm{if $d=1$,}\\
    n^{-\frac{1}{2}}\sqrt{\log (1 + n)} & \textrm{if $d=2$,}\\
    n^{-\frac{1}{d}} & \textrm{if $d \geq 3$.}
    \end{array} 
    \right.
    \end{aligned}
\end{equation*}
Furthermore, for any $n \geq 1, t \in (0,\infty)$ and for each $r \in (0, 3)$
\begin{align*}
    &\mathbb{P}\left( \Delta(f^n;\mu) \geq t \right)\\
    &\leq C
    n (nt)^{-(3 -r)} + C \mathds{1}_{\{t\leq 1\}}
    \left\{\begin{array}{ll}
    \exp(-cnt^2) & \hbox{if $d=1$}, \\[+3pt]
    \exp(-cn(t/\log(2+1/t))^2) & \hbox{if $d=2$}, \\[+3pt]
    \exp(-cn t^{d}) & \hbox{if $d \geq 3$}.
\end{array}\right.
\end{align*}
Here, the positive constants $C$ and $c$ depend only on $d$, $M_3(\mu)$ and $r$.

    \item[(ii)] Assume that $\mu$ has a bounded support with diameter $D$, and $k > d^*_1(\mu) \vee 2$ where $d^*_p(\mu)$ is the upper $p$-Wasserstein dimension for $\mu$. Then there exist constants $C=C(k) > 0$ such that
\begin{align*}
    \mathbb{E} \Delta(f^n;\mu) \leq 2 (2 \vee L_c) D^2  \left(3^{\frac{3k}{k - 2} + 1}\left( \frac{1}{3^{\frac{k}{2} - 1} - 1} + 3 \right) n^{- \frac{1}{k}} + C^{\frac{k}{2}} n^{- \frac{1}{2}} \right). 
\end{align*}
Furthermore, for any $n \geq 1, t \in (0,\infty)$,
\begin{align*}
    &\mathbb{P} \left(  \Delta(f^n;\mu) \geq \mathbb{E}  \Delta(f^n;\mu) + t \right) \leq \exp\left(-2n\frac{t^2}{4D^2} \right).
\end{align*}
\end{enumerate}    
\end{corollary}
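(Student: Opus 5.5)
The corollary follows by composing \Cref{prop: generalization error for Lischitz cost} with quantitative bounds on $W_1(\mu,\mu^n)$ for the lifted metric $d((x,y),(x',y'))=d_\mathcal{X}(x,x')+\mathds{1}_{y\neq y'}$ on $\mathcal{X}\times\mathcal{Y}$. By that proposition, $0\le \Delta(f^n;\mu)\le 2(2\vee L_c)W_1(\mu,\mu^n)$ holds deterministically for every realization of the sample. Taking expectations, and tail probabilities with the threshold rescaled by the constant $2(2\vee L_c)$, reduces every claimed estimate to the corresponding statement for $W_1(\mu,\mu^n)$ on the complete metric space $(\mathcal{X}\times\mathcal{Y},d)$ of \Cref{lem: extension of complete metric space}.

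For (i), I would apply \Cref{thm : upper bound of expected distance}, the Fournier--Guillin-type moment bound, to the expected distance. It gives $\mathbb{E}W_1(\mu,\mu^n)\lesssim M_3(\mu)$ times the rates $n^{-1/2}$, $n^{-1/2}\sqrt{\log(1+n)}$ and $n^{-1/d}$. The concentration estimate \Cref{lem:concentration}, with $q=3$ and $r\in(0,3)$, gives the tail bound; substituting $t/(2(2\vee L_c))$ for $t$ only changes the constants $C$ and $c$.

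One point needs checking. These results are formulated for measures on a Euclidean space, whereas here $\mu$ lives on $\mathcal{X}\times\mathcal{Y}$. I would handle this by embedding $\mathcal{Y}$ isometrically, up to a constant factor, as $K$ points in an extra coordinate direction. Alternatively, I would bound $W_1$ on the product by $\sum_i$ of the $W_1$-type discrepancies of the components plus the total-variation error of the empirical label frequencies. The latter is of order $n^{-1/2}$, so it is absorbed into the rate, and the moment $M_3$ of the lifted measure is comparable to that of $\mu_X$. Pinning the dimension $d$ to the feature dimension after this embedding is the step I expect to need the most care. I would argue that the finitely many labels do not change the dimension-dependent rate.

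For (ii), bounded support with diameter $D$ allows me to use the upper Wasserstein dimension bound of Weed--Bach type. This gives $\mathbb{E}W_1\le \mathbb{E}W_2\cdot(\text{const})$, the displayed sum of the $n^{-1/k}$ term and the $n^{-1/2}$ term for $k>d_1^*(\mu)\vee 2$, with the $D^2$ normalization as recorded in \Cref{thm : upper bound of expected distance}. For the concentration, $\mu^n\mapsto W_1(\mu,\mu^n)$ changes by at most $D/n$ when one sample is replaced, so $\Delta$ is controlled through the bounded-difference constant of the upper bound. McDiarmid's inequality then gives $\exp(-2nt^2/(4D^2))$ after accounting for the factor in front, which yields the stated tail.
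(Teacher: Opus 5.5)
Your route is the one the paper intends. The paper omits the proof as a direct combination of \Cref{prop: generalization error for Lischitz cost}, i.e.\ $0\le\Delta(f^n;\mu)\le B\,W_1(\mu,\mu^n)$ with $B:=2(2\vee L_c)$, with \Cref{thm : upper bound of expected distance} and \Cref{lem:concentration}. However, your concentration step in (ii) does not work. McDiarmid's inequality concentrates a function that itself has bounded differences; a pointwise upper bound with bounded differences transfers only its upper tail. From $\Delta\le BW_1$, and the fact that $W_1(\mu,\mu^n)$ moves by at most $D/n$ when one sample is replaced, what you actually obtain is
\[
\mathbb{P}\bigl(\Delta(f^n;\mu)\ge B\,\mathbb{E}W_1(\mu,\mu^n)+t\bigr)\le\exp\Bigl(-\frac{2nt^2}{B^2D^2}\Bigr).
\]
This is centered at $B\,\mathbb{E}W_1(\mu,\mu^n)$, which may lie far above $\mathbb{E}\Delta$, and the exponent has $B^2=4(2\vee L_c)^2\ge 16$ rather than $4$. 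Centering at $\mathbb{E}\Delta$ would require the map from the sample to $\mathscr{R}(f^n;\mu)=1-\mu(\bm g^n)$ to have $O(1/n)$ differences, and nothing gives this. If $\bm g^n$ and $\tilde{\bm g}^n$ are the empirical optimizers before and after replacing one point, optimality yields only $0\le\mu^n(\bm g^n)-\mu^n(\tilde{\bm g}^n)\le 2/n$. That says nothing about $\mu(\bm g^n)-\mu(\tilde{\bm g}^n)$, and $f^n$ is moreover an arbitrary selection among possibly many empirical robust classifiers. So the displayed tail bound is not a consequence of these ingredients, and the direct combination the paper invokes cannot produce it either. A correct substitute, which even avoids the diameter, is available: the potentials take values in $[0,1]$, so $Z:=\sup_{g}\lvert(\mu^n-\mu)(\bm g)\rvert$ has differences at most $1/n$, and $\Delta\le 2Z$ gives $\mathbb{P}(\Delta\ge 2\,\mathbb{E}Z+t)\le\exp(-nt^2/2)$.

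In (i), your concern about $\mathcal{X}\times\mathcal{Y}$ is legitimate and the paper does not address it, but neither of your suggestions yields the display as written. The extra-coordinate embedding puts $\mu$ in $\mathbb{R}^{p+1}$, so \Cref{thm : upper bound of expected distance} used as a black box gives $n^{-1/(p+1)}$ instead of $n^{-1/p}$, where $p$ is the feature dimension (the $d$ of the statement). Your second route does give the right rate once made precise. Write $\mu_i=p_i\bar\mu_i$ and $\mu_i^n=\hat p_i\bar\mu_i^n$ with $\hat p_i=N_i/n$; then
\[
(\mu^n-\mu)(\bm g)=\sum_i\hat p_i(\bar\mu_i^n-\bar\mu_i)(g_i)+\sum_i(\hat p_i-p_i)\bar\mu_i(g_i).
\]
The first sum is at most $\sum_i\hat p_iW_1(\bar\mu_i^n,\bar\mu_i)$. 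Conditionally on $N_i$, its expectation is $\lesssim M_3(\bar\mu_i)N_i^{1-1/p}/n$ for $p\ge 3$, hence $\lesssim n^{-1/p}$ by Jensen; the cases $p\le 2$ are similar. For the second sum, note that $\sum_i(\hat p_i-p_i)=0$ and that $\lvert g_i(x)-g_j(x)\rvert\le 1$ for $\bm g$ $1$-Lipschitz in the lifted metric. Subtracting a common constant from all $g_i$ then shows it is $O(n^{-1/2})$.

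The resulting constants, however, depend on $K$. Moreover, even the naive composition gives $2(2\vee L_c)\cdot 20\,d\,M_3(\mu)$, twice the displayed constant. Your $\lesssim$ hides both points. The tail bound in (i) is fine up to constants, which now also depend on $L_c$: since $0\le\Delta\le 1$, the range $t>1$ is trivial, and $t/B\le 1$ whenever $t\le 1$. For the expectation in (ii) no embedding is needed, because the Weed--Bach bound holds on any metric space, with $D$ the diameter of $\spt\mu$ for the lifted metric. Your detour through $W_2$ is also unnecessary.
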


\section{Conclusion and future work}
In this work, we establish central limit theorems (CLTs) for adversarial training. We prove two forms of CLT: (i) one for the empirical adversarial risk centered by its expectation, and (ii) another for the smoothed adversarial risk centered by its population counterpart. As by-products, we also establish the stability of robust classifiers and optimal adversarial attacks and sample-complexity bounds for the generalization error. These results are of independent interest to the machine-learning community.

Our analysis draws on several strands of work developed in the machine-learning and optimal-transport communities. The first is the recently established connection between optimal transport and adversarial training. This connection yields dual formulations of the adversarial learning problem that play a central role in our convergence and uniqueness analyses.

The second one is the lifting of the collection $(g_i)_{i \in \mathcal{Y}}$ to the vector-valued function $\bm{g}$, which enables us to use the Efron-Stein argument. Thanks to the natural boundedness, one can derive the first CLT under the hypothesis of a unique optimal potential. It is crucial that the uniqueness up to additive constants of the optimal potential is not sufficient for the CLT due to the existence of non-vanishing covariance terms, which arise from the dependence structure
induced by the mixture structure of $\mu$. To the best of the authors' knowledge, these are not shown in the previous statistical optimal transport literature.

The third ingredient is empirical process theory. Using empirical-process techniques, we establish the tightness of the Gaussian process indexed by the function class induced by the smoothed adversarial learning problem. Using the boundedness of the potential, we can show the Donsker property of the lifted function space, which will be of interest to the learning community. Combining this result with the stability of the optimal potential achieved by the new regularity result, we prove the second CLT for the smoothed adversarial training risk.

The last one is regarding the optimal partial transport formulation, which is crucial to achieve the genuine uniqueness of the optimal potential. In the binary setting, the adversarial training problem can be formulated exactly as an optimal partial transport problem. This reformulation allows us to connect optimal partial transport to standard optimal transport and thereby establish uniqueness of the optimal potential for the non-degenerate case.

We conclude by outlining two directions for future research.

The first direction is to extend our results to neural-network settings. Our analysis is conducted within the agnostic learning framework, in which classifiers may be arbitrary measurable functions. Although this assumption is theoretically convenient, it does not reflect the structural constraints imposed by modern machine-learning architectures. Given the central role of adversarial examples in deep learning, understanding the statistical properties of adversarial training over neural-network classes is important from both theoretical and practical perspectives.

The second direction is to study more general loss functions. Although the $0$--$1$ loss is valuable for theoretical analysis, it is rarely used in practice. Practitioners instead typically employ convex surrogate losses because they are more amenable to optimization. Adversarial learning models with nonlinear loss functions have recently been studied in \cite{trillos2025lowerboundsadversarialrobustness}. It would be worthwhile to investigate the statistical properties of such models, including their generalization behavior and asymptotic distributions.

\bibliographystyle{plainnat}
\bibliography{reference.bib}

\begin{appendices}
\section{Rate of the Wasserstein distance of the empirical measures}
The rate of the expectation of the Wasserstein distance of the empirical measures and its concentration inequalities are developed in \cite[Theorems 1]{NF_AG_rate_Wasserstein} (with finite higher moments) and \cite[Proposition 5]{JW_FB_sample_rates} (with bounded support).

\begin{lemma}[Upper bound of the expected empirical Wasserstein distance]\cite[Theorems 1]{NF_AG_rate_Wasserstein}\cite[Proposition 5]{JW_FB_sample_rates} \label{thm : upper bound of expected distance}
\begin{enumerate}
    \item[(i)] Let $\mu$ be a probability measure on $\mathbb{R}^{d}$ with a finite $3$-th moment, and let $\mu^{n}$ be an empirical measure for $\mu$. Then, for all $n \geq 1$,
\begin{equation}\label{eq: convergence rate of FG15}
    \begin{aligned}
    \mathbb{E}[\mathcal{W}_1(\mu^n,\mu)] \leq 20 d M_3(\mu) 
    \left\{ 
    \begin{array}{ll}
    n^{-\frac{1}{2}} & \textrm{if $d=1$,}\\
    n^{-\frac{1}{2}}\sqrt{\log (1 + n)} & \textrm{if $d=2$,}\\
    n^{-\frac{1}{d}} & \textrm{if $d \geq 3$.}
    \end{array} 
    \right.
    \end{aligned}
\end{equation}
    \item[(ii)] Let $\mu$ be a probability measure on $\mathbb{R}^{d}$ with a bounded support with diameter $D$. If $k > d^*_1(\mu) \vee 2$, then there exists constant $C=C(k) > 0$ such that
\begin{equation}\label{eq: convergence rate of WB19}
    \mathbb{E}[ \mathcal{W}_1(\mu^n,\mu)] \leq D^2  3^{\frac{3k}{k - 2} + 1}\left( \frac{1}{3^{\frac{k}{2} - 1} - 1} + 3 \right) n^{- \frac{1}{k}} + D^2 C^{\frac{k}{2}} n^{- \frac{1}{2}}  
\end{equation}
where $d^*_p(\mu)$ is the upper p-Wasserstein dimension.
\end{enumerate}
\end{lemma}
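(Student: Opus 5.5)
This lemma collects two known results, so the plan is to reduce each part to its source rather than reprove anything. Part (i) comes from \cite[Theorem 1]{NF_AG_rate_Wasserstein} and part (ii) from \cite[Proposition 5]{JW_FB_sample_rates}. The work lies in checking that our hypotheses and normalizations match theirs and in tracking the constants.

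\textbf{Part (i).} Fournier--Guillin bound $\mathbb{E}\,\mathcal W_p^p(\mu^n,\mu)$ under a finite $q$-th moment with $q>p$. We apply this with $p=1$ and $q=3$, so $q>2p$ and the term $n^{-(q-p)/q}$ is dominated by the main rate in every dimension.

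The underlying argument runs as follows. Use the dyadic partition bound
\[
\mathcal W_1(\mu,\nu)\lesssim \sum_{k\ge 0}2^{k}\sum_{\ell\ge0}2^{-\ell}\sum_{F\in\mathcal P_\ell}\bigl|\mu(2^kF\cap B_k)-\nu(2^kF\cap B_k)\bigr|,
\]
where $B_k$ are dyadic annuli and $\mathcal P_\ell$ is the level-$\ell$ dyadic partition of $(-1,1]^d$. Then bound each cell difference by $\min\{2\mu(\cdot),\sqrt{\mu(\cdot)/n}\}$ and apply Cauchy--Schwarz over the $2^{d\ell}$ cells. The moment condition enters through $\mu(B_k)\le M_3^3 2^{-3(k-1)}$.

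Summing over $\ell$ gives the three regimes: a geometric series for $d=1$, $\log n$ many comparable terms for $d=2$, and truncation at $2^{\ell}\approx n^{1/d}$ for $d\ge3$. By scaling we may take $M_3(\mu)=1$ and then multiply back by $M_3$, which produces the prefactor $20\,d\,M_3(\mu)$. The only real issue is whether the constant $20d$ is exactly theirs. I would quote it directly rather than rederive it, so that step is bookkeeping.

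\textbf{Part (ii).} Weed--Bach give, for $k>d_p^*(\mu)$ (here $p=1$) and a bounded support of diameter $D$, a bound of the form
\[
\mathbb E\,\mathcal W_p^p\le C_1 D^p n^{-p/k}+C_2D^pn^{-1/2},
\]
with $C_1$ explicit. This comes from a multiscale covering with $3^{-j}$-scales together with the covering-number definition of the upper Wasserstein dimension.

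For $p=1$, the explicit constant $3^{\frac{3k}{k-2}+1}\bigl(\frac{1}{3^{k/2-1}-1}+3\bigr)$ arises when one uses the assumption $k>2$ (part of the hypothesis $k>d_1^*\vee2$) to sum the geometric series $\sum_j 3^{-j(k/2-1)}$. The prefactor $D^2$ rather than $D$ also needs care. Since $W_1\le D$, one may dominate $D$ by $D^2$ when $D\ge1$, or more likely $D^2$ simply reflects the form in which the cited statement is written. I expect this reconciliation of normalizations, that is, which power of $D$ and which scale base, to be the main obstacle. I would resolve it by restating Proposition 5 of \cite{JW_FB_sample_rates} verbatim for $p=1$ and matching terms.
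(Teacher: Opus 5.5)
Your plan is to reduce (i) to \cite[Theorem 1]{NF_AG_rate_Wasserstein} with $p=1$, $q=3$, and (ii) to \cite[Proposition 5]{JW_FB_sample_rates} with $p=1$. That is exactly what the paper does; it offers nothing beyond the two citations. The step you defer as bookkeeping is where the argument fails, because the cited results do not say what the lemma says, so quoting them verbatim cannot close the proof.

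In (i), your own count for $d=2$ gives about $\log n$ dyadic levels, each contributing $2^{-\ell}\sqrt{2^{2\ell}/n}=n^{-1/2}$ after Cauchy--Schwarz. That yields $n^{-1/2}\log(1+n)$, not $n^{-1/2}\sqrt{\log(1+n)}$. As far as I recall, $n^{-1/2}\log(1+n)$ is also the rate Fournier--Guillin state in the critical case $p=d/2$. The square-root rate is of Ajtai--Koml\'os--Tusn\'ady type and requires exploiting cancellations across scales (e.g.\ by chaining). Neither your sketch nor the cited theorem supplies this for a general $\mu$ with a finite third moment. Similarly, their constant is an unspecified $C(p,d,q)$. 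Scaling to $M_3(\mu)=1$ explains the factor $M_3(\mu)$, but the value $20d$ cannot be quoted; you would have to derive it by tracking constants yourself.

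In (ii), your own display already carries the prefactor $D^p=D$. That is what rescaling to unit diameter gives, since $\mathcal W_1$ scales linearly under dilations. So the $D^2$ is not a convention you can reconcile by restating the source. With $C$ depending only on $k$, as the lemma asserts, the bound is false for small $D$:
\begin{itemize}
\item Take $\mu=\frac12(\delta_0+\delta_{De_1})$. Its support is finite, so every $k>2$ is admissible.
\item For $n=1$, one has $\mathbb E\,\mathcal W_1(\mu^n,\mu)=D/2$.
\item The right-hand side is $D^2\bigl(C_1(k)+C^{k/2}\bigr)$, where $C_1(k)$ is the explicit coefficient of $n^{-1/k}$. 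Once $D$ is small, $D/2$ exceeds it.
\end{itemize}
Moreover, in Weed--Bach the coefficient of $n^{-1/2}$ is governed by the scale below which the covering-number condition in the definition of $d_1^*$ holds. It therefore depends on $\mu$, not only on $k$.

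What the citations actually support is the following:
\begin{itemize}
\item (i) with a non-explicit $C(d)$, and with $\log(1+n)$ in place of $\sqrt{\log(1+n)}$ when $d=2$;
\item (ii) with prefactor $D$ and $C=C(k,\mu)$.
\end{itemize}
The same corrections then propagate to \Cref{cor: sample complexity of generalization error}.
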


\begin{remark}
$d^*_p(\mu)$, the upper $p$-Wasserstein dimension is introduced by \citet{JW_FB_sample_rates}. This is a fractal dimension like Hausdorff and Minkowski ones.

In \eqref{eq: convergence rate of WB19} the constant of $n^{-\frac{1}{k}}$ decreases as $k$ increases as long as $k > 2$ while the constant $C$ of the second term depends on $k$ exponentially. If $d^*_1(\mu) \leq 2$, then one can make a choice of $k > 2$ freely to minimize the RHS of \eqref{eq: convergence rate of WB19}.

If $\mu$ is supported on a regular set of Hausdorff dimension $d$ and $\mu \ll \mathcal{H}^d$, the d-dimensional Hausdorff measure, then $d^*_p(\mu) = d$ for any $p \in [1, \infty)$. For example, if $\mu$ is absolutely continuous with respect to Lebesgue measure, then $d^*_p(\mu) = d$: see \cite[Propositions 8 and 9]{JW_FB_sample_rates} for more details.    
\end{remark}

\begin{lemma}[Concentration inequalities]\cite[Theorems 2]{NF_AG_rate_Wasserstein}\cite[Propositions 20]{JW_FB_sample_rates}\label{lem:concentration}
\begin{enumerate}
    \item[(i)] Let $\mu \in \mathcal{P}_3(\mathbb{R}^d)$. Then for all $n \geq 1$, all $t \in (0,\infty)$,
\[
    \mathbb{P}(W_1(\mu^n,\mu) \geq t) \leq a(n,t)\mathds{1}_{\{t\leq 1\}}+b(n,t, r),
\]
where
\[
    a(n,t)=C 
    \left\{\begin{array}{ll}
    \exp(-cnt^2) & \hbox{if $d=1$}, \\[+3pt]
    \exp(-cn(t/\log(2+1/t))^2) & \hbox{if $d=2$}, \\[+3pt]
    \exp(-cn t^{d}) & \hbox{if $d \geq 3$}
\end{array}\right.
\]
and for each $r \in (0, 3)$,
\[
    b(n,t, r)= C
    n (nt)^{-(3 -r)}.
\]
Here, the positive constants $C$ and $c$ depend only on $d$, $M_3(\mu)$ and $r$.
    \item[(ii)] Let $\mu$ be a probability measure on $\mathbb{R}^{d}$ with a bounded support with diameter $D$. For any $1 \leq p < \infty$,
\begin{equation*}
    \mathbb{P}\left( W_p^p(\mu^n, \mu) \geq \mathbb{E} W_p^p(\mu^n, \mu) + t \right) \leq \exp\left(-2n\frac{t^2}{D^2}\right).
\end{equation*}
\end{enumerate}
\end{lemma}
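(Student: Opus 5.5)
The final statement is \Cref{lem:concentration}, the concentration inequalities for $W_1(\mu^n,\mu)$ (and for $W_p^p$ on bounded supports). Since it is quoted from \cite{NF_AG_rate_Wasserstein} and \cite{JW_FB_sample_rates}, the natural plan is to recall how those results are obtained rather than build a new argument. The two parts are handled by different mechanisms: a dyadic-partition estimate for moment-bounded measures in part (i), and bounded differences (McDiarmid) for part (ii).

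\textbf{Part (i).} I would follow Fournier--Guillin.

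\begin{enumerate}
\item Bound $W_1$ by a multiscale quantity. Write $\mathbb R^d=B_0\cup\bigcup_{k\ge1}(B_k\setminus B_{k-1})$ with $B_k=(-2^k,2^k]^d$. Partition each shell dyadically into cubes $\mathcal P_\ell$ of side $2^{k-\ell}$. A standard coupling argument then gives
\[
W_1(\mu^n,\mu)\lesssim \sum_k 2^k\sum_{\ell\ge0}2^{-\ell}\sum_{F\in\mathcal P_\ell}\bigl|\mu^n(2^kF\cap B_k')-\mu(2^kF\cap B_k')\bigr| .
\]
\item Far region. The shells with large $k$ carry mass at most $M_3(\mu)^3 2^{-3k}$. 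Controlling their contribution with Markov/Fuk--Nagaev-type inequalities for sums of i.i.d.\ bounded variables produces the polynomial term $b(n,t,r)=Cn(nt)^{-(3-r)}$. The loss $r$ comes from summing the shells with weight $2^{k}$ against the $2^{-3k}$ mass decay.
\item Bulk region. Each centered count $n(\mu^n-\mu)(F)$ is a binomial fluctuation. Bernstein's inequality, together with a union bound over the $2^{d\ell}$ cubes up to a truncation level $\ell^*(t)$, gives the Gaussian/subgaussian term $a(n,t)$. The three dimension regimes come from comparing $\sum_\ell 2^{-\ell}\sqrt{2^{d\ell}/n}$ across $\ell$: the sum is dominated by small $\ell$ for $d=1$, is logarithmic for $d=2$, and requires cutting at $2^{-\ell}\approx t$ for $d\ge3$, which yields $\exp(-cnt^d)$. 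The indicator $\mathds 1_{\{t\le1\}}$ appears because for $t>1$ the bulk term is absorbed into $b$.
\end{enumerate}

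\textbf{Part (ii).} This follows from McDiarmid's inequality. View $\Phi(X_1,\dots,X_n)=W_p^p(\mu^n,\mu)$ as a function of the samples. Replacing one sample $X_k$ by $X_k'$ changes the empirical measure by $\frac1n(\delta_{X_k'}-\delta_{X_k})$. Using the coupling that moves only the mass $1/n$ at $X_k$, together with the convexity of $W_p^p$ in its first argument, I expect
\[
|\Phi-\Phi'|\le \frac{D^p}{n}.
\]
McDiarmid then yields
\[
\mathbb P\bigl(\Phi-\mathbb E\Phi\ge t\bigr)\le\exp\!\left(-\frac{2t^2}{n(D^p/n)^2}\right)=\exp\!\left(-\frac{2nt^2}{D^{2p}}\right).
\]
This matches the stated form after normalizing the diameter, i.e.\ it gives the paper's $\exp(-2nt^2/D^2)$ when $D\le1$ or $p=1$ (up to the convention on $D$). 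I would double-check this scaling against \cite[Proposition 20]{JW_FB_sample_rates}.

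\textbf{Main obstacle.} The hardest step is the tail-shell bookkeeping in part (i): producing exactly $Cn(nt)^{-(3-r)}$ with only a third moment, and making the constants depend only on $d$, $M_3(\mu)$ and $r$. There I would cite \cite[Theorem 2]{NF_AG_rate_Wasserstein} directly rather than reprove it.
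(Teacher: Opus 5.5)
Your plan matches the paper, which gives no proof of its own: for (i) it quotes Fournier--Guillin's Theorem~2, taking their cost exponent $p=1$, their moment condition with $q=3>2p$, and their $\varepsilon\in(0,q)$ as $r$, so that their regimes $p>d/2$, $p=d/2$, $p<d/2$ become $d=1$, $d=2$, $d\ge 3$; for (ii) it quotes the McDiarmid bounded-differences argument behind Weed--Bach's Proposition~20. Your scaling concern is justified: the bounded-difference constant is $D^p/n$, so the correct bound is $\exp(-2nt^2/D^{2p})$, which coincides with the stated $\exp(-2nt^2/D^2)$ for $p=1$ (the only case used later) and implies it when $D\le 1$, but the stated form is false for $p>1$, $D>1$; for example, with $\mu=\frac12\delta_0+\frac12\delta_2$ on $\mathbb{R}$, $p=2$, $n=2$, $t=1$, one has $W_2^2(\mu^n,\mu)\in\{0,2\}$ with equal probability, so the left side equals $\frac12>e^{-1}$.
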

   
\end{appendices}

\end{document}